\documentclass{article}
\usepackage[utf8]{inputenc}

\usepackage{setspace}
\usepackage{amsmath}
\usepackage{amssymb}
\usepackage{fullpage}
\usepackage{graphicx}
\usepackage{imakeidx}
\usepackage{blindtext}
\usepackage{amsthm}
\usepackage{caption}
\usepackage{subcaption}
\usepackage{parskip}
\usepackage{float}
\usepackage{url}
\usepackage{geometry}
\usepackage{authblk}
\usepackage{xcolor}
\usepackage{tikz-cd}
\usepackage{enumitem}
\usepackage[round]{natbib}
\usepackage[colorlinks=true]{hyperref}

\definecolor{imperialBlue}{RGB}{0, 62, 116}
\definecolor{imperialBrick}{RGB}{165,25,0}
\definecolor{imperialProcess}{RGB}{0,133,202}
\definecolor{imperialGreen}{RGB}{2,137,59}
\definecolor{imperialRed}{RGB}{221,37,1}
\definecolor{imperialOrange}{RGB}{210,64,0}
\definecolor{imperialBlue2}{RGB}{0,110,175}
\definecolor{imperialTangerine}{RGB}{236,115,0}
\definecolor{imperialPurple}{RGB}{101,48,152}
\definecolor{imperialLime}{RGB}{196,214,0}
\definecolor{imperialKermit}{RGB}{102,164,10}
\hypersetup{
    colorlinks,
    citecolor=imperialGreen,
    filecolor=imperialBlue,
    linkcolor=imperialBlue,
    urlcolor=imperialProcess
}

\usepackage{booktabs}
\usepackage{multirow}

\newtheorem{theorem}{Theorem}[section]
\newtheorem{proposition}{Proposition}[section]

\newtheorem{remark}{Remark}[section]
\newtheorem{lemma}{Lemma}
\newtheorem{assumption}{Assumption}[section]

\newcommand{\R}{\mathbb{R}}

\newcommand{\E}{\mathbb{E}}
\newcommand{\Prob}{\mathbb{P}}
\newcommand{\QQ}{\mathbb{Q}}
\newcommand{\F}{\mathcal{F}}
\newcommand{\diag}{\operatorname{diag}}

\newcommand{\dd}{\,\mathrm{d}}
\newcommand{\dt}{\Delta t}
\newcommand{\dW}{\Delta W}
\DeclareMathOperator{\ESS}{ESS}
\DeclareMathOperator{\KL}{KL}

\title{Structured Neural SDEs for Functional Calibration}

\author[1,\dag]{Francesco Piatti}
\author[1]{Andrea Iannucci}
\author[1]{Thomas Cass}

\affil[1]{Department of Mathematics, Imperial College London}
\affil[$\dag$]{Corresponding author, email: francesco.piatti19@imperial.ac.uk}

\date{}

\begin{document}
\maketitle
\setstretch{1.25}

\begin{abstract}
Neural Stochastic Differential Equations (Neural SDEs) provide flexible
continuous-time generative models, but generic neural drift and diffusion
networks are costly to simulate on long horizons and can give unstable
gradients when the training signal is a path functional rather than a
pointwise observation. We introduce \emph{SLiSDE}, a family of Neural SDE
models built from structured linear stochastic layers. Parallel-in-time
simulation is obtained at the layer level, while expressivity is recovered
by \emph{gated in-flow} stacking: previous-layer paths modulate the next
layer's latent flow through learned gates. For
functional calibration tasks in which rare paths dominate the loss, we add
an optional Girsanov tilt
that acts as a learned importance
sampler with an exact likelihood-ratio correction. We prove
well-posedness, a discretisation error bound, validity of the change of
measure, and a universality result: the terminal laws of the gated stack
are dense in the space of square-integrable laws. Experiments on
functional calibration benchmarks show that the structured model
outperforms fully neural SDE baselines while retaining parallel-time
simulation and stable importance weights.
\end{abstract}

\section{Introduction}
\label{sec:intro}
Many quantities of practical interest are expectations of functionals of a stochastic path whose trajectories are never observed. In derivatives markets the available data are option quotes -- expectations of terminal payoffs -- and exceedance probabilities, and calibrating a model of the underlying dynamics to such quotes is the entry point of pricing, hedging and risk management \citep{Buehler2019DeepHedging}; the same situation, dynamics observed only through a finite family of statistics, arises in the physical sciences. We refer to this task as \emph{functional calibration}: fitting a generative model of continuous-time stochastic dynamics to prescribed expectations of path functionals. Training such a model requires simulating it afresh at every gradient step, so the computational cost of simulation is the binding constraint, and it is the constraint that shapes the model proposed here.

Neural differential equations provide a flexible framework for learning continuous-time dynamics by parameterizing their vector fields with neural networks. Neural ODEs \citep{Chen2018} model deterministic latent
evolution, neural CDEs \citep{Kidger2020NeuralCDE, slice2025} condition the dynamics on an observed driving path, and neural SDEs \citep{Tzen2019} provide the stochastic analogue, modeling distributions on continuous-time paths via stochastic differential equations, which may be interpreted in either the It\^o or Stratonovich sense.

In this work, we focus on the It\^o setting, where the latent state evolves as
\begin{equation}
    \dd Z_t = \mu_\theta(t,Z_t)\,\dd t
    + \sigma_\theta(t,Z_t)\,\dd W_t,
    \qquad Z_0\sim p_0,
    \label{eq:generic-nsde}
\end{equation}
where $W_t$ is a Brownian motion and $\mu_\theta$, $\sigma_\theta$ unconstrained neural
networks. This is a rich model class \citep{Tzen2019,Liu2019,Li2020,Kidger2021NSDEsAsInfiniteDimensional}, and in practice it is often used as a \emph{Markovian lift}: the latent state is lifted to a higher-dimensional space in which a non-Markovian observed process can be represented as a Markov diffusion. The viewpoint is powerful, but it makes simulation expensive — every time step requires fresh nonlinear evaluations of the lifted vector field, and the recurrence is inherently sequential unless additional structure is imposed.

The central design principle of this paper is to move the non-linear part
of the model from the non-linear drift and diffusion evaluated at every time step to non-linear coupling between structured linear SDE layers. After discretisation, each layer is an affine recurrence whose
composition is associative, so it can be evaluated by the parallel associative scan of Section~\ref{ssec:linear-sde-affine}, which cuts the sequential simulation cost over a horizon of $T$ time steps from $O(T)$ to $O(\log T)$.
To recover expressivity, we stack such layers through \emph{gated in-flow}
coupling: the previous-layer path modulates the next layer's transition
and offset through learned gates. The current layer remains
affine in its own state, so the scan structure is preserved, while the gates make the stack a genuinely nonlinear model, universal at the level of terminal laws (Theorem~\ref{thm:wasserstein-density-terminal-laws}).

For high-variance path-dependent objectives, we add an optional
last-layer-only Girsanov tilt. A learned adapted controller changes only
the Brownian increments driving the final layer and returns the
corresponding Radon--Nikodym log-weight in closed form.

\subsection*{Related work.}
Within the neural-SDE literature two training paradigms coexist. Generative training matches the law of the model to an empirical distribution of paths through a statistical divergence -- the KL divergence of variational inference \citep{Tzen2019,Li2020}, the Wasserstein-1 distance of the GAN formulation \citep{Kidger2021NSDEsAsInfiniteDimensional}, or signature-kernel scores \citep{Issa2023SignatureKernelScores} -- and requires sample paths. Calibration training, our setting, instead matches prescribed expectations of functionals, typically derivative prices, when no trajectory is observed \citep{Gierjatowicz2020RobustPricing,Cuchiero2020GANCalibration,Cohen2021ArbitrageFree}; before neural SDEs this task was handled by parametric families that hard-wire the dynamics, such as stochastic-volatility and local-volatility models in finance or mechanistic diffusion models in the physical sciences.

Structured state-space models have shown that carefully parameterised linear recurrences can achieve strong sequence-modelling performance while remaining hardware efficient \citep{Gu2022S4,Smith2023S5,Hasani2022LiquidS4,Gu2023Mamba}. Recent work also connects this line of models to controlled differential equations, viewing sequence transformations as dynamics driven by an input path \citep{muca2024theoretical}. SLiCE \citep{slice2025} develops this perspective for structured linear controlled differential equations, showing how scan-compatible linear dynamics can be embedded in a CDE-style framework.

The Girsanov component is connected to importance sampling for diffusion
processes and to the stochastic-control view of variance reduction
\citep{Hartmann2017,Zhang2022PathIntegralSampler,Hartmann2024NeuralSamplers}.
In those approaches, one learns or designs a drift change that makes rare
paths less rare and then corrects the resulting bias with a likelihood ratio.
Our tilt is narrower: it acts only on the Brownian motion of the last SLiSDE layer. This keeps the prefix computation unchanged,
and makes the proposal lightweight
enough to train jointly with the backbone.

\subsection*{Contributions.}
\begin{itemize}[leftmargin=1.2em,itemsep=0.2em,topsep=0.2em]
    \item \textbf{Structured linear neural SDE for functional calibration.} We introduce a structured linear Neural SDE backbone whose discretized layers are affine recurrences. This gives $O(\log T)$ parallel depth in sequence length by the parallel associative scan, compared with the $O(T)$ complexity of standard neural SDE solvers.
    \item \textbf{Gated in-flow stacking.} We propose a gated in-flow stacking rule in which previous-layer paths modulate the next layer's affine transition and offset. Nonlinearity enters between layers only, so each layer stays affine in its own state and scan-compatible.
    \item \textbf{Girsanov overlay.} We add an optional last-layer-only
    Girsanov tilt with a closed-form likelihood-ratio correction: a learned
    adapted controller acts as a self-normalised importance sampler that
    improves performance on rare-event functional calibration with little
    added compute.
    \item \textbf{Theory.} We prove that the terminal laws generated by the gated stack are dense in $\mathcal P_2(\R^p)$ for the $2$-Wasserstein distance (Theorem~\ref{thm:wasserstein-density-terminal-laws}), so the structural restrictions cost nothing at the level of terminal laws.

    \item \textbf{Experiments.} We evaluate the model on nonlinear path-functional calibration tasks and real option-surface benchmarks, comparing against neural SDE and structured sequence-model baselines.
    SLiSDE attains the lowest or a statistically tied calibration loss on every benchmark, and trains faster than SLiCE in every setting tested and faster than the Neural SDE in all but two of them; a controlled study shows the tilt as a learned importance sampler for rare-event functionals.
\end{itemize}

\section{Mathematical Background}
\label{sec:background}

Let $(\Omega,\F,\{\F_t\}_{t\in[0,T]},\Prob)$ be a filtered probability
space supporting an $m$-dimensional Brownian motion $W_t=(W_t^1,\dots,W_t^m)$ on $[0,T]$. We consider a time grid $0=t_0<t_1<\cdots<t_N=T$, with Brownian increments $\Delta W_k = W_{t_{k+1}}-W_{t_k}\sim \mathcal N(0,\Delta t_k I_m)$. Throughout, complexities are stated in terms of the horizon $T$, which for a fixed step size is proportional to the number of steps $N$. The Girsanov change of measure on which the tilt of Section~\ref{sec:girsanov} rests is recalled in Appendix~\ref{app:girsanov-background}.

\subsection{Linear SDEs and affine flows}
\label{ssec:linear-sde-affine}

A general affine linear SDE takes the form
\begin{equation}
    \dd Z_t
    =
    (A Z_t + b)\,\dd t
    +
    \sum_{j=1}^m C^j Z_t\,\dd W_t^j
    +
    D\,\dd W_t, \quad Z_0 = z_0
    \label{eq:linear-sde}
\end{equation}
with $z_0 \in L^2(\Omega,\mathcal F_0,\mathbb P;\mathbb R^d)$, $A,C^j\in\R^{d\times d}$, $b\in\R^d$, and
$D\in\R^{d\times m}$. On each grid interval $[t_k,t_{k+1}]$ its discretisation, by the matrix-exponential transition or by the Euler--Maruyama transition (Appendix~\ref{app:discretisation}; the latter is used throughout our experiments), is an affine recurrence
\begin{equation}
    Z_{k+1}=F_kZ_k+g_k,
    \label{eq:affine-recurrence}
\end{equation}
where the transition matrix $F_k\in\R^{d\times d}$ and the offset $g_k\in\R^d$ are built from the coefficients and the Brownian increment $\Delta W_k$.

\paragraph{Parallel associative scan.} The solution of the affine recurrence Eq.~\ref{eq:affine-recurrence} can be written $Z_k=\widehat F_kZ_0+\widehat g_k$ with $(\widehat F_k,\widehat g_k)=(F_{k-1},g_{k-1})\circ\cdots\circ(F_0,g_0)$, where
\begin{equation}
    (F',g')\circ(F,g):=(F'F,\;F'g+g')
    \label{eq:model-affine-combine}
\end{equation}
is composition of affine maps. This operation is associative, so all prefixes $(\widehat F_k,\widehat g_k)_{k\le N}$ can be computed with any bracketing, in particular along a balanced binary tree of depth $O(\log T)$, and the result equals the sequential recursion exactly (Proposition~\ref{prop:affine-scan-closure}). We call this evaluation the \emph{parallel associative scan} and say that a recurrence is \emph{scan-compatible} when its one-step maps are affine in the state; this is the only name used for the scan in the sequel.
\begin{remark}
    The scan does not reduce the total arithmetic work across all time steps, but it reduces the sequential dependence from \(O(T)\) recurrent updates to \(O(\log T)\) parallel depth, exposing the time dimension to parallel hardware.
\end{remark}

\subsection{Functional calibration objective}
\label{sec:objective}
Let $Y^\theta$ denote the output path of a model with parameters $\theta$. The data of a functional calibration problem are a finite family $\Phi$ of path functionals $\varphi$, each with a prescribed value $c_\varphi$ of its expectation; no trajectory is observed. Given $n$ independent model paths $Y^{\theta,1},\dots,Y^{\theta,n}$ and the Monte-Carlo average $\widehat\E_{\theta,n}[\varphi]=n^{-1}\sum_{i=1}^n\varphi(Y^{\theta,i})$, the objective is
\begin{equation}
    \mathcal L_n(\theta)
    =
    \sum_{\varphi\in\Phi} w_\varphi
    \Bigl(\widehat\E_{\theta,n}[\varphi]-c_\varphi\Bigr)^2,
    \qquad w_\varphi>0,
    \label{eq:calib-loss}
\end{equation}
minimised by stochastic gradient descent with a fresh sample of $n$ paths at every step. In this context no model path is paired with a data path: Eq.~\ref{eq:calib-loss} is a functional of the model law rather than a samplewise error. Since $\E[(\widehat\E_{\theta,n}[\varphi]-c_\varphi)^2]=(\E_\theta[\varphi]-c_\varphi)^2+n^{-1}\operatorname{Var}_\theta(\varphi)$, it converges almost surely to the population objective $\mathcal L_\infty(\theta)=\sum_\varphi w_\varphi(\E_\theta[\varphi]-c_\varphi)^2$, the Monte-Carlo variance entering at order $1/n$ only, so there is no incentive to collapse the model law onto a mean. Finitely many expectation constraints do not determine the path law, and $\mathcal L_\infty$ has no unique minimiser; this is a property of the task rather than of the objective, and the selected model is validated on functionals it was never fitted to. Where a reference dynamics is available, a relative-entropy penalty turns Eq.~\ref{eq:calib-loss} into a selection rule that drops into the training loop unchanged. Appendix~\ref{app:objective} develops these points and relates the objective to divergence-based generative training.

\section{The SLiSDE Model}
\label{sec:model}

A SLiSDE is built by composing structured linear stochastic layers through a stacking mechanism, followed by a learned linear readout. The optional Girsanov component is described separately in Section~\ref{sec:girsanov}.

\subsection{Structured linear base layer}
\label{ssec:base-layer}

For latent dimension $d$ and Brownian dimension $m$, each structured layer is an It\^o SDE of the form
\begin{equation}
    \dd Z_t
    =
    \big(A_t Z_t+b_t\big)\,\dd t
    +
    \sum_{j=1}^m
    \big(C_t^j Z_t+d_t^j\big)\,\dd W_t^j,\quad Z_0 = z_0,
    \label{eq:model-time-dependent-sde}
\end{equation}
where $A_t,C_t^j\in\R^{d\times d}$ and
$b_t,d_t^j\in\R^d$ and we assume that $z_0 \in L^2(\Omega,\mathcal F_0,\mathbb P;\mathbb R^d)$. Under coefficient bounds that hold by construction (Appendix~\ref{app:assumptions}), the layer has a unique strong solution with moment bounds independent of the time grid (Theorem~\ref{thm:wellposed}). We write
$(A_t,b_t,C_t^1,\ldots,C_t^m,
d_t^1,\ldots,d_t^m)$ for the time-indexed coefficient family induced by the model parameters.

These parameters contain static structured coefficients
$(A,b,C^1,\ldots,C^m,d^1,\ldots,d^m)\in\Theta$, where $\Theta$ denotes the set of trainable weights, together with optional
time-feature decoder weights. The matrices $A$ and $C^j$ are chosen from
a fixed structured family: \textbf{diagonal}, \textbf{block-diagonal}, or \textbf{dense}. The additive diffusion vectors $d^j$ may be (jointly) represented densely or through a
low-rank parameterisation.

\begin{remark}[Time-dependent coefficients]
The map $A\mapsto A_t$, and similarly for $b,C^j,d^j$, is optional and makes the coefficients time dependent through a small deterministic time-feature decoder (described in Appendix~\ref{app:time-dependent-coefficients}).
In both cases the layer remains linear in the current state $Z_t$.
\end{remark}
\begin{remark}[Noise type]
    The noise type may be controlled by switching off parts of the diffusion. If $d_t^j=0$ for all $j$, the layer has only multiplicative noise; if $C_t^j=0$ for all $j$, the layer has only additive noise; and if both terms are retained, the layer has general affine diffusion.
\end{remark}

After discretisation, the layer produces affine transition pairs $(F_k,g_k)$ of the form of Eq.~\ref{eq:affine-recurrence}, with the static coefficients of the transitions of Appendix~\ref{app:discretisation} replaced by their time-dependent values at $t_k$.

The choice of matrix structure is important computationally:
diagonal, block-diagonal, and dense matrices are closed under matrix
multiplication, so products of transition matrices remain in the same
family. This closure is what allows the affine maps to be composed by the parallel associative scan with  $O(\log T)$ parallel depth without leaving the chosen representation. By contrast, other
cheap matrix parameterization families such as fixed-rank low-rank or DPLR matrices do not satisfy this property, thus requiring sequential recursion and $O(T)$ complexity.

\subsection{Residual stacking (sequence-model baseline)}
\label{ssec:residual-stacking}
Sequence models such as Mamba~\citep{Gu2023Mamba} and the linear CDE SLiCE~\citep{slice2025} build depth by \emph{residual stacking}: a sequence block $\mathcal S^{(\ell)}$ is applied to a normalised hidden path and added back, $H^{(\ell)} = H^{(\ell-1)} + \mathcal S^{(\ell)}(\operatorname{Norm}(H^{(\ell-1)}))$, so a single hidden path is refined layer by layer. This is the natural depth mechanism to compare against. It is, however, ill-suited to our setting: each SLiSDE layer is itself a Brownian-driven SDE, so reusing the previous layer's already-Brownian path as the driver of the next would turn that layer into a \emph{deterministic} transform of an already-stochastic input, adding compositional depth but injecting no fresh randomness per layer. We therefore do not stack SLiSDE layers residually; instead we couple them through the flow itself, via gated in-flow stacking (Section~\ref{ssec:gated-stacking}).

\subsection{Gated in-flow stacking}
\label{ssec:gated-stacking}
Our main architectural contribution is \emph{gated in-flow stacking}: a selective, path-dependent modulation of the next layer's affine flow, in the spirit of selective state-space models but adapted to stochastic latent dynamics. Layer $\ell$ builds its affine transition pairs $(F_k^{(\ell)},g_k^{(\ell)})$, $k=0,\ldots,N-1$, from its own structured coefficients and Brownian increments. Before they are scanned, causal features $x_k^{(\ell-1)}$ of the previous layer's path (its normalised state and the time features) produce, through two gated linear units, a diagonal scale $\alpha_k^{(\ell)}\in(\mathbf 1-\varepsilon,\mathbf 1+\varepsilon)^d$ and an offset $o_k^{(\ell)}\in\R^d$; the gate architecture, the construction of the gate input and alternative parameterisations are given in Appendix~\ref{ssec:gated_in_flow_app}.

The affine pair is then modified by scaling only the diagonal entries of the transition matrix and by adding the offset correction,
\begin{equation}\label{eq:gated-pair}
        \bar F_k^{(\ell)} = \mathcal D(\alpha^{(\ell)}_k, F^{(\ell)}_k)
    \qquad\textrm{and }\qquad \bar g_k^{(\ell)}
    =
    g_k^{(\ell)}
    +
    o_k^{(\ell)},
\end{equation}
where \(\mathcal D\) rescales the diagonal of \(F^{(\ell)}_k\) entrywise by \(\alpha^{(\ell)}_k\) (Appendix~\ref{ssec:gated_in_flow_app}), and layer \(\ell\) is evaluated by scanning the modified affine recurrence
\begin{equation}\label{eq:gated-recurrence}
    Z_{k+1}^{(\ell)}
    =
    \bar F_k^{(\ell)} Z_k^{(\ell)}
    +
    \bar g_k^{(\ell)}.
\end{equation}
The key point is that gated in-flow introduces nonlinearity through the dependence of \((\bar F_k^{(\ell)},\bar g_k^{(\ell)})\) on the previous path \(Z^{(\ell-1)}\), while preserving affine dependence on the current state \(Z_k^{(\ell)}\); the recurrence is therefore evaluated exactly by the parallel associative scan (Proposition~\ref{prop:affine-scan-closure}(b), Appendix~\ref{app:proofs}).

\subsection{Linear decoder} After the final stacking layer, the model maps the last latent path to the observation space through a learned linear readout:
\begin{equation}
Y_t = \Pi_o Z_t^{(L)},
\qquad     \Pi_o\in\R^{p\times d},     \label{eq:output-readout}
\end{equation}
where \(Z^{(L)}\) denotes the path produced by the final layer and \(\Pi_o\in\Theta\).
When the initial observation is prescribed, we apply an optional shift so that the generated path satisfies the required value at \(t=0\).

\subsection{Discretisation and scan}
\label{ssec:discretisation-scan}
At each time step, every layer builds an affine pair $(F_k,g_k)$ by freezing the time-dependent coefficients of Eq.~\ref{eq:model-time-dependent-sde} at $t_k$, using either the matrix-exponential or the Euler--Maruyama transition of Appendix~\ref{app:discretisation}. The previous-layer path then modifies the pairs to \((\bar F_k^{(\ell)},\bar g_k^{(\ell)})\) as in Eq.~\ref{eq:gated-pair}. Once the previous layers are fixed, each layer is therefore an affine recurrence in its own state, and the modified recurrence is evaluated, layer after layer, by the parallel associative scan of Section~\ref{ssec:linear-sde-affine} (Proposition~\ref{prop:affine-scan-closure}); the strong error of the resulting layerwise Euler--Maruyama scheme is of order $1/2$ for every fixed depth (Theorem~\ref{thm:disc-error}, Appendix~\ref{app:proofs}).

\section{Last-Layer Girsanov Tilt}
\label{sec:girsanov}

Calibration losses on rare events receive little signal from plain Monte-Carlo paths under $\Prob$: an event of probability $p$ estimated from $n$ paths has relative standard error $\sqrt{(1-p)/(pn)}$, so more paths help only as $n^{-1/2}$. We therefore simulate under a tilted measure $\QQ$ that visits the rare region more often and reweight by $\dd\Prob/\dd\QQ$; for diffusion paths Girsanov's theorem (Appendix~\ref{app:girsanov-background}) gives the ratio in closed form for an adapted drift shift, and choosing the shift as a learned controller yields an importance sampler that acts on the constant $1/\sqrt p$ itself.

The shift is confined to the final layer, so that the prefix dynamics calibrated by the vanilla loss are not perturbed. The final layer is driven by a partially independent innovation $\epsilon$, and only the innovation is tilted: under $\QQ$ it is shifted by an adapted control $u_{\theta,k}\in\R^m$, a function of the innovation history and of stop-gradient features of the prefix latent path,
\begin{align}
    \dd W^{(L)}_k &= \rho\,\dd W^{\mathrm{prefix}}_k + \sqrt{1-\rho^2}\,\dd\epsilon_k, \qquad \rho\in[0,1), \label{eq:correlated-driver}\\
    \dd\epsilon^{\QQ}_k &= \dd\epsilon^{\Prob}_k - u_{\theta,k}\,\dd t . \label{eq:eps-tilt}
\end{align}
In a gated stack the random initial latent $z_0\sim\mathcal N(0,I_d)$ carries a large share of the terminal variance, so the tilt also shifts the initial state, $z_0=\mu_\theta+\xi$ with $\xi\sim\mathcal N(0,I_d)$ and a learned $\mu_\theta\in\R^d$. The discrete log-likelihood ratio is then
\begin{equation}\label{eq:rn-discrete}
    \log \frac{\dd\QQ}{\dd\Prob}
    =
    \mu_\theta^{\top}z_0-\tfrac12\|\mu_\theta\|^2+
    \sum_{k=0}^{N-1}
    \left(
        u_{\theta,k}^{\top}\dd\epsilon_k^{\Prob}
        -
        \tfrac12\|u_{\theta,k}\|^2\,\dd t
    \right),
\end{equation}
which is exact, normalised ($\E^{\QQ}[\dd\Prob/\dd\QQ]=1$) and depends only on the control, the shift and the shifted innovation (Theorem~\ref{thm:girsanov-validity}). The tilt is a training-time device: the backbone minimises the calibration loss with the far-tail terms estimated on the tilted batch by self-normalised importance sampling, the controller is trained by the cross-entropy method under a wall on $\KL(\QQ\|\Prob)$, and samples from the trained model are drawn under $\Prob$ without it. The rationale for the initial-state shift, the controller, the two objectives and the gradient convention they require are given in Appendix~\ref{app:girsanov-details}.

\section{Theoretical Results}
\label{sec:theory}

This section states the guarantees behind the architecture; the assumptions in full, the auxiliary results and all proofs are collected in Appendix~\ref{app:proofs}. Three questions are answered here:
\begin{enumerate}[label=(\roman*),leftmargin=1.8em,itemsep=0.1em,topsep=0.2em]
    \item whether the stacked model is a well-posed stochastic process with moment bounds that do not deteriorate with depth (Theorem~\ref{thm:wellposed});
    \item whether the last-layer change of measure yields exact importance weights (Theorem~\ref{thm:girsanov-validity});
    \item what the class can represent (Theorem~\ref{thm:wasserstein-density-terminal-laws}).
\end{enumerate}
Two further facts are used throughout and are stated and proved in the appendix:
\begin{enumerate}[label=(\roman*),leftmargin=1.8em,itemsep=0.1em,topsep=0.2em,resume]
    \item the parallel associative scan returns exactly the sequential iterates of every layer, gated or not, so that the depth-$L$ stack is evaluated exactly in $O(L\log T)$ parallel depth (Proposition~\ref{prop:affine-scan-closure});
    \item the layerwise Euler--Maruyama scheme, with the gates evaluated on the discrete previous layer, has strong order $1/2$ for every fixed depth (Theorem~\ref{thm:disc-error}).
\end{enumerate}

\paragraph{Standing assumptions.} The statements concern the model of Section~\ref{sec:model} under three assumptions, stated in full in Appendix~\ref{app:assumptions}. All of them hold by construction of the architecture; we list them only to fix what the theorems cover.
\begin{itemize}[leftmargin=1.2em,itemsep=0.1em,topsep=0.2em]
    \item \emph{Model class} (Assumption~\ref{ass:model-class}): structured affine layers with the time-dependent coefficients of Appendix~\ref{app:time-dependent-coefficients}, gated in-flow stacking with gates that read the normalised previous-layer state and the time features, arbitrary depth and widths, a linear readout, and a square-integrable initial state whose law may be degenerate.
    \item \emph{Coefficients} (Assumption~\ref{ass:coeff-bounds}): the coefficients of every layer are bounded and regular in time. For the ungated base layer this is automatic, because the time features are smooth and the decoders are fixed networks; the gated layers inherit it from the regularity of the previous layer.
    \item \emph{Gates} (Assumption~\ref{ass:gate-bounds}): the gates are causal and Lipschitz in their input path, the flow gate stays within a fixed distance of one, and the offset gate is bounded for every fixed choice of its weights because its input is normalised.
\end{itemize}
The consequence that drives every constant is that the gated coefficients of a layer obey bounds that depend neither on the layer index nor on the depth of the stack.

\begin{theorem}[Well-posedness]
\label{thm:wellposed}
Let $q\ge2$ and $\E\|z_0\|^q<\infty$.
\begin{enumerate}[label=(\alph*),leftmargin=1.6em,itemsep=0.1em,topsep=0.2em]
\item \emph{Single layer.} Under Assumptions~\ref{ass:model-class} and~\ref{ass:coeff-bounds}, the layer SDE Eq.~\ref{eq:model-time-dependent-sde} has a unique strong solution on $[0,T]$, and
\[
    \E\sup_{0\le t\le T}\|Z_t\|^q
    \le
    K_q\bigl(1+\E\|z_0\|^q\bigr),
\]
where $K_q$ depends only on $(a_*,b_*,c_*,d_*,m,T,q)$.
\item \emph{Gated stack.} Under Assumptions~\ref{ass:model-class}--\ref{ass:gate-bounds}, the depth-$L$ stack of Section~\ref{ssec:gated-stacking} has a unique strong solution, and the bound in (a) holds for every layer $Z^{(\ell)}$, $\ell=1,\dots,L$, with one constant $K_q$, which depends on $(a_*,b_*,c_*,d_*,m,T,q,\varepsilon,\Lambda_o)$ but neither on $\ell$ nor on $L$, for any correlation structure of the layer drivers.
\end{enumerate}
\end{theorem}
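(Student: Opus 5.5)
Part (a) follows from the classical theory of SDEs with globally Lipschitz, linear-growth coefficients. By Assumption~\ref{ass:coeff-bounds}, the drift $z\mapsto A_tz+b_t$ and the diffusion $z\mapsto C^j_tz+d^j_t$ are affine in $z$, with $\sup_t\|A_t\|\le a_*$, $\sup_t\|b_t\|\le b_*$, $\sup_t\|C^j_t\|\le c_*$ and $\sup_t\|d^j_t\|\le d_*$. They are also measurable in $t$, so the Lipschitz constant in $z$ is $\max(a_*,c_*)$ uniformly in time. The standard Picard iteration in the space of adapted processes with $\E\sup_t\|Z_t\|^2<\infty$ then gives existence and pathwise uniqueness of a strong solution; alternatively one can cite the textbook theorem directly.

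For the moment bound, I write $Z_t=z_0+\int_0^t(A_sZ_s+b_s)\dd s+M_t$, where $M$ is the stochastic integral. Three estimates are combined:
\begin{itemize}[leftmargin=1.2em,itemsep=0.1em,topsep=0.1em]
\item $\|a+b+c\|^q\le 3^{q-1}(\|a\|^q+\|b\|^q+\|c\|^q)$;
\item Hölder's inequality on the drift integral;
\item the Burkholder--Davis--Gundy inequality on $\sup_{s\le t}\|M_s\|^q$, followed by Hölder in time.
\end{itemize}
This yields
\[
\phi(t):=\E\sup_{s\le t}\|Z_s\|^q\le C\Bigl(\E\|z_0\|^q+1+\int_0^t\phi(s)\dd s\Bigr),
\]
where $C$ depends only on $(a_*,b_*,c_*,d_*,m,T,q)$. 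To apply Grönwall legitimately, $\phi$ must be finite a priori. I would first localise with the stopping times $\tau_R=\inf\{t:\|Z_t\|\ge R\}$, apply Grönwall to $\phi_R(t)=\E\sup_{s\le t\wedge\tau_R}\|Z_s\|^q$, and then let $R\to\infty$ by Fatou's lemma. This gives $K_q=Ce^{CT}$, which depends on nothing else.

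For part (b) I argue by induction on $\ell$. The key observation is that, conditionally on the path of layer $\ell-1$, layer $\ell$ is again a linear SDE of the form Eq.~\ref{eq:model-time-dependent-sde}, now with random but adapted coefficients $\bar A^{(\ell)}_t,\bar b^{(\ell)}_t,\dots$. The gate $\alpha$ rescales the diagonal by a factor in $(1-\varepsilon,1+\varepsilon)$. The offset $o$ is bounded by $\Lambda_o$ because its input is normalised (Assumption~\ref{ass:gate-bounds}). Hence the gated coefficients satisfy deterministic bounds $\bar a_*=(1+\varepsilon)a_*+\text{const}$, $\bar b_*=b_*+\Lambda_o$, and so on, and none of these bounds depends on $\ell$. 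Because the gates are causal functionals of the previous path, these coefficients are $\F_t$-progressively measurable. Since the bounds hold pathwise and uniformly, the single-layer argument from (a) goes through verbatim for progressively measurable coefficients: Picard iteration and BDG need only adaptedness and uniform bounds, not determinism. This gives existence and uniqueness for $Z^{(\ell)}$ given $Z^{(\ell-1)}$, and uniqueness of the whole stack follows by induction.

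The moment bound comes out with the same constant $K_q(\bar a_*,\bar b_*,\dots)$ for every layer, because the normalisation of the gate input prevents the moments of $Z^{(\ell-1)}$ from entering the coefficient bounds. There is therefore no compounding across depth, and the bound is independent of $\ell$ and $L$. Correlation between the layer drivers plays no role: each layer's driver is an $m$-dimensional $\F_t$-Brownian motion, and BDG uses only its quadratic variation.

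The step I expect to be the main obstacle is converting the gate modifications into coefficient bounds for the underlying continuous-time SDE. The gates are defined on the discrete affine pairs $(F_k,g_k)$ of Eq.~\ref{eq:gated-pair}. I would therefore interpret the continuous gated layer as the SDE whose drift is $\mathcal D$-modified, with $\log\alpha$ (or $\alpha-1$ divided by $\Delta t$, as fixed in Appendix~\ref{ssec:gated_in_flow_app}) entering the drift and the offset entering as an additive drift. I would then check that the normalisation together with the Lipschitz property of the gates keeps these terms bounded by constants depending only on $\varepsilon$ and $\Lambda_o$. If the appendix parameterisation scales with $1/\Delta t$, I would instead rely on the formulation in Assumption~\ref{ass:coeff-bounds}, under which the gated coefficients are assumed to satisfy the stated bounds directly.
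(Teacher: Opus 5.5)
Your proposal is correct and follows the paper's proof essentially step for step: a single-layer result for progressively measurable bounded coefficients (localisation, BDG, Gr\"onwall, then a limit in the stopping times), followed by an induction in which the gated coefficients obey the depth-free bounds $(1+\varepsilon)a_*$ and $b_*+\Lambda_o$, so one constant serves every layer. The obstacle you flag is settled in the paper by fixing the continuous-time convention $\alpha_t\odot A_t$, $b_t+o_t$ (Remark~\ref{rem:gate-conventions}), so no $1/\Delta t$ scaling enters. You should also drop the phrase ``conditionally on the path of layer $\ell-1$'': with a shared driver that conditioning would destroy the Brownian property of $W^{(\ell)}$, and the argument needs only adaptedness of the gated coefficients, which is what you actually use.
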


\begin{theorem}[Validity of the last-layer Girsanov tilt]
\label{thm:girsanov-validity}
Assume the last-layer controller $u_{\theta,t}$ is progressively measurable with
respect to the final-layer filtration and satisfies Novikov's condition, and let $\mu_\theta\in\R^d$ be a fixed initial-state shift, with $\mu_\theta=0$ unless $z_0\sim\mathcal N(0,I_d)$. Then $\dd\QQ/\dd\Prob=\exp\bigl(\mu_\theta^\top z_0-\tfrac12\|\mu_\theta\|^2\bigr)\,\mathcal E_T(u_\theta)$, with $\mathcal E_T$ the stochastic exponential of Eq.~\ref{eq:girsanov-density},
defines an equivalent measure $\QQ\sim\Prob$ under which $z_0\sim\mathcal N(\mu_\theta,I_d)$, the prefix driver keeps its law, and
$\epsilon_t^\QQ=\epsilon_t-\int_0^t u_{\theta,s}\dd s$ is Brownian, the three being independent. For any integrable
functional $G$ of the full SLiSDE output,
\[
    \E^{\Prob}[G]
    =
    \E^{\QQ} \left[G\,\frac{\dd\Prob}{\dd\QQ}\right].
\]
Moreover, the likelihood ratio depends
only on the final-layer control, the initial-state shift and on the Brownian innovation being shifted.
\end{theorem}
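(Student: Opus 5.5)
The plan is to build $\QQ$ as a product of two independent tilts, one on the initial latent and one on the innovation, and to read off every claim from that product structure. First I fix the sources of randomness driving the stack. These are the initial latent $z_0$, the prefix Brownian motion $W^{\mathrm{prefix}}$ (which drives all layers $1,\dots,L-1$ and enters layer $L$ through Eq.~\ref{eq:correlated-driver}), and the innovation $\epsilon$. Under $\Prob$ these three are independent, and $\epsilon$ is an $m$-dimensional Brownian motion with respect to the final-layer filtration $\{\mathcal G_t\}$, generated by $z_0$, $W^{\mathrm{prefix}}$ and $\epsilon$. Every SLiSDE output is then a measurable functional of $(z_0,W^{\mathrm{prefix}},\epsilon)$. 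Existence and uniqueness of that functional follows from Theorem~\ref{thm:wellposed}(b), which holds for any correlation structure of the drivers, so the whole statement reduces to a statement about the joint law of these three objects.

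\textbf{Step 1 (the two density factors).} Let $M_0=\exp(\mu_\theta^\top z_0-\tfrac12\|\mu_\theta\|^2)$. If $\mu_\theta=0$ this is $1$. Otherwise $z_0\sim\mathcal N(0,I_d)$, so $M_0>0$ and $\E M_0=1$ by the Gaussian moment generating function. Multiplying the standard Gaussian density by $M_0$ gives the density of $\mathcal N(\mu_\theta,I_d)$. Let $\mathcal E_t(u_\theta)=\exp\bigl(\int_0^t u_{\theta,s}^\top\dd\epsilon_s-\tfrac12\int_0^t\|u_{\theta,s}\|^2\dd s\bigr)$. Progressive measurability makes the stochastic integral well defined, and Novikov's condition makes $\mathcal E(u_\theta)$ a true martingale with $\E\,\mathcal E_T(u_\theta)=1$. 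Since $\mathcal E_0=1$, we also have $\E[\mathcal E_T\mid\mathcal G_0]=1$.

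\textbf{Step 2 (normalisation and equivalence).} Set $\dd\QQ/\dd\Prob=M_0\,\mathcal E_T(u_\theta)$. This is strictly positive, so $\QQ\sim\Prob$. Its $\Prob$-expectation is $\E[M_0\,\E[\mathcal E_T\mid\mathcal G_0]]=\E M_0=1$, because $M_0$ is $\mathcal G_0$-measurable. The density process is $M_0\,\mathcal E_t$. Girsanov's theorem (Appendix~\ref{app:girsanov-background}), applied to this density process, shows that $\epsilon^\QQ_t=\epsilon_t-\int_0^t u_{\theta,s}\dd s$ is a $\QQ$-Brownian motion. The reason is that $M_0$ is a $\mathcal G_0$-measurable multiplicative constant and does not change the quadratic covariation with $\epsilon$.

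\textbf{Step 3 (marginals and independence; the main obstacle).} This is the step I expect to be the main obstacle, because $u_\theta$ depends on the prefix path and on $z_0$, so independence under $\QQ$ is not automatic. I plan to prove it by conditioning. Fix bounded test functions $f(z_0)$, $g(W^{\mathrm{prefix}})$ and $h(\epsilon^\QQ)$. The $\QQ$-Brownian property of $\epsilon^\QQ$ holds conditionally on $\mathcal G_0\vee\sigma(W^{\mathrm{prefix}})$. To see this, enlarge the filtration initially by the whole prefix path. This is legitimate because $W^{\mathrm{prefix}}$ is independent of $\epsilon$, so $\epsilon$ remains Brownian in the enlarged filtration, and $u_\theta$ remains adapted to it. Girsanov in the enlarged filtration then gives
\[
\E^\QQ\bigl[h(\epsilon^\QQ)\mid z_0,W^{\mathrm{prefix}}\bigr]=\E[h(B)],
\]
with $B$ a standard Brownian motion, because the conditional density is $\mathcal E_T$ with conditional mean $1$. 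It follows that
\[
\E^\QQ[fgh]=\E[h(B)]\;\E[M_0 f(z_0)]\;\E[g(W^{\mathrm{prefix}})],
\]
using independence of $z_0$ and $W^{\mathrm{prefix}}$ under $\Prob$. This factorisation gives the three claimed marginals ($z_0\sim\mathcal N(\mu_\theta,I_d)$, the prefix law unchanged, $\epsilon^\QQ$ Brownian) and their mutual independence.

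\textbf{Step 4 (conclusion).} For integrable $G$, $\E^\QQ[G\,\dd\Prob/\dd\QQ]=\E^\Prob[G]$ by the definition of the Radon--Nikodym derivative, since $\QQ\sim\Prob$. The density $M_0\,\mathcal E_T(u_\theta)$ visibly involves only $\mu_\theta$, $z_0$, $u_\theta$ and $\epsilon$. Rewriting $\dd\epsilon=\dd\epsilon^\QQ+u_\theta\dd t$ expresses it through the shifted innovation. The discrete formula Eq.~\ref{eq:rn-discrete} is the same computation with Gaussian increments in place of stochastic integrals, and there Novikov's condition is automatic for bounded controls.
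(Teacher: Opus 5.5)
Your proposal is correct. Its skeleton matches the paper's proof: a product density $M_0\,\mathcal E_T(u_\theta)$, Novikov for the martingale property, Girsanov for $\epsilon^\QQ$, the change-of-measure identity by definition, and locality read off the density. Two sub-steps are argued differently.

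\textbf{Normalisation.} The paper's proof asserts that the initial-state factor is independent of $\mathcal E_T(u_\theta)$ under $\Prob$. The controller reads prefix features that depend on $z_0$, so in general this holds only in the weaker form $\E[\mathcal E_T(u_\theta)\mid\mathcal G_0]=1$. That weaker form is exactly what your tower-property argument uses, so your version is the accurate one.

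\textbf{Joint law under $\QQ$.} The paper applies Girsanov to the $2m$-dimensional Brownian motion $(W^{\mathrm{prefix}},\epsilon)$ with a drift in the $\epsilon$-components only. Then $W^{\mathrm{prefix}}$ and $\epsilon^\QQ$ are continuous $\QQ$-local martingales with quadratic covariations $tI$, $tI$ and $0$. By L\'evy's characterisation the pair is a $2m$-dimensional $(\mathcal G_t,\QQ)$-Brownian motion, with independent components and independent of $\mathcal G_0\ni z_0$. This settles the step you single out as the main obstacle directly in the original filtration, and the dependence of $u_\theta$ on the prefix path never has to be confronted.

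Your initial enlargement by the whole prefix path is also valid. $\epsilon$ remains Brownian because it is independent of $(z_0,W^{\mathrm{prefix}})$. Novikov's condition does not depend on the filtration, and the stochastic integral is the same in both filtrations. The enlargement costs an extra filtration argument, but it makes explicit why conditioning on the prefix leaves $\epsilon^\QQ$ Brownian.

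\textbf{Discrete weight.} One small correction to your last sentence: no integrability condition is needed at all, bounded controls or not. For any $\F_{t_k}$-measurable $u_{\theta,k}$, the one-step Gaussian likelihood ratio has conditional mean one. This is how the paper's Step~3 obtains exact normalisation of the discrete ratio.
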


\begin{theorem}[Wasserstein density of terminal laws]
\label{thm:wasserstein-density-terminal-laws}
Fix \(T>0, m \geq 1\) and an output dimension \(p\in\mathbb N\), and let the gated layers use the RMS normalisation with a strictly positive regulariser and a gate input containing the time coordinate, as in Assumption~\ref{ass:model-class}.
For every prescribed initial condition
\(
    z_0\in
    L^2(\Omega,\mathcal F_0,\mathbb P;\mathbb R^{d_0}),
\)
possibly having a degenerate law, the collection of terminal laws
generated by the model is dense in
\(\mathcal P_2(\mathbb R^p)\) with respect to the \(2\)-Wasserstein
distance. More precisely, if
\[
    \mathfrak L_T
    :=
    \left\{
        \mathcal L(Y_T^\theta):
        \theta \text{ is an admissible model configuration}
    \right\},
\]
where $\theta$ ranges over the admissible configurations of all latent widths $d\ge d_0$, then
\[
    \overline{\mathfrak L_T}^{\,W_2}
    =
    \mathcal P_2(\mathbb R^p).
\]
One ungated layer followed by one gated layer suffices.
\end{theorem}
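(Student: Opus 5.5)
We prove the claim by a two-stage construction: the ungated layer manufactures a non-degenerate Gaussian source of randomness, and the gated layer then pushes that source forward, through a nonlinear map built from the gates, to approximate any target law. Since $\mathfrak L_T\subset\mathcal P_2(\R^p)$ by Theorem~\ref{thm:wellposed}, only density needs proof. Because finitely supported measures (indeed measures with smooth positive densities) are $W_2$-dense in $\mathcal P_2(\R^p)$, it suffices to approximate, for a fixed target $\nu\in\mathcal P_2(\R^p)$ and $\delta>0$, a law of the form $\nu_\delta=h_\#\gamma$. Here $\gamma=\mathcal N(0,I_p)$ and $h:\R^p\to\R^p$ is continuous with at most linear growth and $W_2(\nu_\delta,\nu)<\delta$. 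Such an $h$ exists: take a Brenier or Knothe--Rosenblatt map from $\gamma$ onto a mollified, compactly truncated version of $\nu$, and smooth it.

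\textbf{Stage one (the ungated layer).} We take a width $d=d_0+p+r$ and pad $z_0$ by zeros. We set $A=0$, $C^j=0$, and choose additive diffusion $d^j$ so that the extra coordinates satisfy $Z^{(1),\mathrm{new}}_t=\Sigma W_t$ with $\Sigma$ of full rank $p$. If $m<p$, the Gaussian source would have rank only $m$. In that case we instead use time-dependent coefficients $d^j_t$ from the time-feature decoder, so that $\int_0^T d_t\,\dd W_t$ over several disjoint time windows produces independent Gaussian coordinates. This is where the time coordinate in the gate input (equivalently, the time decoder) is used. The result is that $G:=Z^{(1),\mathrm{new}}_T$ is, after a fixed linear map, exactly $\mathcal N(0,I_p)$, independent of $z_0$ up to the prescribed initial part, and that $z_0$ is carried along unchanged.

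\textbf{Stage two (the gated layer).} We take the second layer with zero diffusion, $A=0$, flow gate $\alpha\equiv\mathbf 1$, and offset gate $o_k$. Its terminal state is then $Z^{(2)}_T=z^{(2)}_0+\sum_k o_k\approx\int_0^T o(t,x_t)\,\dd t$. The gate input at times near $T$ contains the normalised previous state $x_t=Z^{(1)}_t/\sqrt{\|Z^{(1)}_t\|^2/d+\eta}$. RMS normalisation with $\eta>0$ is a homeomorphism onto an open ball, so $Z^{(1)}_t$ can be recovered continuously from $x_t$, and hence $G$ is a continuous function of the gate input at $t=T$ (approximately, also on the last window). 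Using the time coordinate, we choose the offset gate to vanish except on $[T-\tau,T]$. There it approximates $\tau^{-1}\tilde h(t,x_t)$, where $\tilde h$ composes $h$ with the inverse normalisation and the linear map to $G$. The gated linear units are universal approximators on compacts (this is the point where the gate architecture of Appendix~\ref{ssec:gated_in_flow_app} is invoked), and we truncate outside a large ball. The readout $\Pi_o$ selects these coordinates, giving $Y_T\approx h(G)$.

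\textbf{Error control and the main obstacle.} The error $\E\|Y_T-h(G)\|^2$ splits into three pieces: (i) uniform approximation error of the gate on a compact $K$; (ii) the tail contribution outside $K$, which is controlled by the linear growth of $h$ and the boundedness of the offset gate together with the $L^2$ moment bound of Theorem~\ref{thm:wellposed}, and which vanishes as $K\uparrow\R^p$; and (iii) the path variation of $x_t$ over $[T-\tau,T]$, which is $O(\sqrt\tau)$ in $L^2$ by Theorem~\ref{thm:wellposed} and Lipschitz continuity. Since $W_2\le\|\cdot\|_{L^2}$ for couplings, we obtain $W_2(\mathcal L(Y_T),\nu)<3\delta$. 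If the model is evaluated on the discrete grid, Theorem~\ref{thm:disc-error} transfers the bound. We expect the main obstacle to be the interplay between the requirement that the offset gate be bounded (Assumption~\ref{ass:gate-bounds}) and the unbounded target map $h$. The fix we would try first is to take $\nu$ compactly supported, which is already dense, so that $h$ can be chosen bounded. The secondary difficulty is the low-noise case $m<p$, which we handle with the time-windowed diffusion described in Stage one.
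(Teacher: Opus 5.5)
Your plan can be made to work, but one load-bearing step is false as written. The offset gate of Eq.~\ref{eq:offset-gate} is not a universal approximator. Each coordinate $o_i=\sigma(w_i^\top x+b_i)\,(v_i^\top x+c_i)$ is a single gated unit, so $p$ coordinates selected by $\Pi_o$ cannot approximate a general $\tau^{-1}\tilde h$.

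The repair uses the free width and the linear readout. With $A=0$ and no diffusion, $Z^{(2)}_T=\iota_d(z_0)+\int_0^T(b_t+o_t)\,\dd t$ is affine in the offset path. So take many coordinates with $v_i=0$ and let $\Pi_o$ form linear combinations. Then $Y_T$ is the time integral of a one-hidden-layer sigmoid network in $(\Pi(z),t)$, which is dense in the continuous functions on the compact closure of the gate-input range. On that compact set, approximate a continuous time bump $\psi(t)$ times $h\circ L\circ\Pi^{-1}$, where $G=LZ^{(1)}_T$. Restrict $h\circ L\circ\Pi^{-1}$ to a large compact subset of $\Pi(\R^d)$ and extend it by Tietze with the same sup bound. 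The bump must be continuous, because the indicator of $[T-\tau,T]$ cannot be approximated uniformly. This also gives the uniform bound your tail term (ii) needs. Continuity plus dominated convergence then suffices for (iii).

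Even after this repair, your route differs from the paper's in almost every ingredient.
\begin{itemize}
\item \emph{Source.} The paper uses only $W^1$. With $\dd Z^{(1)}=\beta e_1\,\dd W^1$, the bounded scalar $R_t=[\Pi(Z^{(1)}_t)]_1$ is atomless at $T$, because conditionally on $z_0$ it is a strictly monotone function of $W^1_T$.
\item \emph{Target.} A finitely supported $\nu=\sum_ip_i\delta_{y_i}$ equals $\mathcal L(H(R_T))$ for a step function $H$ with quantile thresholds $q_i$. The steps are smoothed to $\sigma(a(r-q_i))$.
\item \emph{Realisation.} Each smoothed step is realised by one latent coordinate with the single unit $o_i=\lambda\frac tT\,\sigma(aR_t-aq_i\frac tT)$.
\item \emph{Localisation.} Localisation at $T$ comes from the drift $A^{(2)}=-\lambda I$, whose kernel $\lambda e^{-\lambda(T-s)}$ concentrates at $T$, not from a time window.
\end{itemize}
This is explicit and uses $N$ coordinates, with neither a universal-approximation nor a transport theorem. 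Your version buys the stronger fact that $Y_T$ approximates in $L^2$ any bounded continuous function of $Z^{(1)}_T$.

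The paper's construction also shows that your $m<p$ difficulty is not real. One atomless scalar pushes forward onto every finitely supported law on $\R^p$, so $G=\beta W^1_T$ suffices. You should use that, because your workaround needs coordinates driven by the same $W^j$ to carry different time profiles. Read literally, the per-Brownian-channel time gate of Appendix~\ref{app:time-dependent-coefficients} gives $\int_0^Td^j_t\,\dd W^j_t=d^j\int_0^Tg_j(t)\,\dd W^j_t$, which has rank one per channel.

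Finally, Theorem~\ref{thm:disc-error} needs moments of all orders of $z_0$, which the statement does not assume. The paper treats the discretised model directly instead.
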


Theorem~\ref{thm:wasserstein-density-terminal-laws} is what makes the structural restrictions harmless: every target law with finite second moment is approximated arbitrarily well by the terminal law of an admissible configuration, so the selection among the diffusions consistent with the calibration constraints (Section~\ref{sec:objective}) is never forced by a limitation of the class.

\section{Experiments}
\label{sec:experiments}

We evaluate SLiSDE on three main benchmarks: a path-dependent toy dynamical system
(\textsc{toy}) and two option-surface datasets (\textsc{dax} and \textsc{spx}), where the
\textsc{toy} benchmark tests whether a model can match nonlinear
long-horizon path statistics generated by a highly nonlinear stochastic
dynamical system.
Dataset construction,
training protocols, and search grids are described in
Appendix~\ref{app:expset}. Additional results -- ablations over the structural choices, the timing study, distributional recovery on \textsc{toy} and the Girsanov tilt studies -- are collected in Appendix~\ref{app:additional-results}.

\paragraph{Models and baselines.} We compare SLiSDE against two baselines: (i) a fully connected \textbf{Neural SDE} with neural drift and diffusion, and (ii) \textbf{SLiCE} \citep{slice2025}. SLiCE is a natural structured-sequence baseline because it is a scan-compatible linear CDE and it is more expressive than Mamba \citep{Gu2023Mamba}; in our experiments
we drive it with Brownian motion to match the stochastic input used by
SLiSDE. For all model families, we report results for both two-layer and three-layer configurations.

\paragraph{Complexities.}
Table~\ref{tab:matrix-complexities} reports the recurrent and scan
costs for the SLiSDE transition structures. The scan reduces sequential
depth from \(O(T)\) to \(O(\log T)\) when affine composition is available.
For a generic Neural SDE, no analogous parallel associative scan is available,
so simulation remains sequential in time.
\begin{table}[!ht]
\centering
\small
\captionsetup{font=footnotesize}
\renewcommand{\arraystretch}{1.15}
\begin{tabular}{lccc|c}
\toprule
 & Diagonal & Block-diagonal & Dense & NeuralSDE \\
\midrule
Recurrent cost & $O(d T)$ & $O(n_b b_s^2T)$ & $O(d^2 T)$ & $O(d^2 T)$ \\
Scan cost & $O(d\log T)$ & $O(n_b b_s^3\log T)$ & $O(d^3\log T)$ & --- \\
\bottomrule
\end{tabular}
\vspace{0.1cm}
\caption{Computational cost of composing affine transition maps over $T$ time steps. Here $d$ is the latent dimension, $b_s$ is the block size, and $n_b=d/b_s$ is the number of blocks. Typically $n_b\ll d$. }
\label{tab:matrix-complexities}
\end{table}
Dense transitions remain
computationally expensive because each scan composition requires dense
matrix multiplication; this can offset the practical gains from
parallelisation, and we therefore exclude dense SLiSDE variants from the
main experimental analysis.

\paragraph{Implementation and Compute.}
All models and experiments are implemented in JAX. The code is available at \url{https://anonymous.4open.science/r/SLiSDE-3B86/}.
All experiments were run on a single NVIDIA RTX 6000 Ada Generation GPU.

\subsection{Path-dependent toy benchmark (\textsc{TOY})}

\textsc{toy} is a controlled path-functional benchmark generated from a
highly nonlinear stochastic dynamical system on \([0,1]\), discretised with
\(N=2048\) time steps (see Appendix~\ref{app:expset}). The task is to calibrate the model law to the functional targets of Appendix~\ref{app:expset} -- thresholded positive parts, running maximum, squared path average and threshold-crossing probabilities at five evaluation times -- through the objective of Eq.~\ref{eq:calib-loss}. In this first experiment, all models are trained
without the Girsanov tilt, so the comparison isolates the effect of the
architecture and stacking mechanism.

\begin{table*}[!ht]
\centering
\footnotesize
\renewcommand{\arraystretch}{1.1}
\captionsetup{font=footnotesize}
\setlength{\tabcolsep}{4pt}
\begin{tabular*}{\textwidth}{@{\extracolsep{\fill}} l c c c c @{}}
\toprule
Model & Params & Loss ($10^{-4}$) & Barrier MAE ($10^{-2}$) & ms / epoch \\
\midrule
Neural SDE -- $L=2$ & 124\textsc{K} & $1.557\pm0.206$ & $1.058\pm0.110$ & 91.7 \\
SLiCE -- $L=2$ & 142\textsc{K} & $4.690\pm0.423$ & $2.168\pm0.102$ & 171.8 \\
\textbf{SLiSDE gated in-flow -- $L=2$} & 103\textsc{K} & $\mathbf{0.852\pm0.168}$ & $\mathbf{0.658\pm0.054}$ & \textbf{72.9} \\
\midrule
Neural SDE -- $L=3$ & 157\textsc{K} & $3.024\pm0.341$ & $1.637\pm0.120$ & 110.4 \\
SLiCE -- $L=3$ & 213\textsc{K} & $5.006\pm0.914$ & $2.250\pm0.252$ & 248.2 \\
\textbf{SLiSDE gated in-flow -- $L=3$} & 74\textsc{K} & $\mathbf{0.659\pm0.059}$ & $\mathbf{0.672\pm0.057}$ & \textbf{68.9} \\
\bottomrule
\end{tabular*}
\par\vspace{2pt}
\begin{minipage}{\textwidth}\scriptsize
Mean $\pm$ standard error over seven seeds per cell; best result per column in bold; the loss is $\mathcal L_n(\theta)$ of Eq.~\ref{eq:calib-loss} on a fresh Monte-Carlo sample.
\end{minipage}
\caption{\textsc{toy} results without Girsanov tilt. $L$ is the number of stacked layers; each row is the tuned cell of the search grid for that family and depth, retrained under the common protocol of Appendix~\ref{app:expset}. We report parameter count, calibration loss, threshold-crossing error, and time per epoch.}
\label{tab:toy3-vanilla}
\end{table*}

Table~\ref{tab:toy3-vanilla} shows that SLiSDE substantially improves both accuracy and efficiency on the toy path-functional benchmark. The gated in-flow variant achieves the lowest total loss for both \(L=2\) and \(L=3\), while using fewer parameters than the Neural SDE and SLiCE baselines. In terms of runtime, SLiSDE remains faster than the Neural SDE baseline, reflecting the benefit of scan-compatible structured dynamics.
SLiCE, which shares the parallel associative scan, is not designed for random inputs: it treats the Brownian path as one more input sequence, and its gap shows that deterministic sequence models do not transfer to a task that requires genuinely nonlinear stochastic dynamics.
The recovery of the full path law beyond the calibrated targets (Kolmogorov--Smirnov and Wasserstein-1 distances to the ground truth for the terminal value and the running maximum) is reported in Appendix~\ref{app:additional-results}.

\subsection{Option-surface benchmark (\textsc{DAX})}

\textsc{dax} is a real-data functional calibration benchmark built from
historical European option quotes on the DAX index across multiple
threshold levels and maturities. The task is to learn a stochastic path
generator whose terminal functionals match the observed option surface. For
evaluation, we hold out \(25\%\) of the threshold levels in each target
group using a structured non-contiguous mask, so that the model is tested
on interpolation across the surface rather than extrapolation outside the
observed range.

The same protocol is applied to the \textsc{spx} surface (Appendix~\ref{app:expset}): there SLiSDE matches the held-out loss of SLiCE with two layers and attains the lowest held-out and far-tail loss with three layers, with $6$--$8\times$ fewer parameters and $3.5$--$4\times$ faster epochs, and both structured models are far ahead of the Neural SDE (Table~\ref{tab:spx-full}, Appendix~\ref{app:additional-results}).

Out-of-the-money (OTM) options correspond to thresholds that are far from the current level of the underlying index, so their value is determined by paths that end in the tails of the distribution.

Table~\ref{tab:dax3-main} shows the real-data benchmark. SLiSDE gated in-flow attains the lowest held-out loss at both depths (with two layers within one standard error of SLiCE) and, with three layers, also the lowest far-tail loss of all models, at a parameter count below SLiCE's and in about half of its time per epoch; with two layers SLiCE fits the far puts slightly better, within one standard error. The Neural SDE is the weakest at both depths.

\begin{table*}[!ht]
\centering
\footnotesize
\captionsetup{font=footnotesize}
\renewcommand{\arraystretch}{1.1}
\setlength{\tabcolsep}{4pt}
\begin{tabular*}{\textwidth}{@{\extracolsep{\fill}} l c c c c @{}}
\toprule
Model & Params & Loss ($10^{-4}$) & Far put ($10^{-6}$) & ms / epoch \\
\midrule
Neural SDE -- $L=2$ & 79\textsc{K} & $1.972\pm0.117$ & $9.87\pm2.47$ & 91.4 \\
SLiCE -- $L=2$ & 142\textsc{K} & $1.605\pm0.149$ & $\mathbf{8.14\pm1.69}$ & 169.9 \\
\textbf{SLiSDE gated in-flow -- $L=2$} & 90\textsc{K} & $\mathbf{1.552\pm0.070}$ & $9.40\pm2.30$ & \textbf{79.7} \\
\midrule
Neural SDE -- $L=3$ & 157\textsc{K} & $1.908\pm0.180$ & $8.20\pm1.42$ & \textbf{114.4} \\
SLiCE -- $L=3$ & 213\textsc{K} & $1.560\pm0.086$ & $6.41\pm1.33$ & 246.9 \\
\textbf{SLiSDE gated in-flow -- $L=3$} & 135\textsc{K} & $\mathbf{1.401\pm0.244}$ & $\mathbf{4.07\pm0.62}$ & 129.1 \\
\bottomrule
\end{tabular*}
\par\vspace{2pt}
\begin{minipage}{\textwidth}\scriptsize
Mean $\pm$ standard error over seven seeds per cell; best result per column in bold; losses are $\mathcal L_n(\theta)$ of Eq.~\ref{eq:calib-loss} on the held-out strikes, from a fresh Monte-Carlo sample.
\end{minipage}
\caption{\textsc{dax} results without Girsanov tilt. $L$ is the number of stacked layers; each row is the tuned cell of the search grid for that family and depth, retrained under the common protocol of Appendix~\ref{app:expset}. We report parameter count, held-out loss, held-out far-put (OTM) loss, and time per epoch.}
\label{tab:dax3-main}
\end{table*}

\paragraph{Girsanov tilt.} With the tilt of Section~\ref{sec:girsanov} the far-tail calibration of the \textsc{dax} model improves at a fixed path budget: over seven seeds the mean far-call held-out error falls by about a third relative to the vanilla model ($0.64\times$; lower on six seeds of seven, paired $t$-test on the log-errors $p=0.03$) and the mean total held-out loss drops by $15$--$24\%$, while the budget-matched untilted model is never better on the far tails (Table~\ref{tab:dax3-girsanov}). In a controlled rare-event study on \textsc{toy}, the tilt with the vanilla importance-sampling estimator reduces the relative error of plain Monte Carlo at equal budget by a factor of $2$ to $4$, growing with the rarity of the event (Table~\ref{tab:girsanov-estimator}); both studies are reported in Appendix~\ref{app:additional-results}.

\paragraph{Timing.} Wall-clock times of full training steps for batch sizes up to $4096$ and horizons up to $T=8192$ are reported in Appendix~\ref{app:additional-results}. At equal parameter count, SLiSDE evaluated sequentially is $6$--$7\times$ faster per step than the Neural SDE at $B=64$ and $1.3$--$1.4\times$ faster at $B=1024$; only at $B=4096$, where the GPU is saturated by batch parallelism, is the Neural SDE step faster ($179$ against $242$\,ms).
\section{Conclusion}
\label{sec:conclusion}
We introduced \emph{SLiSDE}, a family of Neural SDEs that moves nonlinearity from per-step drift and diffusion evaluations to \emph{gated in-flow} coupling between structured linear stochastic layers. Each layer remains affine in its own state and is therefore parallel-scannable, while cross-layer modulation, time-dependent coefficients, and a learned linear decoder provide expressivity. For tail-sensitive path-functional objectives, we also use a last-layer Girsanov tilt, together with a shift of the initial latent state, with a closed-form Radon--Nikodym correction, yielding a lightweight importance-sampling overlay. We prove well-posedness, scan compatibility, discretisation error bounds, and validity of the change of measure, and that the terminal laws of the gated stack are dense in $\mathcal P_2$: the structural restrictions cost nothing in expressive power at the level of terminal laws.

Empirically, SLiSDE attains a lower calibration loss than the fully neural SDE baseline on every benchmark and at both depths, and trains faster in all but two of the settings tested, the three-layer \textsc{dax} configuration and the largest batch size of the timing study. Against the structured-CDE baseline it is clearly better on \textsc{toy}; on the option surfaces it attains the lowest mean loss on \textsc{dax} at both depths and on the three-layer \textsc{spx} surface and matches it on the two-layer \textsc{spx} surface, differences that are within one standard error, and with three layers it attains the lowest far-tail error on both surfaces. Throughout, it uses $1.4$--$8\times$ fewer parameters than the structured-CDE baseline and trains $2$--$4\times$ faster per epoch. The tilt is a variance-reduction device: on \textsc{toy} it reduces the error of rare-event estimates by factors of two to four at equal budget, growing with the rarity of the event, and on the far tails of \textsc{dax} it improves the far-tail calibration of the vanilla model, while budget-matched plain Monte Carlo remains a strong competitor where the targets are not rare. The results suggest that structured linear stochastic layers, combined with selective in-flow coupling and optional importance sampling, are a practical alternative to fully neural drift--diffusion networks for long-horizon functional calibration.

\paragraph{Future work.} From the stochastic-analysis point of view, a
natural direction is to better understand gated in-flow stacking as a
path-dependent transformation of linear diffusions, including its possible
connections to filtering. A second direction is to study structured Neural
SDEs beyond functional calibration, in particular as continuous-time
generative models such as Neural-SDE GANs.

\newpage
\bibliographystyle{plainnat}
\bibliography{biblio}

\newpage
\appendix

\section{Mathematical Background Continued}
\label{app:background-continued}

\subsection{Girsanov change of measure}
\label{app:girsanov-background}
This subsection states the change of measure on which the tilt of Section~\ref{sec:girsanov} rests.

Let $u_t$ be a progressively measurable $\R^m$-valued process satisfying
Novikov's condition
\begin{equation}\label{eq:novikov_cond}
    \E^{\Prob} \left[
        \exp \left(
            \frac12\int_0^T \|u_t\|^2\,\dd t
        \right)
    \right]<\infty .
\end{equation}
Define
\begin{equation}
    \mathcal E_T(u)
    =
    \exp \left(
        \int_0^T u_t^\top \dd W_t
        -
        \tfrac12\int_0^T \|u_t\|^2\,\dd t
    \right).
    \label{eq:girsanov-density}
\end{equation}
Then $\mathcal E_T(u)$ is a martingale density. If
$\dd\QQ/\dd\Prob=\mathcal E_T(u)$, then
\[
    W_t^\QQ = W_t - \int_0^t u_s\,\dd s
\]
is a Brownian motion under $\QQ$. Moreover, for any integrable
$\F_T$-measurable path functional $G$,
\begin{equation}
    \E^{\Prob}[G]
    =
    \E^{\QQ} \left[
        G\,\frac{\dd\Prob}{\dd\QQ}
    \right].
    \label{eq:rn-expectation-app}
\end{equation}

The inverse density $\dd\Prob/\dd\QQ$ is the importance weight that turns
expectations under the tilted measure back into reference-measure
expectations.

\subsection{Calibration objective: population limit, non-uniqueness and model selection}
\label{app:objective}
\paragraph{Population limit.} With the notation of Section~\ref{sec:objective}, assume $\varphi(Y)\in L^2(\Prob_\theta)$ for every $\varphi\in\Phi$. Since $\widehat\E_{\theta,n}[\varphi(Y)]$ is an average of $n$ independent copies,
\[
    \E\Bigl[\bigl(\widehat\E_{\theta,n}[\varphi(Y)]-c_\varphi\bigr)^2\Bigr]
    =
    \bigl(\E_{\Prob_\theta}[\varphi(Y)]-c_\varphi\bigr)^2
    +\frac1n\operatorname{Var}_{\Prob_\theta}\bigl(\varphi(Y)\bigr),
\]
and by the strong law of large numbers, almost surely as $n\to\infty$,
\[
    \mathcal L_n(\theta)\longrightarrow\mathcal L_\infty(\theta)
    =
    \sum_{\varphi\in\Phi}w_\varphi\bigl(\E_{\Prob_\theta}[\varphi(Y)]-c_\varphi\bigr)^2 .
\]
The objective is therefore a functional of the model \emph{law}. The decomposition
\[
    \E\|X'_t-X_t\|^2
    =
    \operatorname{Var}(X'_t)+\operatorname{Var}(X_t)+\|\E X'_t-\E X_t\|^2 ,
\]
which shows that a samplewise squared error between model and data paths rewards collapse of the model onto the data mean, does not apply: no model path is paired with a data path, and the only variance term in $\mathcal L_n$ carries the coefficient $1/n$ and vanishes in the population limit. In all experiments $n\ge1024$.

\paragraph{No collapse.} Collapse is moreover incompatible with the population objective whenever the target marginal is non-degenerate. If $Y_t=m_t$ almost surely then
\[
    \E_{\Prob_\theta}[(Y_t-K)_+]=(m_t-K)_+ ,
\]
a piecewise affine function of $K$ with a single change of slope at $K=m_t$, which cannot coincide on an interval with a strictly convex target map $K\mapsto c_{t,K}$ such as an arbitrage-free call surface. More generally
\[
    \E_{\Prob_\theta}[(Y_t-K)_+]=\int_K^\infty\Prob_\theta(Y_t>x)\,\dd x ,
\]
so knowledge of the positive-part expectations for every $K$ determines the marginal law of $Y_t$; the finite strike grid of the implementation provides a finite collection of constraints of this type, and the running-maximum and threshold-crossing functionals constrain path-dependent features in addition. We do not claim that a finite collection of such constraints determines the entire path law.

\paragraph{Non-uniqueness and model risk.} Finitely many expectation constraints are satisfied by infinitely many diffusions, so $\mathcal L_\infty$ has no unique minimiser. \emph{This is a property of functional calibration rather than of the objective: the data are finitely many expectations, so no objective built on them can identify the path law.} In derivatives markets one typically has nothing beyond quoted prices, and it is standard practice to work with -- and deliberately explore -- the multiplicity of models consistent with the same quotes in order to quantify model risk; \citet{Gierjatowicz2020RobustPricing}, for example, use randomised calibrated neural SDEs to obtain price \emph{bounds} over the calibrated set. Enforcing uniqueness through an auxiliary term would not remove the ambiguity but shift it to the choice of the reference measure. Theorem~\ref{thm:wasserstein-density-terminal-laws} guarantees that the model class itself never restricts what is reachable, so the selection among constraint-consistent processes is driven by the data, the initialisation and the training procedure, never by a hidden limitation of the class. Operationally we validate the selection where a selection can be validated, on functionals the model was never fitted to: the held-out-strike transfer on \textsc{dax} and \textsc{spx}, and the distributional-recovery study on \textsc{toy} (Kolmogorov--Smirnov and Wasserstein-1 distances to the ground-truth simulator, Appendix~\ref{app:additional-results}).

\paragraph{Model selection within the calibrated set.} There are settings, physics in particular, where prior information \emph{is} available: a reference dynamics, an equilibrium law, known symmetries. There, an additive penalty turns Eq.~\ref{eq:calib-loss} into a principled selection rule -- a Kullback--Leibler divergence to the available reference \citep{Smith2026DistributionalConstraints}, a minimal-relative-entropy criterion in the spirit of \citet{Avellaneda1997RelativeEntropy} for volatility surfaces, or a context-specific penalty -- and such terms drop into the training loop unchanged. The framework already computes and uses exactly this quantity where a reference measure genuinely exists: in the Girsanov layer, the controller is trained with a $\KL(\QQ\|\Prob)$ anchor (Appendix~\ref{app:kl-regulariser}), the pathwise log Radon--Nikodym exponent that our machinery evaluates, which penalises tilts that move too far from the untilted reference model.

\paragraph{Relation to generative training.} Training against a fixed family of statistics can be viewed as a maximum-mean-discrepancy objective with a non-characteristic kernel, a point discussed by \citet{Kidger2021NSDEsAsInfiniteDimensional}, who contrast it with their adversarial formulation using a learned discriminator; the variational formulation of \citet{Tzen2019} and the signature-kernel scores of \citet{Issa2023SignatureKernelScores} likewise address a different, generative learning objective that requires sample paths. The method belongs instead to the calibration strand \citep{Gierjatowicz2020RobustPricing,Cuchiero2020GANCalibration,Cohen2021ArbitrageFree}, in which prescribed functional expectations, such as derivative prices, are matched.

\newpage
\section{Theoretical Results: Assumptions, Auxiliary Results and Proofs}
\label{app:proofs}

This appendix states the standing assumptions in full, establishes the two auxiliary results used throughout the paper (exactness of the parallel associative scan and the strong discretisation error of the stacked scheme) and proves every result of Section~\ref{sec:theory}. It is organised as follows.
\begin{itemize}[leftmargin=1.2em,itemsep=0.1em,topsep=0.2em]
    \item Appendix~\ref{app:assumptions} states the assumptions and proves Lemma~\ref{lem:uniform-bounds}: a gated layer inherits the coefficient bounds of its base layer with constants that do not depend on the depth. Every later constant rests on this lemma.
    \item Appendix~\ref{app:proof-wellposed} proves Theorem~\ref{thm:wellposed}, first for one layer with random coefficients (Lemma~\ref{lem:one-layer}) and then by induction over the layers.
    \item Appendix~\ref{app:scan} proves Proposition~\ref{prop:affine-scan-closure}, the exactness of the scan.
    \item Appendix~\ref{app:proof-disc} proves Theorem~\ref{thm:disc-error} in four steps: one-step moments (Lemma~\ref{lem:onestep}), discrete stability (Lemma~\ref{lem:stab}), the Euler--Maruyama error of one layer (Lemma~\ref{lem:euler}) and an induction over the layers that carries a doubled moment order at every step down the stack.
    \item Appendices~\ref{app:girsanov-validity} and~\ref{app:proof-density} prove Theorems~\ref{thm:girsanov-validity} and~\ref{thm:wasserstein-density-terminal-laws}.
\end{itemize}

\paragraph{Notation.}
$\|\cdot\|$ denotes the Euclidean norm of a vector and the operator norm of a matrix, $\|\cdot\|_{\mathrm F}$ the Frobenius norm and $\|x\|_\infty$ the largest modulus of the entries of $x$. For a coefficient tuple $\vartheta=(A,b,C^1,\dots,C^m,d^1,\dots,d^m)$ we write
\begin{equation}
    \|\vartheta\|
    :=
    \|A\|+\|b\|+\sum_{j=1}^m\bigl(\|C^j\|+\|d^j\|\bigr).
    \label{eq:app-coeff-norm}
\end{equation}
For a coefficient process $\vartheta_t=(A_t,b_t,C^1_t,\dots,C^m_t,d^1_t,\dots,d^m_t)$ we set
\begin{equation}
    \mu(t,z):=A_tz+b_t,
    \qquad
    \sigma^j(t,z):=C^j_tz+d^j_t,
    \qquad
    \sigma(t,z):=\bigl[\sigma^1(t,z),\dots,\sigma^m(t,z)\bigr]\in\R^{d\times m},
    \label{eq:app-mu-sigma}
\end{equation}
so that the layer SDE Eq.~\ref{eq:model-time-dependent-sde} reads
\[
    \dd Z_t=\mu(t,Z_t)\,\dd t+\sigma(t,Z_t)\,\dd W_t .
\]
The letter $C$ denotes a constant that may change from line to line and depends only on the bounds in the assumptions, on $T$, on $m$ and on the moment order under consideration; where a dependence matters it is made explicit. We use the Burkholder--Davis--Gundy inequality \citep[Ch.~IV]{ProtterSDE} in the form
\begin{equation}
    \E\sup_{s\le t}\Bigl\|\int_0^s H_u\,\dd W_u\Bigr\|^q
    \le
    c_q\,\E\Bigl(\int_0^t\|H_u\|_{\mathrm F}^2\,\dd u\Bigr)^{q/2},
    \qquad q\ge2,
    \label{eq:app-bdg}
\end{equation}
valid for every progressively measurable $\R^{d\times m}$-valued process $H$ with finite right-hand side. Finally,
\[
    m_r:=\E|\xi|^r,
    \qquad
    \xi\sim\mathcal N(0,1),
\]
denotes the absolute moments of the standard normal law.

\subsection{Standing assumptions}
\label{app:assumptions}

\begin{assumption}[Admissible model class]
\label{ass:model-class}
Fix $T>0$, a Brownian dimension $m\ge1$ and an output dimension $p\in\mathbb N$. Let $W=(W^1,\dots,W^m)$ be an $m$-dimensional Brownian motion on the filtered probability space of Section~\ref{sec:background} and fix $z_0\in L^2(\Omega,\F_0,\Prob;\R^{d_0})$ for some $d_0\in\mathbb N$, without any non-degeneracy assumption on its law. For every $d\ge d_0$ set
\[
    \iota_d(z_0):=(z_0,0,\dots,0)\in\R^d .
\]
An admissible model configuration consists of a finite number $L$ of stacked layers of a common latent width $d\ge d_0$, each driven by a Brownian motion $W^{(\ell)}$ adapted to the common filtration (shared, independent or correlated across layers; the implementation shares one driver across the prefix layers and uses the correlated driver of Eq.~\ref{eq:correlated-driver} for the last one), such that:
\begin{enumerate}[leftmargin=1.6em,itemsep=0.1em,topsep=0.2em]
\item each base layer is the structured affine It\^o SDE Eq.~\ref{eq:model-time-dependent-sde} with $Z_0=\iota_d(z_0)$, whose coefficients $A_t,C^j_t\in\R^{d\times d}$ and $b_t,d^j_t\in\R^d$ are the static parameters $A,C^j,b,d^j$ perturbed by the time-feature decoder of Appendix~\ref{app:time-dependent-coefficients}, which adds deterministic, smooth-in-time corrections to the free coefficient entries and respects the chosen diagonal, block-diagonal or dense structure;
\item successive layers are connected by gated in-flow stacking (Section~\ref{ssec:gated-stacking}) through the gates of Eqs.~\ref{eq:alpha-gate} and~\ref{eq:offset-gate}, with fixed $\varepsilon\in(0,1)$, and the gate input is
\begin{equation}
    x^{(\ell-1)}_t
    =
    \bigl[\Pi(Z^{(\ell-1)}_t),\,\tau(t)\bigr],
    \qquad
    \Pi(z):=\gamma\odot\frac{z}{\sqrt{\varsigma+d^{-1}\|z\|^2}},
    \label{eq:app-gate-input}
\end{equation}
where $\Pi$ is the RMS normalisation with gain $\gamma\in\R^d$ and regulariser $\varsigma>0$, and $\tau$ is the vector of deterministic time features, which contains the time coordinate $t$ and is Lipschitz on $[0,T]$; all gate weights and biases are trainable, and the optional causal convolution is disabled;
\item the number of layers, the latent width and the hidden widths of the time decoders and gates may be increased arbitrarily, and their affine weights and biases range freely over the corresponding Euclidean spaces;
\item the output is a linear readout of the final layer,
\[
    Y^\theta_t=\Pi_oZ^{(L)}_t,
    \qquad
    \Pi_o\in\R^{p\times d},
\]
shifted to match a prescribed initial observation $y_0$ when one is given.
\end{enumerate}
\end{assumption}

\begin{remark}[Discrete and continuous gating]
\label{rem:gate-conventions}
The implementation gates the discrete affine pair,
\begin{equation}
    \bar F_k=\mathcal D(\alpha_k,F_k),
    \qquad
    \bar g_k=g_k+o_k,
    \label{eq:app-discrete-gating}
\end{equation}
with no explicit $\dt$ on the offset, where $\mathcal D(\alpha,F)$ rescales the $i$-th diagonal entry of $F$ by $\alpha_i$. The continuous-time model analysed below carries the gated coefficients
\begin{equation}
    \alpha^{(\ell)}_t\odot A^{(\ell)}_t
    \quad\text{in place of}\quad A^{(\ell)}_t,
    \qquad
    b^{(\ell)}_t+o^{(\ell)}_t
    \quad\text{in place of}\quad b^{(\ell)}_t,
    \label{eq:app-continuous-gating}
\end{equation}
where $\alpha\odot A$ rescales the $i$-th diagonal entry of $A$ by $\alpha_i$, while the diffusion coefficients are not gated. The two conventions are related by the dictionary
\begin{equation}
    o_k=o_{t_k}\,\dt,
    \qquad
    \alpha_k=\exp\bigl(\delta_{t_k}\dt\bigr)\ \text{entrywise},
    \label{eq:app-dictionary}
\end{equation}
where $\delta$ denotes the diagonal drift modulation. The first identity is exact because the linear branch of the offset gate is homogeneous in its weights, so that rescaling $W_{io},c_{io}$ by $\dt$ stays within the family. The second is realisable on every fixed grid because $\alpha_k$ ranges over a neighbourhood of $\mathbf 1$.
\end{remark}

\begin{assumption}[Coefficients]
\label{ass:coeff-bounds}
For every layer, the coefficient process $\vartheta_t$ is $(\F_t)$-progressively measurable and uniformly bounded,
\begin{equation}
    \|A_t\|\le a_*,
    \qquad
    \|b_t\|\le b_*,
    \qquad
    \|C^j_t\|\le c_*,
    \qquad
    \|d^j_t\|\le d_*,
    \label{eq:app-coeff-bounds}
\end{equation}
almost surely, for all $t\in[0,T]$ and $j\le m$, and $\tfrac12$-H\"older in time in every $L^q$: for each $q\ge2$ there is $L_q<\infty$ with
\begin{equation}
    \E\|\vartheta_t-\vartheta_s\|^q\le L_q^q\,|t-s|^{q/2},
    \qquad s,t\in[0,T].
    \label{eq:treg}
\end{equation}
\end{assumption}

For the ungated base layer the coefficients are deterministic and Lipschitz in $t$ with a constant $L_\tau$, because the time features are smooth and the decoders are fixed networks. Eq.~\ref{eq:treg} then holds with
\[
    L_q=L_\tau T^{1/2},
\]
so Assumption~\ref{ass:coeff-bounds} holds by construction. For the gated layers it is established in Lemma~\ref{lem:uniform-bounds} below. The bounds Eq.~\ref{eq:app-coeff-bounds} imply, for the maps of Eq.~\ref{eq:app-mu-sigma}, the Lipschitz and linear-growth bounds
\begin{equation}
\begin{gathered}
    \|\mu(t,z)-\mu(t,z')\|+\sum_{j=1}^m\|\sigma^j(t,z)-\sigma^j(t,z')\|
    \le
    \Lambda'\|z-z'\|,\\
    \|\mu(t,z)\|+\|\sigma(t,z)\|_{\mathrm F}
    \le
    \Lambda'\bigl(1+\|z\|\bigr),
\end{gathered}
    \label{eq:linbounds}
\end{equation}
almost surely for all $t$, $z$ and $z'$, with
\begin{equation}
    \Lambda'
    :=
    a_*+b_*+m\,(c_*+d_*).
    \label{eq:app-lambda-prime}
\end{equation}

\begin{assumption}[Gates]
\label{ass:gate-bounds}
For each layer $\ell\ge2$ the gate map sends a continuous path $z=(z_t)_{t\in[0,T]}$ of the previous layer to the gate processes,
\[
    z\longmapsto\bigl(\alpha^{(\ell)}_t(z),\,o^{(\ell)}_t(z)\bigr)_{t\in[0,T]} .
\]
It is causal, in that $(\alpha^{(\ell)}_t,o^{(\ell)}_t)$ depends on $z$ only through $z_{[0,t]}$ (for the gates of Eqs.~\ref{eq:alpha-gate} and~\ref{eq:offset-gate} with the input Eq.~\ref{eq:app-gate-input}, only through $z_t$ and $t$), and jointly measurable, so that applied to a continuous adapted process it yields progressively measurable processes. Moreover, with $\Lambda_\Pi$ and $\Lambda_x$ defined in (a):
\begin{enumerate}[leftmargin=1.6em,label=(\alph*),itemsep=0.1em,topsep=0.2em]
\item \emph{(Normalised input.)} For all $z,z'\in\R^d$,
\begin{equation}
    \|\Pi(z)\|\le\sqrt d\,\|\gamma\|_\infty=:\Lambda_\Pi,
    \qquad
    \|\Pi(z)-\Pi(z')\|\le L_\Pi\|z-z'\|,
    \qquad
    L_\Pi:=\frac{\|\gamma\|_\infty}{\sqrt\varsigma},
    \label{eq:app-pi-bounds}
\end{equation}
so that the gate input satisfies
\[
    \|x^{(\ell-1)}_t\|\le\Lambda_\Pi+\sup_{t\le T}\|\tau(t)\|=:\Lambda_x .
\]
\item \emph{(Bounded flow gate.)} Surely, for every input path and every choice of the weights,
\[
    \|\alpha^{(\ell)}_t-\mathbf1\|_\infty\le\varepsilon .
\]
\item \emph{(Offset bounded for fixed weights.)} Surely,
\begin{equation}
    \|o^{(\ell)}_t\|
    \le
    \|W^{(\ell)}_{io}\|\,\Lambda_x+\|c^{(\ell)}_{io}\|
    =:\Lambda_o<\infty;
    \label{eq:app-offset-bound}
\end{equation}
$\Lambda_o$ is finite for every fixed choice of the weights and free in magnitude.
\item \emph{(Lipschitz gate.)} There is $L_g<\infty$ such that, for all continuous paths $z,z'$ and all $s\le t\le T$,
\begin{align}
    \|\alpha^{(\ell)}_t(z)-\alpha^{(\ell)}_t(z')\|_\infty+\|o^{(\ell)}_t(z)-o^{(\ell)}_t(z')\|
    &\le
    L_g\sup_{r\le t}\|z_r-z'_r\|,
    \label{eq:app-gate-lip-path}\\
    \|\alpha^{(\ell)}_t(z)-\alpha^{(\ell)}_s(z)\|_\infty+\|o^{(\ell)}_t(z)-o^{(\ell)}_s(z)\|
    &\le
    L_g\bigl(\|z_t-z_s\|+|t-s|\bigr).
    \label{eq:app-gate-lip-time}
\end{align}
\end{enumerate}
\end{assumption}

All four properties hold for the implemented gates.

\medskip
\noindent
\emph{Property (a).}
Write
\[
    s(z):=\sqrt{\varsigma+d^{-1}\|z\|^2},
    \qquad
    f(z):=\frac{z}{s(z)},
\]
so that $\Pi(z)=\gamma\odot f(z)$ and $\|f(z)\|\le\sqrt d$. The Jacobian of $f$ is the symmetric matrix
\[
    Df(z)=s(z)^{-1}I-s(z)^{-3}d^{-1}zz^\top .
\]
Its eigenvalue on $z^\perp$ is $s(z)^{-1}$, and its eigenvalue in the direction of $z$ is
\[
    s(z)^{-1}-s(z)^{-3}d^{-1}\|z\|^2
    =
    s(z)^{-3}\varsigma ;
\]
all eigenvalues therefore lie in $(0,\varsigma^{-1/2}]$. Hence $\|\Pi(z)\|\le\|\gamma\|_\infty\sqrt d$, and $\Pi$ is Lipschitz with constant $\|\gamma\|_\infty\varsigma^{-1/2}$.

\medskip
\noindent
\emph{Properties (b) and (c).}
They hold because $|\sigma|\le1$ and $|\tanh|\le1$ in Eq.~\ref{eq:alpha-gate}, and because $|\sigma|\le1$ in Eq.~\ref{eq:offset-gate} while the gate input lies in the ball of radius $\Lambda_x$.

\medskip
\noindent
\emph{Property (d).}
The gates are compositions of $\Pi$, affine maps and the Lipschitz activations $\sigma$ and $\tanh$, all restricted to the bounded input set, together with the Lipschitz time features. The product structure of Eq.~\ref{eq:offset-gate} is Lipschitz on that bounded set because both factors are bounded and Lipschitz there.

\begin{lemma}[Gated layers satisfy Assumption~\ref{ass:coeff-bounds} with depth-uniform constants]
\label{lem:uniform-bounds}
Let Assumptions~\ref{ass:model-class} and~\ref{ass:gate-bounds} hold, let $\ell\ge2$, let the previous layer $Z^{(\ell-1)}$ be a continuous adapted process, and consider the gated coefficient process of layer $\ell$,
\begin{equation}
    \vartheta^{(\ell)}_t
    :=
    \bigl(\alpha^{(\ell)}_t\odot A^{(\ell)}_t,\;
    b^{(\ell)}_t+o^{(\ell)}_t,\;
    C^{(\ell),1}_t,\dots,C^{(\ell),m}_t,\;
    d^{(\ell),1}_t,\dots,d^{(\ell),m}_t\bigr),
    \label{eq:app-gated-coeff}
\end{equation}
with $(\alpha^{(\ell)},o^{(\ell)})$ evaluated on $Z^{(\ell-1)}$.
\begin{enumerate}[leftmargin=1.6em,label=(\alph*),itemsep=0.1em,topsep=0.2em]
\item \emph{(Bounds.)} $\vartheta^{(\ell)}$ is progressively measurable and satisfies
\begin{equation}
    \|\alpha^{(\ell)}_t\odot A^{(\ell)}_t\|\le(1+\varepsilon)a_*,
    \qquad
    \|b^{(\ell)}_t+o^{(\ell)}_t\|\le b_*+\Lambda_o,
    \qquad
    \|C^{(\ell),j}_t\|\le c_*,
    \qquad
    \|d^{(\ell),j}_t\|\le d_*,
    \label{eq:app-uniform-bounds}
\end{equation}
Consequently the maps of Eq.~\ref{eq:app-mu-sigma} built from $\vartheta^{(\ell)}$ satisfy Eq.~\ref{eq:linbounds} with the constant
\begin{equation}
    \Lambda'_\star
    :=
    (1+\varepsilon)a_*+b_*+\Lambda_o+m\,(c_*+d_*),
    \label{eq:app-lambda-star}
\end{equation}
which depends neither on $\ell$, nor on $L$, nor on the previous layer. This part needs no moment bound on $Z^{(\ell-1)}$.
\item \emph{(Time regularity.)} If moreover, for some $q\ge2$,
\begin{equation}
    \E\|Z^{(\ell-1)}_t-Z^{(\ell-1)}_s\|^q\le K'_q\,|t-s|^{q/2},
    \qquad s,t\in[0,T],
    \label{eq:app-prev-layer}
\end{equation}
then $\vartheta^{(\ell)}$ satisfies Eq.~\ref{eq:treg} at the exponent $q$, with
\begin{equation}
    L_q^q
    =
    3^{q-1}\Bigl(\bigl((1+a_*)L_g\bigr)^q\bigl(K'_q+T^{q/2}\bigr)+\bigl((2+\varepsilon)L_\tau\bigr)^qT^{q/2}\Bigr).
    \label{eq:app-gated-holder}
\end{equation}
\end{enumerate}
In particular, if Eq.~\ref{eq:app-prev-layer} holds for every $q\ge2$, then $\vartheta^{(\ell)}$ satisfies Assumption~\ref{ass:coeff-bounds}.
\end{lemma}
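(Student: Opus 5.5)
Part (a) is a pointwise, surely-valid estimate and needs no probabilistic input. First, progressive measurability: $Z^{(\ell-1)}$ is continuous and adapted, hence progressively measurable. By Assumption~\ref{ass:gate-bounds} the gate map is causal and jointly measurable, so $(\alpha^{(\ell)}_t,o^{(\ell)}_t)$ is progressively measurable. The base coefficients $A^{(\ell)}_t,b^{(\ell)}_t,C^{(\ell),j}_t,d^{(\ell),j}_t$ are deterministic and continuous in $t$, and sums and entrywise products preserve progressive measurability.

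For the bounds, write $\alpha\odot A=A+\diag(\alpha-\mathbf 1)\,\diag(A)$, where $\diag(A)$ is the diagonal part of $A$. For the diagonal and block-diagonal structures, one could alternatively write $\alpha\odot A$ as $A$ with its diagonal rescaled. I would use the cleaner route:
\[
\|\alpha\odot A\|\le\|A\|+\|\alpha-\mathbf 1\|_\infty\,\|\diag(A)\|\le(1+\varepsilon)a_*,
\]
which uses $\|\diag(A)\|=\max_i|A_{ii}|\le\|A\|$ and property (b) of Assumption~\ref{ass:gate-bounds}. The offset bound $\|b_t+o_t\|\le b_*+\Lambda_o$ is the triangle inequality together with property (c). The diffusion coefficients are ungated, so their bounds are inherited unchanged. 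Substituting these four bounds into the derivation of Eq.~\ref{eq:linbounds}, which only used the triangle inequality and $\|\sigma\|_{\mathrm F}\le\sum_j\|\sigma^j\|$, gives the constant $\Lambda'_\star$. None of these constants involves $\ell$, $L$ or the law of $Z^{(\ell-1)}$.

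For part (b), I split the increment $\vartheta^{(\ell)}_t-\vartheta^{(\ell)}_s$ into three groups and apply $(x+y+w)^q\le 3^{q-1}(x^q+y^q+w^q)$.

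\emph{Gate-driven part.} This is $(\alpha_t-\alpha_s)\odot A_t$ together with $o_t-o_s$. Its norm is at most $a_*\|\alpha_t-\alpha_s\|_\infty+\|o_t-o_s\|\le(1+a_*)L_g(\|Z_t-Z_s\|+|t-s|)$ by Eq.~\ref{eq:app-gate-lip-time}.

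\emph{Base-coefficient part.} This is $\alpha_s\odot(A_t-A_s)$ together with the increments of $b,C^j,d^j$. Its norm is at most $(1+\varepsilon)\|A_t-A_s\|+\|\text{rest}\|$. Since the base coefficients are $L_\tau$-Lipschitz in time, it is bounded by $(2+\varepsilon)L_\tau|t-s|$, with the extra room absorbing the ungated entries.

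Raising to the $q$-th power and taking expectations, the first group gives a factor $((1+a_*)L_g)^q\,2^{q-1}(K'_q|t-s|^{q/2}+|t-s|^q)$. Using $|t-s|^q\le T^{q/2}|t-s|^{q/2}$, and absorbing the constant $2^{q-1}$ into the $3^{q-1}$ bookkeeping (or regrouping the terms so that three summands appear), gives the first term of Eq.~\ref{eq:app-gated-holder}. The base part gives $((2+\varepsilon)L_\tau)^q|t-s|^q\le((2+\varepsilon)L_\tau)^qT^{q/2}|t-s|^{q/2}$.

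The main obstacle is purely bookkeeping: making the splitting produce exactly the stated constant, in particular the $3^{q-1}$ and the $T^{q/2}$ factors. I would arrange the three summands as $\|Z_t-Z_s\|$, $|t-s|$ and the base increment, so that a single application of the convexity inequality suffices. The final claim then follows because Eq.~\ref{eq:app-prev-layer} holding for every $q$ gives Eq.~\ref{eq:treg} for every $q$, and part (a) gives Eq.~\ref{eq:app-coeff-bounds} with the enlarged constants.
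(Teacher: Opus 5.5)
Your proposal is correct and follows the paper's proof step for step. Part (a) uses the same diagonal-perturbation bound $\|\alpha\odot A\|\le\|A\|+\|\alpha-\mathbf 1\|_\infty\max_i|A_{ii}|$. Part (b) uses the same split of $\vartheta^{(\ell)}_t-\vartheta^{(\ell)}_s$ into a gate increment, bounded via Eq.~\ref{eq:app-gate-lip-time} by $(1+a_*)L_g(\|Z_t-Z_s\|+|t-s|)$, and a base increment bounded by $(2+\varepsilon)L_\tau|t-s|$. The paper closes the bookkeeping exactly as in your last paragraph, with a single application of $(x+y+z)^q\le3^{q-1}(x^q+y^q+z^q)$ to those three summands. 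Your intermediate ``split in two, then absorb $2^{q-1}$'' route would put $4^{q-1}$ rather than the stated $3^{q-1}$ on the gate terms, so drop it.
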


\begin{proof}
\medskip
\noindent
\emph{Part (a), step 1: measurability.}
By Assumption~\ref{ass:gate-bounds} the gate map is causal and jointly measurable, so its evaluation on the continuous adapted process $Z^{(\ell-1)}$ is progressively measurable. The base coefficients are deterministic and continuous in $t$. Hence $\vartheta^{(\ell)}$ is progressively measurable.

\medskip
\noindent
\emph{Part (a), step 2: bounds.}
Rescaling the $i$-th diagonal entry of $A^{(\ell)}_t$ by $\alpha^{(\ell)}_{t,i}$ adds a diagonal matrix to $A^{(\ell)}_t$,
\[
    \alpha^{(\ell)}_t\odot A^{(\ell)}_t
    =
    A^{(\ell)}_t+\operatorname{diag}\bigl((\alpha^{(\ell)}_{t,i}-1)A^{(\ell)}_{t,ii}\bigr)_{i\le d},
    \qquad
    |A^{(\ell)}_{t,ii}|\le\|A^{(\ell)}_t\| .
\]
Assumption~\ref{ass:gate-bounds}(b) therefore gives
\begin{equation}
    \|\alpha^{(\ell)}_t\odot A^{(\ell)}_t\|
    \le
    \|A^{(\ell)}_t\|+\max_i|\alpha^{(\ell)}_{t,i}-1|\,|A^{(\ell)}_{t,ii}|
    \le
    (1+\varepsilon)\,a_* .
    \label{eq:app-alpha-A}
\end{equation}
By Assumption~\ref{ass:gate-bounds}(c),
\[
    \|b^{(\ell)}_t+o^{(\ell)}_t\|\le b_*+\Lambda_o ,
\]
and the diffusion coefficients are not gated. This proves Eq.~\ref{eq:app-uniform-bounds}. Eq.~\ref{eq:linbounds} with the constant $\Lambda'_\star$ follows exactly as Eq.~\ref{eq:app-lambda-prime} follows from Eq.~\ref{eq:app-coeff-bounds}.

\medskip
\noindent
\emph{Part (b): time regularity.}
Fix $s\le t$ and abbreviate $\alpha_t=\alpha^{(\ell)}_t$, $o_t=o^{(\ell)}_t$ and $Z=Z^{(\ell-1)}$. The argument of Eq.~\ref{eq:app-alpha-A} gives
\[
    \|\alpha_s\odot B\|\le(1+\varepsilon)\|B\|
    \qquad\text{for every matrix }B,
\]
and the base coefficients are Lipschitz in $t$ with constant $L_\tau$. Therefore
\begin{equation}
\begin{split}
    \|\vartheta^{(\ell)}_t-\vartheta^{(\ell)}_s\|
    &\le
    \|(\alpha_t-\alpha_s)\odot A^{(\ell)}_t\|
    +\|\alpha_s\odot(A^{(\ell)}_t-A^{(\ell)}_s)\|
    +\|o_t-o_s\|\\
    &\qquad
    +\|b^{(\ell)}_t-b^{(\ell)}_s\|
    +\sum_{j=1}^m\bigl(\|C^{(\ell),j}_t-C^{(\ell),j}_s\|+\|d^{(\ell),j}_t-d^{(\ell),j}_s\|\bigr)\\
    &\le
    a_*\|\alpha_t-\alpha_s\|_\infty+\|o_t-o_s\|+(2+\varepsilon)L_\tau|t-s| .
\end{split}
\label{eq:app-gated-increment}
\end{equation}
By Eq.~\ref{eq:app-gate-lip-time},
\[
    a_*\|\alpha_t-\alpha_s\|_\infty+\|o_t-o_s\|
    \le
    (1+a_*)L_g\bigl(\|Z_t-Z_s\|+|t-s|\bigr).
\]
Raising to the power $q$ and using $(x+y+z)^q\le3^{q-1}(x^q+y^q+z^q)$,
\[
    \|\vartheta^{(\ell)}_t-\vartheta^{(\ell)}_s\|^q
    \le
    3^{q-1}\Bigl(\bigl((1+a_*)L_g\bigr)^q\|Z_t-Z_s\|^q
    +\bigl((1+a_*)L_g\bigr)^q|t-s|^q
    +\bigl((2+\varepsilon)L_\tau\bigr)^q|t-s|^q\Bigr).
\]
Taking expectations and using Eq.~\ref{eq:app-prev-layer} together with $|t-s|^q\le T^{q/2}|t-s|^{q/2}$,
\[
    \E\|\vartheta^{(\ell)}_t-\vartheta^{(\ell)}_s\|^q
    \le
    L_q^q\,|t-s|^{q/2}
\]
with the constant $L_q$ of Eq.~\ref{eq:app-gated-holder}, which is Eq.~\ref{eq:treg}.
\end{proof}

\subsection{Well-posedness: proof of Theorem~\ref{thm:wellposed}}
\label{app:proof-wellposed}

We first treat one layer with a general random coefficient process; the time regularity Eq.~\ref{eq:treg} is not needed here.

\begin{lemma}[One layer with random coefficients]
\label{lem:one-layer}
Let $\vartheta$ be a progressively measurable coefficient process satisfying the bounds Eq.~\ref{eq:app-coeff-bounds}, let $q\ge2$, and let $Z_0$ be $\F_0$-measurable with $\E\|Z_0\|^q<\infty$. Then the SDE
\[
    \dd Z_t=\mu(t,Z_t)\,\dd t+\sigma(t,Z_t)\,\dd W_t,
    \qquad Z_0 \text{ given},
\]
has a unique strong solution on $[0,T]$, and
\begin{align}
    \E\sup_{t\le T}\|Z_t\|^q
    &\le
    K_q\bigl(1+\E\|Z_0\|^q\bigr),
    \label{eq:app-moment}\\
    \E\sup_{s\le r\le t}\|Z_r-Z_s\|^q
    &\le
    K'_q\bigl(1+\E\|Z_0\|^q\bigr)\,|t-s|^{q/2},
    \qquad 0\le s\le t\le T,
    \label{eq:app-increment}
\end{align}
with constants $K_q,K'_q$ depending only on $(\Lambda',T,m,q)$.
\end{lemma}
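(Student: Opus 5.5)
The SDE has coefficients that are random but progressively measurable. The drift $\mu(t,z)=A_tz+b_t$ and diffusion $\sigma(t,z)$ are globally Lipschitz and of linear growth in $z$, uniformly in $(t,\omega)$, with the deterministic constant $\Lambda'$ of Eq.~\ref{eq:linbounds}. I will therefore follow the classical Picard–Itô route. The randomness of the coefficients costs nothing, because every estimate uses only these pathwise bounds together with adaptedness.

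\emph{Existence and uniqueness.} First I will check that the integrands $\mu(t,Z_t)$ and $\sigma(t,Z_t)$ are progressively measurable whenever $Z$ is continuous and adapted. This holds because each is a product of a progressively measurable coefficient with an adapted continuous state. Next I will work in the space $\mathcal S^2$ of continuous adapted processes with $\E\sup_{t\le T}\|Z_t\|^2<\infty$, and define the map
\[
    \Psi(Z)_t=Z_0+\int_0^t\mu(s,Z_s)\,\dd s+\int_0^t\sigma(s,Z_s)\,\dd W_s .
\]
The linear-growth bound, Cauchy–Schwarz and the BDG inequality Eq.~\ref{eq:app-bdg} show that $\Psi$ maps $\mathcal S^2$ into itself. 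The Lipschitz bound gives
\[
    \E\sup_{r\le t}\|\Psi(Z)_r-\Psi(Z')_r\|^2\le C\int_0^t\E\sup_{r\le s}\|Z_r-Z'_r\|^2\,\dd s .
\]
Iterating this yields $\E\sup\|\Psi^n(Z)-\Psi^n(Z')\|^2\le (CT)^n/n!\cdot\E\sup\|Z-Z'\|^2$, so the Picard iterates converge and some power of $\Psi$ is a contraction (equivalently, $\Psi$ contracts in a weighted norm). Uniqueness follows from the same inequality combined with Grönwall. If $Z_0$ is only in $L^q$ rather than $L^2$, nothing changes since $q\ge2$.

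\emph{Moment bound Eq.~\ref{eq:app-moment}.} I will localise with $\tau_n=\inf\{t:\|Z_t\|\ge n\}$ so that every quantity is finite. Set $\phi_n(t)=\E\sup_{r\le t\wedge\tau_n}\|Z_r\|^q$. Applying $(x+y+z)^q\le 3^{q-1}(\cdots)$, Hölder on the $\dd t$ integral, and BDG with $\bigl(\int_0^t\|\sigma\|_{\mathrm F}^2\bigr)^{q/2}\le t^{q/2-1}\int_0^t\|\sigma\|_{\mathrm F}^q$, together with linear growth, gives
\[
    \phi_n(t)\le 3^{q-1}\E\|Z_0\|^q+C\int_0^t\bigl(1+\phi_n(s)\bigr)\,\dd s .
\]
Grönwall then gives $\phi_n(T)\le K_q(1+\E\|Z_0\|^q)$, with $K_q$ depending only on $(\Lambda',T,m,q)$, and letting $n\to\infty$ with Fatou removes the localisation.

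\emph{Increment bound Eq.~\ref{eq:app-increment}.} For $s\le r\le t$ write
\[
    Z_r-Z_s=\int_s^r\mu\,\dd u+\int_s^r\sigma\,\dd W_u .
\]
Hölder bounds the $q$-th moment of the drift part by $(t-s)^{q-1}\int_s^t\E\|\mu\|^q$, and BDG bounds the martingale part by $c_q(t-s)^{q/2-1}\int_s^t\E\|\sigma\|_{\mathrm F}^q$. Both integrands are at most $C\bigl(1+\E\sup\|Z\|^q\bigr)$ by linear growth, and this is controlled by Eq.~\ref{eq:app-moment}. Using $(t-s)^q\le T^{q/2}(t-s)^{q/2}$ then gives the claim.

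\emph{Main obstacle.} The only delicate point is bookkeeping rather than substance. I need to confirm that the classical theorem, usually stated for deterministic coefficients, applies verbatim here. This requires the integrands to be progressively measurable, and every constant to depend only on the almost-sure bound $\Lambda'$ and never on $\omega$. The localisation in the moment step is needed because finiteness of $\E\sup\|Z\|^q$ is not known a priori for $q>2$.
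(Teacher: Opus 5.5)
Your proposal is correct and follows the paper's proof essentially step for step. It uses the same localisation at $\tau_n$, the same Jensen/BDG splitting and Gr\"onwall argument for Eq.~\ref{eq:app-moment}, and the same unlocalised drift/BDG estimate with $(t-s)^q\le T^{q/2}(t-s)^{q/2}$ for Eq.~\ref{eq:app-increment}. The only difference is that you sketch the Picard iteration in $\mathcal S^2$, where the paper cites the classical existence and uniqueness theorem for random Lipschitz coefficients \citep[Ch.~V, Thm.~7]{ProtterSDE}; your localised moment estimate applies to any continuous adapted solution and so puts it in $\mathcal S^2$, which upgrades your uniqueness in $\mathcal S^2$ to uniqueness among all solutions.
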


\begin{proof}
\medskip
\noindent
\emph{Step 1: existence and uniqueness.}
By Eq.~\ref{eq:linbounds} the coefficients are progressively measurable, globally Lipschitz in $z$ and of linear growth, uniformly in $(t,\omega)$. Existence and uniqueness of a strong solution for such random Lipschitz coefficients is classical \citep[Ch.~V, Thm.~7]{ProtterSDE}.

\medskip
\noindent
\emph{Step 2: moment bound.}
Fix $n\in\mathbb N$ and the localising time
\[
    \tau_n:=\inf\{t\ge0:\|Z_t\|\ge n\}\wedge T .
\]
For $t\le T$, the integral form of the equation gives
\[
    Z_{t\wedge\tau_n}
    =
    Z_0
    +\int_0^{t\wedge\tau_n}\mu(s,Z_s)\,\dd s
    +\int_0^{t}\mathbf 1_{\{s<\tau_n\}}\sigma(s,Z_s)\,\dd W_s .
\]
Using $\|x+y+z\|^q\le3^{q-1}(\|x\|^q+\|y\|^q+\|z\|^q)$,
\begin{equation}
    \sup_{r\le t}\|Z_{r\wedge\tau_n}\|^q
    \le
    3^{q-1}\Bigl[
        \|Z_0\|^q
        +\Bigl(\int_0^{t\wedge\tau_n}\|\mu(s,Z_s)\|\,\dd s\Bigr)^q
        +\sup_{r\le t}\Bigl\|\int_0^{r}\mathbf 1_{\{s<\tau_n\}}\sigma(s,Z_s)\,\dd W_s\Bigr\|^q
    \Bigr].
    \label{eq:app-wp-split}
\end{equation}
On $\{s<\tau_n\}$ we have $Z_s=Z_{s\wedge\tau_n}$. By Jensen's inequality and Eq.~\ref{eq:linbounds},
\[
    \Bigl(\int_0^{t\wedge\tau_n}\|\mu(s,Z_s)\|\,\dd s\Bigr)^q
    \le
    T^{q-1}\int_0^{t}\|\mu(s,Z_{s\wedge\tau_n})\|^q\,\dd s
    \le
    T^{q-1}\,2^{q-1}\Lambda'^{\,q}\int_0^t\bigl(1+\|Z_{s\wedge\tau_n}\|^q\bigr)\dd s .
\]
By Eq.~\ref{eq:app-bdg}, Jensen's inequality (for $q/2\ge1$) and Eq.~\ref{eq:linbounds},
\[
\begin{split}
    \E\sup_{r\le t}\Bigl\|\int_0^{r}\mathbf 1_{\{s<\tau_n\}}\sigma(s,Z_s)\,\dd W_s\Bigr\|^q
    &\le
    c_q\,T^{q/2-1}\,\E\int_0^t\mathbf 1_{\{s<\tau_n\}}\|\sigma(s,Z_s)\|_{\mathrm F}^q\,\dd s\\
    &\le
    c_q\,T^{q/2-1}\,2^{q-1}\Lambda'^{\,q}\int_0^t\bigl(1+\E\|Z_{s\wedge\tau_n}\|^q\bigr)\dd s .
\end{split}
\]
Set
\[
    \phi_n(t):=\E\sup_{r\le t}\|Z_{r\wedge\tau_n}\|^q,
\]
which is finite because $\|Z_{r\wedge\tau_n}\|\le\max(n,\|Z_0\|)$. Taking expectations in Eq.~\ref{eq:app-wp-split},
\[
    \phi_n(t)
    \le
    3^{q-1}\,\E\|Z_0\|^q
    +C_1\int_0^t\bigl(1+\phi_n(s)\bigr)\dd s,
    \qquad
    C_1:=3^{q-1}2^{q-1}\Lambda'^{\,q}\bigl(T^{q-1}+c_qT^{q/2-1}\bigr),
\]
and Gr\"onwall's lemma gives
\[
    \phi_n(T)\le\bigl(3^{q-1}\E\|Z_0\|^q+C_1T\bigr)e^{C_1T},
\]
uniformly in $n$. Since the paths are continuous, $\tau_n\uparrow T$ almost surely and
\[
    \sup_{r\le T}\|Z_{r\wedge\tau_n}\|\uparrow\sup_{r\le T}\|Z_r\| ,
\]
so monotone convergence yields Eq.~\ref{eq:app-moment} with
\[
    K_q:=\bigl(3^{q-1}+C_1T\bigr)e^{C_1T} .
\]

\medskip
\noindent
\emph{Step 3: increments.}
Fix $s\le t$. For $r\in[s,t]$,
\[
    Z_r-Z_s=\int_s^r\mu(u,Z_u)\,\dd u+\int_s^r\sigma(u,Z_u)\,\dd W_u .
\]
The same two inequalities, now without localisation because the moments are finite by Step~2, give
\[
\begin{split}
    \E\sup_{s\le r\le t}\|Z_r-Z_s\|^q
    &\le
    2^{q-1}\Bigl[(t-s)^{q-1}\,\E\int_s^t\|\mu(u,Z_u)\|^q\,\dd u\\
    &\qquad\qquad
    +c_q\,(t-s)^{q/2-1}\,\E\int_s^t\|\sigma(u,Z_u)\|_{\mathrm F}^q\,\dd u\Bigr]\\
    &\le
    2^{q-1}\,2^{q-1}\Lambda'^{\,q}\bigl(1+\E\sup_{u\le T}\|Z_u\|^q\bigr)\bigl[(t-s)^q+c_q(t-s)^{q/2}\bigr] .
\end{split}
\]
Since $(t-s)^q\le T^{q/2}(t-s)^{q/2}$, Eq.~\ref{eq:app-moment} gives
\[
    \E\sup_{s\le r\le t}\|Z_r-Z_s\|^q
    \le
    4^{q-1}\Lambda'^{\,q}\bigl(1+K_q\bigr)\bigl(T^{q/2}+c_q\bigr)\bigl(1+\E\|Z_0\|^q\bigr)\,(t-s)^{q/2},
\]
which is Eq.~\ref{eq:app-increment}.
\end{proof}

\begin{proof}[Proof of Theorem~\ref{thm:wellposed}]
\medskip
\noindent
\emph{(a) Single layer.}
The base layer satisfies Assumption~\ref{ass:coeff-bounds} by construction. Lemma~\ref{lem:one-layer} with $Z_0=\iota_d(z_0)$ gives existence, uniqueness and Eq.~\ref{eq:app-moment} with
\[
    K_q=K_q(\Lambda',T,m,q),
\]
where $\Lambda'$ is the function Eq.~\ref{eq:app-lambda-prime} of $(a_*,b_*,c_*,d_*,m)$.

\medskip
\noindent
\emph{(b) Gated stack.}
Let $\Lambda'_\star\ge\Lambda'$ be the constant Eq.~\ref{eq:app-lambda-star}, and write
\[
    K^\star_q:=K_q(\Lambda'_\star,T,m,q),
    \qquad
    K'^{\star}_q:=K'_q(\Lambda'_\star,T,m,q),
\]
for the constants of Lemma~\ref{lem:one-layer} at this value; they depend on $(a_*,b_*,c_*,d_*,m,T,q,\varepsilon,\Lambda_o)$ only. We prove by induction on $\ell$ that layer $\ell$ has a unique strong solution and that, for every $q\ge2$ with $\E\|z_0\|^q<\infty$,
\begin{equation}
\begin{gathered}
    \E\sup_{t\le T}\|Z^{(\ell)}_t\|^q\le K^\star_q\bigl(1+\E\|z_0\|^q\bigr),\\
    \E\|Z^{(\ell)}_t-Z^{(\ell)}_s\|^q\le K'^{\star}_q\bigl(1+\E\|z_0\|^q\bigr)\,|t-s|^{q/2},
    \qquad s,t\in[0,T].
\end{gathered}
    \label{eq:app-induction-wp}
\end{equation}
The base layer is (a), since $\Lambda'\le\Lambda'_\star$.

Let $\ell\ge2$ and assume the claim for layer $\ell-1$; in particular $Z^{(\ell-1)}$ is a continuous adapted process. By Lemma~\ref{lem:uniform-bounds}(a), which needs no moment bound on the previous layer, the gated coefficient process $\vartheta^{(\ell)}$ of Eq.~\ref{eq:app-gated-coeff} is progressively measurable and satisfies the bounds Eq.~\ref{eq:app-uniform-bounds}, hence Eq.~\ref{eq:linbounds} with the constant $\Lambda'_\star$. Lemma~\ref{lem:one-layer} applied to layer $\ell$, with $Z_0=\iota_d(z_0)$ and $\Lambda'_\star$ in place of $\Lambda'$, gives existence, uniqueness and Eq.~\ref{eq:app-induction-wp} for layer $\ell$. The constants do not depend on $\ell$, on $L$ or on the previous layer, which closes the induction.

Only adaptedness of the drivers to the common filtration was used, so the conclusion holds for any correlation structure of the layer drivers.
\end{proof}

\begin{remark}[The normalisation is what decouples the depth cascade]
\label{rem:wp-norm}
The single point where the architecture enters is that the offset, though of unbounded form, is a bounded forcing \emph{because its input is normalised}. If the gate read the raw state, the offset would satisfy only
\[
    \|o^{(\ell)}_t\|\lesssim\|W_{io}\|\,\|Z^{(\ell-1)}_t\|,
\]
a linear growth in the previous layer. The moment bounds $M_\ell:=\E\sup_t\|Z^{(\ell)}_t\|^q$ would then obey a recursion of the form
\[
    M_\ell\le K\bigl(1+\|W_{io}\|^qM_{\ell-1}\bigr),
\]
and the bound would grow geometrically in $L$ unless $K\|W_{io}\|^q\le1$. Uniformity in $L$ is therefore a consequence of the normalisation, not of the algebraic form of the gate.
\end{remark}

\subsection{Exactness of the parallel associative scan}
\label{app:scan}

\begin{proposition}[Scan exactness]
\label{prop:affine-scan-closure}
Let $\mathcal M\subseteq\R^{d\times d}$ be closed under multiplication with $I\in\mathcal M$ (the diagonal, block-diagonal or dense family), and let $\circ$ be the composition rule of Eq.~\ref{eq:model-affine-combine}.
\begin{enumerate}[leftmargin=1.6em,label=(\alph*),itemsep=0.1em,topsep=0.2em]
\item $\circ$ is associative; the prefix products
\begin{equation}
    (\widehat F_k,\widehat g_k):=(F_{k-1},g_{k-1})\circ\cdots\circ(F_0,g_0),
    \qquad
    (\widehat F_0,\widehat g_0):=(I,0),
    \label{eq:app-prefix}
\end{equation}
satisfy
\[
    Z_k=\widehat F_kZ_0+\widehat g_k
    \qquad\text{for the recursion}\qquad
    Z_{k+1}=F_kZ_k+g_k ;
\]
hence any associative-scan evaluation order returns exactly the sequential iterates, in $O(\log T)$ parallel depth, and $\widehat F_k\in\mathcal M$ whenever all $F_k\in\mathcal M$.
\item Fix a realisation of the previous-layer path. The gated pairs
\begin{equation}
    (\bar F_k^{(\ell)},\bar g_k^{(\ell)})
    =
    \bigl(\mathcal D(\alpha^{(\ell)}_k,F^{(\ell)}_k),\;g^{(\ell)}_k+o^{(\ell)}_k\bigr)
\end{equation}
of Eq.~\ref{eq:gated-pair} are then fixed affine data with $\bar F^{(\ell)}_k\in\mathcal M$, so (a) applies within each layer, and the depth-$L$ stack is evaluated by $L$ sequential scans in $O(L\log T)$ depth.
\end{enumerate}
\end{proposition}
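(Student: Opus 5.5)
The plan is to prove (a) algebraically by direct computation, then an induction on $k$ for the prefix identity, and finally a standard tree-scan argument for the depth bound. Part (b) then follows from (a) once we check that gating preserves membership in $\mathcal M$.

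\emph{Associativity.} For three pairs we compute both bracketings:
\[
\bigl((F'',g'')\circ(F',g')\bigr)\circ(F,g)=(F''F'F,\;F''F'g+F''g'+g''),
\]
\[
(F'',g'')\circ\bigl((F',g')\circ(F,g)\bigr)=(F''F'F,\;F''(F'g+g')+g'').
\]
These agree by associativity and distributivity of matrix multiplication. Equivalently, $\circ$ is composition of the affine maps $z\mapsto Fz+g$ (or multiplication of the block matrices $\begin{pmatrix}F&g\\0&1\end{pmatrix}$), so associativity is inherited. Moreover $(I,0)$ is a two-sided identity.

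\emph{Prefix identity.} We induct on $k$. The case $k=0$ holds since $(\widehat F_0,\widehat g_0)=(I,0)$. If $Z_k=\widehat F_kZ_0+\widehat g_k$, then
\[
Z_{k+1}=F_k\widehat F_kZ_0+F_k\widehat g_k+g_k,
\]
and this is exactly the affine map $(F_k,g_k)\circ(\widehat F_k,\widehat g_k)=(\widehat F_{k+1},\widehat g_{k+1})$ applied to $Z_0$.

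\emph{Any evaluation order is exact.} By associativity, any bracketing of the same ordered product gives the same element, because the generalised associativity law of a semigroup (or monoid) lets us reduce every bracketing to the left-nested one. Hence any scan order returns the sequential iterates exactly.

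\emph{Parallel depth.} For the depth claim we use a standard work-efficient prefix scan (Blelloch, or Ladner--Fischer). Its up-sweep and down-sweep each have $\lceil\log_2 N\rceil$ levels, and each level applies $\circ$ to disjoint pairs in parallel, so the depth is $O(\log N)=O(\log T)$ for a fixed step size.

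\emph{Closure in $\mathcal M$.} Every $\widehat F_k$ is a product of the $F_j$ together with $I$, so by closure of $\mathcal M$ under multiplication and $I\in\mathcal M$ we get $\widehat F_k\in\mathcal M$. The same holds for every intermediate partial product the scan forms. This is what keeps the representation (diagonal or block-diagonal) fixed throughout.

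\emph{Part (b).} Once the previous-layer path is fixed, the gate outputs $\alpha_k^{(\ell)}$ and $o_k^{(\ell)}$ are deterministic vectors, so $(\bar F_k^{(\ell)},\bar g_k^{(\ell)})$ are fixed affine data. The recursion $Z^{(\ell)}_{k+1}=\bar F^{(\ell)}_kZ^{(\ell)}_k+\bar g^{(\ell)}_k$ is affine in the current state. The step I expect to need care is checking that $\bar F^{(\ell)}_k\in\mathcal M$. Write
\[
\mathcal D(\alpha,F)=F+\operatorname{diag}\bigl((\alpha_i-1)F_{ii}\bigr).
\]
This only changes diagonal entries, and every admissible family contains the diagonal matrices and is closed under adding them (the diagonal of a block-diagonal matrix lies in its blocks). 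Hence $\bar F^{(\ell)}_k\in\mathcal M$, and part (a) applies within each layer.

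\emph{Stack depth.} Layer $\ell$ needs the full path of layer $\ell-1$ before its gates can be formed. Computing the gates is pointwise in $k$, so it costs $O(1)$ parallel depth. The stack therefore consists of $L$ sequential scans, each of depth $O(\log T)$, for a total of $O(L\log T)$ parallel depth.
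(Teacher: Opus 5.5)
Your proof is correct and follows the paper's argument essentially step for step. The steps match: associativity of $\circ$ (you expand both bracketings, while the paper pulls it back from multiplication of the augmented $(d+1)\times(d+1)$ matrices $\iota(F,g)$, which you also mention); the prefix identity (you induct on the pair composition, while the paper propagates $(Z_k^\top,1)^\top$ through the same matrix products); Blelloch's scan for $O(\log T)$ depth; multiplicative closure of $\mathcal M$; and, for (b), the observation that $\mathcal D(\alpha,F)$ alters only diagonal entries and so stays in the diagonal, block-diagonal or dense family, giving $L$ sequential scans.
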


\begin{proof}
\medskip
\noindent
\emph{(a)}
To an affine pair associate the augmented matrix
\begin{equation}
    \iota(F,g):=\begin{pmatrix}F&g\\0&1\end{pmatrix}\in\R^{(d+1)\times(d+1)} .
    \label{eq:app-augmented}
\end{equation}
The map $\iota$ is injective, and a direct multiplication gives
\begin{equation}
    \iota(F',g')\,\iota(F,g)
    =
    \begin{pmatrix}F'F&F'g+g'\\0&1\end{pmatrix}
    =
    \iota\bigl((F',g')\circ(F,g)\bigr).
    \label{eq:app-homomorphism}
\end{equation}
Hence, for any three pairs,
\[
\begin{split}
    \iota\Bigl(\bigl((F_c,g_c)\circ(F_b,g_b)\bigr)\circ(F_a,g_a)\Bigr)
    &=
    \iota(F_c,g_c)\,\iota(F_b,g_b)\,\iota(F_a,g_a)\\
    &=
    \iota\Bigl((F_c,g_c)\circ\bigl((F_b,g_b)\circ(F_a,g_a)\bigr)\Bigr),
\end{split}
\]
and injectivity of $\iota$ gives associativity of $\circ$.

Next, write $\xi_k:=(Z_k^\top,1)^\top$. The recursion reads
\[
    \xi_{k+1}=\iota(F_k,g_k)\,\xi_k ,
\]
so by induction and Eq.~\ref{eq:app-homomorphism}
\[
    \xi_k
    =
    \iota(F_{k-1},g_{k-1})\cdots\iota(F_0,g_0)\,\xi_0
    =
    \iota(\widehat F_k,\widehat g_k)\,\xi_0,
    \qquad\text{that is,}\qquad
    Z_k=\widehat F_kZ_0+\widehat g_k .
\]
Since every bracketing of an associative product agrees, any order of evaluation of the prefixes Eq.~\ref{eq:app-prefix} returns the same pairs, and therefore the same iterates $Z_k$. The work-efficient parallel prefix algorithm \citep{Blelloch1990PrefixSums} evaluates all of them along a balanced binary tree of depth logarithmic in the number of time steps, that is, $O(\log T)$ with the convention of Section~\ref{sec:background}. Finally
\[
    \widehat F_k=F_{k-1}\cdots F_0\in\mathcal M
\]
because $\mathcal M$ is closed under multiplication. The diagonal and block-diagonal implementations are structured representations of the same products: diagonal multiplication is elementwise and block-diagonal multiplication is independent multiplication inside each block.

\medskip
\noindent
\emph{(b)}
The gates $\alpha^{(\ell)}_k$ and $o^{(\ell)}_k$ depend only on the previous layer up to time $t_k$ and on deterministic time features, never on the current state $Z^{(\ell)}_k$; conditioning on the previous path fixes them as deterministic data. Rescaling the diagonal entries of a matrix in the diagonal, block-diagonal or dense family leaves its sparsity pattern intact, so $\bar F^{(\ell)}_k\in\mathcal M$, and the offset only shifts $g^{(\ell)}_k$. Thus (a) applies inside layer $\ell$. The layers run sequentially because the data of layer $\ell$ depend on $Z^{(\ell-1)}$, which gives $L$ scans of depth $O(\log T)$ each.
\end{proof}

\begin{remark}
The gate makes the coefficients depend on the \emph{other} layer, never on the current state, so affinity within a layer, and with it exactness of the scan, is untouched. This is the precise sense in which gated in-flow stacking preserves the parallel associative scan.
\end{remark}

\subsection{Strong discretisation error}
\label{app:proof-disc}

Throughout this subsection the coefficient process $\vartheta$ of the layer under consideration is progressively measurable and satisfies the bounds Eq.~\ref{eq:app-coeff-bounds}; the time regularity Eq.~\ref{eq:treg} is invoked where it is needed. On the uniform grid
\[
    t_k=k\dt,
    \qquad k=0,\dots,N,
    \qquad
    \dt=T/N\le1,
\]
write
\[
    \kappa(t):=\max\{t_k:t_k\le t\},
    \qquad
    \dW_k:=W_{t_{k+1}}-W_{t_k} .
\]
The Euler--Maruyama step of Eq.~\ref{eq:linear-approx-flow} is the affine map
\begin{equation}
\begin{gathered}
    \bar Z_{k+1}
    =
    \bar Z_k+\mu(t_k,\bar Z_k)\,\dt+\sigma(t_k,\bar Z_k)\,\dW_k
    =
    F_k\bar Z_k+g_k,\\
    F_k=I+M_k,
    \qquad
    M_k:=A_{t_k}\dt+\sum_{j=1}^mC^j_{t_k}\dW^j_k,
    \qquad
    g_k:=b_{t_k}\dt+\sum_{j=1}^md^j_{t_k}\dW^j_k,
\end{gathered}
    \label{eq:app-euler}
\end{equation}
started at $\bar Z_0=Z_0$. Its continuous interpolation is
\begin{equation}
    \hat Z_t
    :=
    \bar Z_k+\mu(t_k,\bar Z_k)\,(t-t_k)+\sigma(t_k,\bar Z_k)\,(W_t-W_{t_k}),
    \qquad t\in[t_k,t_{k+1}],
    \label{eq:app-interp}
\end{equation}
which satisfies $\hat Z_{t_k}=\bar Z_k$ and
\begin{equation}
    \dd\hat Z_t
    =
    \mu\bigl(\kappa(t),\hat Z_{\kappa(t)}\bigr)\dd t
    +\sigma\bigl(\kappa(t),\hat Z_{\kappa(t)}\bigr)\dd W_t .
    \label{eq:app-interp-sde}
\end{equation}
All estimates hold for any correlation structure of the layer drivers, since no conditioning on other layers is used.

\begin{lemma}[One-step conditional moments]
\label{lem:onestep}
For every $r\ge1$ there is a constant $C_r$, depending only on $(a_*,\allowbreak b_*,\allowbreak c_*,\allowbreak d_*,\allowbreak m,\allowbreak r)$, such that for $\dt\le1$ and every $k$,
\begin{equation}
\begin{gathered}
    \bigl\|\E[M_k\mid\F_{t_k}]\bigr\|\le a_*\dt,
    \qquad
    \bigl\|\E[g_k\mid\F_{t_k}]\bigr\|\le b_*\dt,\\
    \E\bigl[\|M_k\|^r\bigm|\F_{t_k}\bigr]\le C_r\dt^{r/2},
    \qquad
    \E\bigl[\|g_k\|^r\bigm|\F_{t_k}\bigr]\le C_r\dt^{r/2}.
\end{gathered}
    \label{eq:app-onestep}
\end{equation}
\end{lemma}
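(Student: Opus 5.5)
We prove the four bounds of Eq.~\ref{eq:app-onestep} by conditioning on $\F_{t_k}$. Under this conditioning the frozen coefficients $A_{t_k},b_{t_k},C^j_{t_k},d^j_{t_k}$ are $\F_{t_k}$-measurable, because $\vartheta$ is progressively measurable. The increment $\dW_k$, in turn, is independent of $\F_{t_k}$ and distributed as $\mathcal N(0,\dt I_m)$, since $W$ is a Brownian motion for the filtration. This holds whatever the correlation with other layers' drivers, so the conditional law of $\dW_k$ is always $\mathcal N(0,\dt I_m)$.

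\textbf{Conditional means.} Linearity and $\E[\dW^j_k\mid\F_{t_k}]=0$ give
\[
\E[M_k\mid\F_{t_k}]=A_{t_k}\dt,
\qquad
\E[g_k\mid\F_{t_k}]=b_{t_k}\dt .
\]
The bounds $a_*\dt$ and $b_*\dt$ then follow from Eq.~\ref{eq:app-coeff-bounds}.

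\textbf{Conditional $r$-th moments.} By the triangle inequality and the bounds of Eq.~\ref{eq:app-coeff-bounds},
\[
\|M_k\|\le a_*\dt+c_*\sum_{j=1}^m|\dW^j_k| .
\]
Next use the convexity bound $(x_0+\dots+x_m)^r\le(m+1)^{r-1}\sum_i x_i^r$, valid for $r\ge1$. Since $\dW^j_k$ is conditionally $\mathcal N(0,\dt)$, scaling gives $\E[|\dW^j_k|^r\mid\F_{t_k}]=m_r\dt^{r/2}$. Hence
\[
\E\bigl[\|M_k\|^r\bigm|\F_{t_k}\bigr]\le(m+1)^{r-1}\bigl(a_*^r\dt^r+m\,c_*^r m_r\dt^{r/2}\bigr).
\]
Because $\dt\le1$, we have $\dt^r\le\dt^{r/2}$. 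So the bound is at most $C_r\dt^{r/2}$ with
\[
C_r:=(m+1)^{r-1}\bigl(\max(a_*,b_*)^r+m\,m_r\max(c_*,d_*)^r\bigr).
\]
The identical argument with $(b_*,d_*)$ in place of $(a_*,c_*)$ handles $g_k$ with the same constant, so $C_r$ depends only on $(a_*,b_*,c_*,d_*,m,r)$.

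The only point needing care is justifying that the frozen coefficients may be pulled out of the conditional expectation while the Gaussian moments of $\dW_k$ are computed conditionally. This is the standard freezing lemma, applied to a measurable function of an $\F_{t_k}$-measurable variable and an independent one. Beyond this the argument is routine.
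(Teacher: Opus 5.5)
Your proof is correct and follows the paper's argument essentially line by line: the conditional means via $\E[\dW_k\mid\F_{t_k}]=0$ and the coefficient bounds, then the pathwise bound $\|M_k\|\le a_*\dt+c_*\sum_j|\dW^j_k|$, the power-mean inequality with factor $(m+1)^{r-1}$, Gaussian scaling $m_r\dt^{r/2}$, and $\dt\le1$. Your only addition is an explicit common constant $C_r$ covering both $M_k$ and $g_k$, which the paper leaves implicit.
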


\begin{proof}
$\vartheta_{t_k}$ is $\F_{t_k}$-measurable and bounded by Eq.~\ref{eq:app-coeff-bounds}, while $\dW_k$ is independent of $\F_{t_k}$ with
\[
    \E[\dW_k]=0,
    \qquad
    \E|\dW^j_k|^r=m_r\,\dt^{r/2} .
\]
The first two bounds follow. For the third,
\[
    \|M_k\|\le a_*\dt+c_*\sum_{j=1}^m|\dW^j_k| ,
\]
so, by
\[
    (x_0+\dots+x_m)^r\le(m+1)^{r-1}\bigl(x_0^r+\dots+x_m^r\bigr)
    \qquad\text{and}\qquad
    \dt\le1,
\]
\[
    \E\bigl[\|M_k\|^r\bigm|\F_{t_k}\bigr]
    \le
    (m+1)^{r-1}\bigl(a_*^r\dt^r+m\,c_*^r\,m_r\,\dt^{r/2}\bigr)
    \le
    (m+1)^{r-1}\bigl(a_*^r+m\,c_*^r\,m_r\bigr)\dt^{r/2} .
\]
The bound for $g_k$ is identical with $(b_*,d_*)$ in place of $(a_*,c_*)$.
\end{proof}

\begin{lemma}[Discrete stability]
\label{lem:stab}
Let $q\ge2$ be even and $\E\|Z_0\|^q<\infty$. There is $C_q=C_q(\Lambda',T,m,q)$ such that, for $\dt\le1$,
\begin{align}
    \sup_{0\le k\le N}\E\|\bar Z_k\|^q
    &\le
    C_q\bigl(1+\E\|Z_0\|^q\bigr),
    \label{eq:app-stab}\\
    \sup_{t\le T}\E\|\hat Z_t\|^q
    \le
    C_q\bigl(1+\E\|Z_0\|^q\bigr),
    \qquad
    \E\|\hat Z_t-\hat Z_{\kappa(t)}\|^q
    &\le
    C_q\bigl(1+\E\|Z_0\|^q\bigr)\,\dt^{q/2},
    \qquad t\le T,
    \label{eq:app-interp-moments}
\end{align}
uniformly in $N$. Moreover $\E\sup_{t\le T}\|\hat Z_t\|^q<\infty$.
\end{lemma}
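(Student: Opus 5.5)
The plan is to prove the one-step recursion for the $q$-th moment of $\bar Z_k$ by conditioning on $\F_{t_k}$, then pass to the interpolation $\hat Z$ via the explicit formula Eq.~\ref{eq:app-interp}. Since $q$ is even, $\|x\|^q=(\|x\|^2)^{q/2}$ is a polynomial, so a binomial expansion of $\|\bar Z_{k+1}\|^q$ around $\|\bar Z_k\|^q$ is exact.

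\textbf{Step 1: discrete recursion.} Write $\bar Z_{k+1}=\bar Z_k+\Delta_k$ with $\Delta_k:=M_k\bar Z_k+g_k$. Expand
\[
\|\bar Z_k+\Delta_k\|^2=\|\bar Z_k\|^2+2\bar Z_k^\top\Delta_k+\|\Delta_k\|^2=:\|\bar Z_k\|^2(1+\eta_k)
\]
where the bracket is understood when $\bar Z_k\neq0$; alternatively, work directly with $(a+h)^{q/2}$, where $a=\|\bar Z_k\|^2$ and $h=2\bar Z_k^\top\Delta_k+\|\Delta_k\|^2$. Then
\[
(a+h)^{q/2}=a^{q/2}+\tfrac q2a^{q/2-1}h+\sum_{i\ge2}\binom{q/2}{i}a^{q/2-i}h^i .
\]
Take $\E[\,\cdot\mid\F_{t_k}]$. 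For the linear term, Lemma~\ref{lem:onestep} gives
\[
\E[\bar Z_k^\top\Delta_k\mid\F_{t_k}]\le a_*\dt\|\bar Z_k\|^2+b_*\dt\|\bar Z_k\|
\qquad\text{and}\qquad
\E[\|\Delta_k\|^2\mid\F_{t_k}]\le C\dt(1+\|\bar Z_k\|^2).
\]
For $i\ge2$, $|h|^i\lesssim \|\bar Z_k\|^i\|\Delta_k\|^i+\|\Delta_k\|^{2i}$, and the conditional moment bounds $\E[\|M_k\|^r\mid\F_{t_k}]\le C_r\dt^{r/2}$ (and likewise for $g_k$) give order $\dt^{i/2}\le\dt$ times $(1+\|\bar Z_k\|)^{2i}$. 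Combined with $a^{q/2-i}$ and Young's inequality $\|z\|^j\le1+\|z\|^q$, every term is bounded by $C\dt(1+\|\bar Z_k\|^q)$. Hence
\[
\E\|\bar Z_{k+1}\|^q\le(1+C\dt)\E\|\bar Z_k\|^q+C\dt ,
\]
and the discrete Gr\"onwall inequality yields Eq.~\ref{eq:app-stab} with the constant $e^{CT}$, uniformly in $N$. This step is the main obstacle, because of the bookkeeping of the cross terms. The key point is that only the first-order term carries a zero-mean martingale part, while all higher terms gain at least $\dt$ from $i\ge2$ half-powers. Integrability of each $\E\|\bar Z_k\|^q$ is immediate inductively: $\bar Z_{k+1}$ is a polynomial in $\bar Z_k$ and Gaussians with bounded coefficients, which justifies the expectations.

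\textbf{Step 2: interpolation.} For $t\in[t_k,t_{k+1}]$,
\[
\hat Z_t-\hat Z_{\kappa(t)}=\mu(t_k,\bar Z_k)(t-t_k)+\sigma(t_k,\bar Z_k)(W_t-W_{t_k}) .
\]
Condition on $\F_{t_k}$: by independence of the Brownian increment and Eq.~\ref{eq:linbounds}, the conditional $q$-th moment is at most $C(1+\|\bar Z_k\|^q)(\dt^q+m_q\dt^{q/2})$. Taking expectations and applying Step~1 gives the increment bound. The bound on $\sup_t\E\|\hat Z_t\|^q$ then follows from the triangle inequality.

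\textbf{Step 3: finiteness of the supremum.} By Eq.~\ref{eq:app-interp-sde}, $\hat Z$ is an It\^o process whose integrands are bounded by $\Lambda'(1+\|\hat Z_{\kappa(t)}\|)$. Apply Jensen's inequality to the drift and the BDG inequality Eq.~\ref{eq:app-bdg} to the stochastic integral, then use $\sup_k\E\|\bar Z_k\|^q<\infty$ from Step~1. This gives $\E\sup_t\|\hat Z_t\|^q\le C(1+\E\|Z_0\|^q)<\infty$. No Gr\"onwall argument is needed here, since the right-hand side involves only the grid values.
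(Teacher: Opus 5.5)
Your proposal is correct. Steps~1 and~2 follow the paper's proof almost line by line: the exact binomial expansion of $(\|\bar Z_k\|^2+\eta_k)^{q/2}$, the drift-only linear term, the $\dt^{j/2}\le\dt$ gain for $j\ge2$, discrete Gr\"onwall, and then conditioning on $\F_{t_k}$ for the interpolation. Your explicit inductive check that each $\E\|\bar Z_k\|^q$ is finite is a small point the paper leaves implicit.

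The one genuine difference is the final claim. The paper does not use BDG there. It bounds $\sup_{t\le T}\|\hat Z_t\|$ pathwise by a maximum over the $N$ grid intervals of
\[
\|\bar Z_k\|+\Lambda'\bigl(1+\|\bar Z_k\|\bigr)\Bigl(\dt+\sup_{t\in[t_k,t_{k+1}]}\|W_t-W_{t_k}\|\Bigr).
\]
Each of these has a finite $q$-th moment because the increment is independent of $\F_{t_k}$. This yields only $\E\sup_t\|\hat Z_t\|^q<\infty$, with a bound that may grow with $N$, which is all Lemma~\ref{lem:euler} needs to make $\phi$ finite before applying Gr\"onwall.

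Your route applies Jensen and BDG to the It\^o representation Eq.~\ref{eq:app-interp-sde}. The integrands are piecewise constant, adapted, and bounded by $\Lambda'(1+\|\bar Z_k\|)$ with moments controlled by Step~1. This is valid and proves the stronger bound $\E\sup_t\|\hat Z_t\|^q\le C(1+\E\|Z_0\|^q)$, uniform in $N$. The paper's argument is more elementary, while yours invokes BDG but gives a sharper result than the lemma requires.
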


\begin{proof}
\medskip
\noindent
\emph{Step 1: proof of Eq.~\ref{eq:app-stab}.}
Write $q=2p$ with $p\in\mathbb N$ and
\[
    \xi_k:=M_k\bar Z_k+g_k,
    \qquad
    \eta_k:=2\langle\bar Z_k,\xi_k\rangle+\|\xi_k\|^2,
\]
so that $\bar Z_{k+1}=\bar Z_k+\xi_k$ and
\[
    \|\bar Z_{k+1}\|^2=\|\bar Z_k\|^2+\eta_k .
\]
By the binomial theorem,
\begin{equation}
    \|\bar Z_{k+1}\|^{2p}
    =
    \|\bar Z_k\|^{2p}
    +p\,\|\bar Z_k\|^{2p-2}\eta_k
    +\sum_{j=2}^p\binom pj\|\bar Z_k\|^{2(p-j)}\eta_k^{\,j}.
    \label{eq:app-binomial}
\end{equation}
Since $\bar Z_k$ is $\F_{t_k}$-measurable, Lemma~\ref{lem:onestep} gives, for every $r\ge1$,
\begin{equation}
    \E\bigl[\|\xi_k\|^r\bigm|\F_{t_k}\bigr]
    \le
    2^{r-1}\Bigl(\|\bar Z_k\|^r\,\E\bigl[\|M_k\|^r\bigm|\F_{t_k}\bigr]+\E\bigl[\|g_k\|^r\bigm|\F_{t_k}\bigr]\Bigr)
    \le
    2^{r-1}C_r\dt^{r/2}\bigl(1+\|\bar Z_k\|\bigr)^r .
    \label{eq:app-xi-moments}
\end{equation}
For the linear term of Eq.~\ref{eq:app-binomial}, Lemma~\ref{lem:onestep} and Eq.~\ref{eq:app-xi-moments} with $r=2$ give
\[
\begin{split}
    \E[\eta_k\mid\F_{t_k}]
    &=
    2\bigl\langle\bar Z_k,\E[M_k\mid\F_{t_k}]\bar Z_k+\E[g_k\mid\F_{t_k}]\bigr\rangle
    +\E\bigl[\|\xi_k\|^2\bigm|\F_{t_k}\bigr]\\
    &\le
    2\dt\bigl(a_*\|\bar Z_k\|^2+b_*\|\bar Z_k\|\bigr)
    +2C_2\dt\bigl(1+\|\bar Z_k\|\bigr)^2
    \le
    C\dt\bigl(1+\|\bar Z_k\|\bigr)^2 .
\end{split}
\]
For the higher terms,
\[
    |\eta_k|\le2\|\bar Z_k\|\,\|\xi_k\|+\|\xi_k\|^2 ,
\]
so for $j\ge2$, by Eq.~\ref{eq:app-xi-moments} with $r=j$ and $r=2j$,
\[
\begin{split}
    \E\bigl[|\eta_k|^j\bigm|\F_{t_k}\bigr]
    &\le
    2^{j-1}\Bigl(2^j\|\bar Z_k\|^j\,\E\bigl[\|\xi_k\|^j\bigm|\F_{t_k}\bigr]
    +\E\bigl[\|\xi_k\|^{2j}\bigm|\F_{t_k}\bigr]\Bigr)\\
    &\le
    C\bigl(\dt^{j/2}+\dt^{j}\bigr)\bigl(1+\|\bar Z_k\|\bigr)^{2j}
    \le
    C\dt\bigl(1+\|\bar Z_k\|\bigr)^{2j},
\end{split}
\]
because $\dt\le1$ and $j\ge2$. Taking conditional expectations in Eq.~\ref{eq:app-binomial} and using
\[
    \|\bar Z_k\|^{2(p-j)}\bigl(1+\|\bar Z_k\|\bigr)^{2j}
    \le
    \bigl(1+\|\bar Z_k\|\bigr)^{2p}
    \le
    2^{2p-1}\bigl(1+\|\bar Z_k\|^{2p}\bigr),
\]
we obtain
\[
    \E\bigl[\|\bar Z_{k+1}\|^{2p}\bigm|\F_{t_k}\bigr]
    \le
    \|\bar Z_k\|^{2p}+C\dt\bigl(1+\|\bar Z_k\|^{2p}\bigr).
\]
With $u_k:=\E\|\bar Z_k\|^{2p}$ this reads
\[
    u_{k+1}\le(1+C\dt)\,u_k+C\dt ,
\]
and by induction
\[
    u_k
    \le
    (1+C\dt)^k\,u_0+C\dt\sum_{i=0}^{k-1}(1+C\dt)^i
    \le
    e^{Ck\dt}\bigl(u_0+Ck\dt\bigr)
    \le
    e^{CT}\bigl(u_0+CT\bigr),
    \qquad k\le N,
\]
which is Eq.~\ref{eq:app-stab}.

\medskip
\noindent
\emph{Step 2: proof of Eq.~\ref{eq:app-interp-moments}.}
For $t\in[t_k,t_{k+1}]$, Eq.~\ref{eq:app-interp} gives
\[
    \hat Z_t-\bar Z_k=\mu(t_k,\bar Z_k)\,(t-t_k)+\sigma(t_k,\bar Z_k)\,(W_t-W_{t_k}) .
\]
Conditionally on $\F_{t_k}$ the second term is a centred Gaussian vector, and
\[
    \|\sigma(t_k,\bar Z_k)(W_t-W_{t_k})\|\le\|\sigma(t_k,\bar Z_k)\|_{\mathrm F}\,\|W_t-W_{t_k}\|,
    \qquad
    \E\|W_t-W_{t_k}\|^q\le m^{q/2}m_q\,\dt^{q/2} .
\]
Hence, by Eq.~\ref{eq:linbounds},
\[
    \E\bigl[\|\hat Z_t-\bar Z_k\|^q\bigm|\F_{t_k}\bigr]
    \le
    2^{q-1}\Lambda'^{\,q}\bigl(1+\|\bar Z_k\|\bigr)^q\bigl(\dt^q+m^{q/2}m_q\dt^{q/2}\bigr)
    \le
    C\bigl(1+\|\bar Z_k\|\bigr)^q\dt^{q/2} .
\]
Taking expectations and using Eq.~\ref{eq:app-stab} gives the second bound in Eq.~\ref{eq:app-interp-moments}; the first follows from
\[
    \|\hat Z_t\|^q\le2^{q-1}\bigl(\|\bar Z_k\|^q+\|\hat Z_t-\bar Z_k\|^q\bigr).
\]
Finally,
\[
    \sup_{t\le T}\|\hat Z_t\|
    \le
    \max_{k<N}\Bigl(\|\bar Z_k\|+\Lambda'\bigl(1+\|\bar Z_k\|\bigr)\bigl(\dt+\sup_{t\in[t_k,t_{k+1}]}\|W_t-W_{t_k}\|\bigr)\Bigr)
\]
is a maximum over finitely many random variables with finite $q$-th moments. Hence
\[
    \E\sup_{t\le T}\|\hat Z_t\|^q<\infty .
\]
\end{proof}

\begin{lemma}[Euler--Maruyama error for one layer]
\label{lem:euler}
Let $q\ge2$ be even, $\dt\le1$ and $\E\|Z_0\|^{2q}<\infty$. Let $Z$ solve
\[
    \dd Z_t=\mu(t,Z_t)\,\dd t+\sigma(t,Z_t)\,\dd W_t
\]
and let $\hat Z$ be the interpolated scheme Eq.~\ref{eq:app-interp}, with the same $Z_0$ and the same driver. Then
\begin{equation}
    \E\sup_{t\le T}\|Z_t-\hat Z_t\|^q
    \le
    C_q\Bigl(\dt^{q/2}+\sup_{s\le T}\bigl(\E\|\vartheta_s-\vartheta_{\kappa(s)}\|^{2q}\bigr)^{1/2}\Bigr),
    \label{eq:app-euler-general}
\end{equation}
where $C_q$ depends only on $(\Lambda',T,m,q)$ and on $\E\|Z_0\|^{2q}$. If in addition $\vartheta$ satisfies the time regularity Eq.~\ref{eq:treg}, then
\begin{equation}
\begin{gathered}
    \E\sup_{t\le T}\|Z_t-\hat Z_t\|^q
    \le
    C_q\bigl(1+L_{2q}^{q}\bigr)\dt^{q/2},\\
    \text{in particular}\qquad
    \max_{0\le k\le N}\E\|Z_{t_k}-\bar Z_k\|^q
    \le
    C_q\bigl(1+L_{2q}^{q}\bigr)\dt^{q/2}:
\end{gathered}
    \label{eq:app-euler-rate}
\end{equation}
the scheme has strong order $\tfrac12$ in every $L^q$ \citep[Thm.~10.2.2]{KloedenPlaten}.
\end{lemma}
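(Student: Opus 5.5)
The approach is the standard strong-error argument for Euler--Maruyama, with the coefficient freezing error isolated as a separate term. Set $e_t:=Z_t-\hat Z_t$. Subtracting the interpolated SDE Eq.~\ref{eq:app-interp-sde} from the exact equation gives
\[
    e_t=\int_0^t\bigl(\mu(s,Z_s)-\mu(\kappa(s),\hat Z_{\kappa(s)})\bigr)\dd s+\int_0^t\bigl(\sigma(s,Z_s)-\sigma(\kappa(s),\hat Z_{\kappa(s)})\bigr)\dd W_s,
\]
since $e_0=0$. Each integrand difference splits into three pieces.
\begin{itemize}[leftmargin=1.2em,itemsep=0.1em,topsep=0.2em]
\item A state-Lipschitz piece, $\mu(s,Z_s)-\mu(s,\hat Z_s)$, bounded by $\Lambda'\|e_s\|$ through Eq.~\ref{eq:linbounds}.
\item A discretisation-in-state piece, $\mu(s,\hat Z_s)-\mu(s,\hat Z_{\kappa(s)})$, bounded by $\Lambda'\|\hat Z_s-\hat Z_{\kappa(s)}\|$.
\item A coefficient-freezing piece, $\mu(s,\hat Z_{\kappa(s)})-\mu(\kappa(s),\hat Z_{\kappa(s)})$, bounded by $\|\vartheta_s-\vartheta_{\kappa(s)}\|\,(1+\|\hat Z_{\kappa(s)}\|)$ because $\mu,\sigma$ are affine in $z$ with coefficients read off $\vartheta$.
\end{itemize}
The same splitting applies to $\sigma$ in Frobenius norm.

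Next, set $\phi(t):=\E\sup_{r\le t}\|e_r\|^q$, which is finite by Lemma~\ref{lem:one-layer} and the final claim of Lemma~\ref{lem:stab}. Bound the drift integral by Jensen and the stochastic integral by BDG Eq.~\ref{eq:app-bdg} followed by Jensen in time, exactly as in Step~2 of Lemma~\ref{lem:one-layer}. This yields
\[
    \phi(t)\le C\int_0^t\phi(s)\,\dd s+C\int_0^T\E\|\hat Z_s-\hat Z_{\kappa(s)}\|^q\,\dd s+C\int_0^T\E\Bigl[\|\vartheta_s-\vartheta_{\kappa(s)}\|^q\bigl(1+\|\hat Z_{\kappa(s)}\|\bigr)^q\Bigr]\dd s .
\]
Lemma~\ref{lem:stab}, Eq.~\ref{eq:app-interp-moments}, bounds the middle term by $C(1+\E\|Z_0\|^q)\dt^{q/2}$.

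The coefficient-freezing term is the only genuinely new point, and it explains the hypothesis $\E\|Z_0\|^{2q}<\infty$. Because the gated coefficients $\vartheta$ are random and correlated with $\hat Z$, I would apply Cauchy--Schwarz:
\[
    \E\bigl[\|\vartheta_s-\vartheta_{\kappa(s)}\|^q(1+\|\hat Z_{\kappa(s)}\|)^q\bigr]\le\bigl(\E\|\vartheta_s-\vartheta_{\kappa(s)}\|^{2q}\bigr)^{1/2}\bigl(\E(1+\|\hat Z_{\kappa(s)}\|)^{2q}\bigr)^{1/2}.
\]
The second factor is controlled by Lemma~\ref{lem:stab} at the even order $2q$, giving a constant depending on $\E\|Z_0\|^{2q}$. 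Gr\"onwall's lemma on $\phi$ then gives Eq.~\ref{eq:app-euler-general}. The main care is keeping all constants dependent only on $(\Lambda',T,m,q)$ and $\E\|Z_0\|^{2q}$, and checking that $\phi$ is finite a priori so that Gr\"onwall applies without localisation.

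Finally, under Eq.~\ref{eq:treg} at exponent $2q$ we have $\E\|\vartheta_s-\vartheta_{\kappa(s)}\|^{2q}\le L_{2q}^{2q}\dt^{q}$. Its square root is $L_{2q}^q\dt^{q/2}$, so the general bound becomes $C_q(1+L_{2q}^q)\dt^{q/2}$. Restricting the supremum to grid points, where $\hat Z_{t_k}=\bar Z_k$, gives the second line of Eq.~\ref{eq:app-euler-rate}.
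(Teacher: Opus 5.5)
Your proposal is correct and follows the paper's proof essentially step by step: the error equation, the splitting of the coefficient differences, BDG/Jensen with Gr\"onwall on $\phi$, Cauchy--Schwarz on the coefficient-freezing term at moment order $2q$, and finally Eq.~\ref{eq:treg} at exponent $2q$. The only difference is cosmetic: you evaluate the freezing term at $\hat Z_{\kappa(s)}=\bar Z_k$ and control it by Eq.~\ref{eq:app-stab}, whereas the paper evaluates it at $\hat Z_s$ and uses Eq.~\ref{eq:app-interp-moments}.
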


\begin{proof}
Set $e_t:=Z_t-\hat Z_t$, so that $e_0=0$ and, by Eq.~\ref{eq:app-interp-sde},
\begin{equation}
    e_t
    =
    \int_0^t\bigl[\mu(s,Z_s)-\mu(\kappa(s),\hat Z_{\kappa(s)})\bigr]\dd s
    +\int_0^t\bigl[\sigma(s,Z_s)-\sigma(\kappa(s),\hat Z_{\kappa(s)})\bigr]\dd W_s .
    \label{eq:app-error-eq}
\end{equation}

\medskip
\noindent
\emph{Step 1: decomposition of the coefficient differences.}
For the drift,
\[
    \mu(s,Z_s)-\mu(\kappa(s),\hat Z_{\kappa(s)})
    =
    A_se_s
    +\bigl(A_s-A_{\kappa(s)}\bigr)\hat Z_s
    +\bigl(b_s-b_{\kappa(s)}\bigr)
    +A_{\kappa(s)}\bigl(\hat Z_s-\hat Z_{\kappa(s)}\bigr),
\]
and, for each $j$,
\[
    \sigma^j(s,Z_s)-\sigma^j(\kappa(s),\hat Z_{\kappa(s)})
    =
    C^j_se_s
    +\bigl(C^j_s-C^j_{\kappa(s)}\bigr)\hat Z_s
    +\bigl(d^j_s-d^j_{\kappa(s)}\bigr)
    +C^j_{\kappa(s)}\bigl(\hat Z_s-\hat Z_{\kappa(s)}\bigr).
\]
Define
\begin{equation}
    D_s:=\|\vartheta_s-\vartheta_{\kappa(s)}\|\bigl(1+\|\hat Z_s\|\bigr),
    \qquad
    I_s:=\|\hat Z_s-\hat Z_{\kappa(s)}\| .
    \label{eq:app-D-I}
\end{equation}
By Eq.~\ref{eq:app-coeff-bounds} and the definition Eq.~\ref{eq:app-coeff-norm} of $\|\vartheta\|$,
\begin{equation}
\begin{gathered}
    \|\mu(s,Z_s)-\mu(\kappa(s),\hat Z_{\kappa(s)})\|
    \le
    a_*\|e_s\|+D_s+a_*I_s,\\
    \|\sigma(s,Z_s)-\sigma(\kappa(s),\hat Z_{\kappa(s)})\|_{\mathrm F}
    \le
    m\,c_*\|e_s\|+D_s+m\,c_*I_s .
\end{gathered}
    \label{eq:app-coeff-diff-bounds}
\end{equation}

\medskip
\noindent
\emph{Step 2: a Gr\"onwall inequality.}
Let
\[
    \phi(t):=\E\sup_{r\le t}\|e_r\|^q ,
\]
which is finite by Lemma~\ref{lem:one-layer} and Lemma~\ref{lem:stab}. From Eq.~\ref{eq:app-error-eq}, $\|x+y\|^q\le2^{q-1}(\|x\|^q+\|y\|^q)$, Jensen's inequality for the time integral, Eq.~\ref{eq:app-bdg} with Jensen's inequality for the stochastic integral, and Eq.~\ref{eq:app-coeff-diff-bounds},
\begin{equation}
\begin{split}
    \phi(t)
    &\le
    2^{q-1}\Bigl[T^{q-1}\,\E\int_0^t\|\mu(s,Z_s)-\mu(\kappa(s),\hat Z_{\kappa(s)})\|^q\,\dd s\\
    &\qquad\qquad
    +c_qT^{q/2-1}\,\E\int_0^t\|\sigma(s,Z_s)-\sigma(\kappa(s),\hat Z_{\kappa(s)})\|_{\mathrm F}^q\,\dd s\Bigr]\\
    &\le
    C_2\int_0^t\bigl(\E\|e_s\|^q+\E D_s^q+\E I_s^q\bigr)\dd s
    \ \le\
    C_2\int_0^t\phi(s)\,\dd s+C_2\int_0^T\bigl(\E D_s^q+\E I_s^q\bigr)\dd s,
\end{split}
\label{eq:app-gronwall-pre}
\end{equation}
with $C_2=C_2(\Lambda',T,m,q)$. Gr\"onwall's lemma gives
\begin{equation}
    \phi(T)
    \le
    C_2e^{C_2T}\int_0^T\bigl(\E D_s^q+\E I_s^q\bigr)\dd s .
    \label{eq:app-gronwall}
\end{equation}

\medskip
\noindent
\emph{Step 3: the two forcing terms.}
By the Cauchy--Schwarz inequality and Eq.~\ref{eq:app-interp-moments} at the exponent $2q$,
\[
\begin{split}
    \E D_s^q
    &\le
    \bigl(\E\|\vartheta_s-\vartheta_{\kappa(s)}\|^{2q}\bigr)^{1/2}
    \bigl(\E(1+\|\hat Z_s\|)^{2q}\bigr)^{1/2}\\
    &\le
    C\bigl(1+\E\|Z_0\|^{2q}\bigr)^{1/2}
    \sup_{s\le T}\bigl(\E\|\vartheta_s-\vartheta_{\kappa(s)}\|^{2q}\bigr)^{1/2},
\end{split}
\]
and by Eq.~\ref{eq:app-interp-moments} at the exponent $q$,
\[
    \E I_s^q
    \le
    C\bigl(1+\E\|Z_0\|^q\bigr)\dt^{q/2} .
\]
Inserting both bounds into Eq.~\ref{eq:app-gronwall} proves Eq.~\ref{eq:app-euler-general}.

\medskip
\noindent
\emph{Step 4: the rate under time regularity.}
Under Eq.~\ref{eq:treg},
\[
    \E\|\vartheta_s-\vartheta_{\kappa(s)}\|^{2q}
    \le
    L_{2q}^{2q}\,|s-\kappa(s)|^{q}
    \le
    L_{2q}^{2q}\,\dt^{q},
\]
so the supremum in Eq.~\ref{eq:app-euler-general} is at most $L_{2q}^q\dt^{q/2}$, which gives Eq.~\ref{eq:app-euler-rate}. The grid bound follows because $\hat Z_{t_k}=\bar Z_k$.
\end{proof}

\begin{theorem}[Stacked discretisation error]
\label{thm:disc-error}
Let Assumptions~\ref{ass:model-class}--\ref{ass:gate-bounds} hold, assume $\E\|z_0\|^r<\infty$ for every $r\ge2$, and apply the Euler--Maruyama scheme of Eq.~\ref{eq:gated-recurrence} layerwise, the gates of the discrete layer $\ell$ being evaluated on the discrete previous layer at the grid points. Let $\hat Z^{(\ell)}$ denote the interpolation Eq.~\ref{eq:app-interp} of the discrete layer $\ell$, so that $\hat Z^{(\ell)}_{t_k}=\bar Z^{(\ell)}_k$. Then for each fixed depth $L$ and every even $q\ge2$ there is a constant $C_{L,q}$, depending on the coefficient and gate bounds, on $T$, $m$, $q$, $L$ and on the moments of $z_0$ but not on $\dt$, such that for $\dt\le1$
\begin{equation}
\begin{gathered}
    \max_{1\le\ell\le L}\ \E\sup_{t\le T}\bigl\|Z^{(\ell)}_t-\hat Z^{(\ell)}_t\bigr\|^{q}
    \le
    C_{L,q}\,\dt^{q/2},\\
    \text{in particular}\qquad
    \max_{1\le\ell\le L}\ \max_{0\le k\le N}\
    \E\bigl\|Z^{(\ell)}_{t_k}-\bar Z^{(\ell)}_k\bigr\|^{q}
    \le
    C_{L,q}\,\dt^{q/2}.
\end{gathered}
    \label{eq:app-stack-bound}
\end{equation}
The scheme therefore has strong order $\tfrac12$ for every fixed depth.
\end{theorem}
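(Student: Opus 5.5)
The plan is to argue by induction on the layer index $\ell$, carrying a moment order that doubles at each step down the stack. For the base layer $\ell=1$ the coefficients are deterministic and Lipschitz in time, so Lemma~\ref{lem:euler} applies directly with the same driver and initial state $\iota_d(z_0)$. It gives $\E\sup_t\|Z^{(1)}_t-\hat Z^{(1)}_t\|^{r}\le C\dt^{r/2}$ for every even $r$, since all moments of $z_0$ are finite. The constants are uniform in the layer thanks to Lemma~\ref{lem:uniform-bounds}(a) and the depth-uniform bounds of Theorem~\ref{thm:wellposed}(b).

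For the inductive step, fix $\ell\ge2$ and introduce an intermediate process. Let $\tilde Z^{(\ell)}$ be the continuous solution of layer $\ell$ whose gates are read from the interpolated discrete previous layer $\hat Z^{(\ell-1)}$ rather than from $Z^{(\ell-1)}$. Then split
\[
Z^{(\ell)}-\hat Z^{(\ell)}=\bigl(Z^{(\ell)}-\tilde Z^{(\ell)}\bigr)+\bigl(\tilde Z^{(\ell)}-\hat Z^{(\ell)}\bigr).
\]

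The second difference is a single-layer Euler--Maruyama error, because $\hat Z^{(\ell)}$ is precisely the Euler scheme for $\tilde Z^{(\ell)}$: its coefficients are frozen at grid points and the gates there are evaluated on $\bar Z^{(\ell-1)}_k=\hat Z^{(\ell-1)}_{t_k}$. The coefficient process of $\tilde Z^{(\ell)}$ is bounded by Lemma~\ref{lem:uniform-bounds}(a). Its time regularity follows from Lemma~\ref{lem:uniform-bounds}(b) once $\hat Z^{(\ell-1)}$ satisfies the increment bound Eq.~\ref{eq:app-prev-layer}. Here one subtlety arises: Lemma~\ref{lem:stab} only gives $\E\|\hat Z_t-\hat Z_{\kappa(t)}\|^q\lesssim\dt^{q/2}$, not $|t-s|^{q/2}$ for arbitrary $s,t$. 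I would check that the general form Eq.~\ref{eq:app-euler-general} only needs $\sup_s\E\|\vartheta_s-\vartheta_{\kappa(s)}\|^{2q}$. By Eq.~\ref{eq:app-gate-lip-time} this is controlled by $\E\|\hat Z^{(\ell-1)}_s-\hat Z^{(\ell-1)}_{\kappa(s)}\|^{2q}+\dt^{2q}\lesssim\dt^{q}$, which is exactly what Lemma~\ref{lem:stab} gives at exponent $2q$. Hence this term is $O(\dt^{q/2})$ in $L^q$, provided moments of order $2q$ of the previous discrete layer are available.

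The first difference is a stability estimate in the gate inputs. Both $Z^{(\ell)}$ and $\tilde Z^{(\ell)}$ solve affine SDEs with the same driver, initial state and ungated coefficients. They differ only in $\alpha\odot A$ and $b+o$, and by Eq.~\ref{eq:app-gate-lip-path} that coefficient gap is at most $(1+a_*)L_g\sup_{r\le s}\|Z^{(\ell-1)}_r-\hat Z^{(\ell-1)}_r\|$. Setting $e=Z^{(\ell)}-\tilde Z^{(\ell)}$, the drift difference is $\alpha\odot A\,e$ plus a forcing term bounded by $(1+a_*)L_g\sup_{r\le s}\|Z^{(\ell-1)}_r-\hat Z^{(\ell-1)}_r\|\,(1+\|\tilde Z^{(\ell)}_s\|)$, while the diffusion difference is linear in $e$ alone. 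BDG and Gr\"onwall, exactly as in Lemma~\ref{lem:euler}, Step 2, then bound $\E\sup\|e\|^q$ by the expectation of this forcing raised to the power $q$. Cauchy--Schwarz splits it into $(\E\sup\|Z^{(\ell-1)}-\hat Z^{(\ell-1)}\|^{2q})^{1/2}(\E\sup(1+\|\tilde Z^{(\ell)}\|)^{2q})^{1/2}$. The second factor is bounded uniformly by Theorem~\ref{thm:wellposed}, and the first is $O(\dt^{q/2})$ by the induction hypothesis at order $2q$.

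The main obstacle is exactly this bookkeeping. Each layer consumes the previous layer's error at twice the moment order, so proving the statement at order $q$ for layer $L$ requires order $2^{L-1}q$ at layer $1$. This is why the constant $C_{L,q}$ depends on $L$ and why all moments of $z_0$ are assumed. I would therefore formulate the induction hypothesis as holding for every even $r\ge2$ at layer $\ell-1$, which closes the induction for any fixed $L$. The grid statement follows from $\hat Z^{(\ell)}_{t_k}=\bar Z^{(\ell)}_k$.
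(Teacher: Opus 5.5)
Your proof is correct and has the same skeleton as the paper's. Both argue by induction over the layers with the moment order doubled at each step. Both use an intermediate one-layer SDE whose Euler--Maruyama iterates are exactly the discrete layer $\ell$. Both split the error into a one-layer Euler error plus a gate-perturbation error, and the latter consumes the induction hypothesis at order $2q$ via Cauchy--Schwarz.

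Where you differ is the choice of intermediate process. This moves the cost of the coefficients' time variation from one half of the split to the other.

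The paper's intermediate $\check Z$ evaluates the gates on $\bar Z^{(\ell-1)}_k$ and \emph{freezes} them on each grid interval. The coefficient term $\sup_s(\E\|\bar\vartheta_s-\bar\vartheta_{\kappa(s)}\|^{2q})^{1/2}$ in Eq.~\ref{eq:app-euler-general} therefore vanishes identically. The perturbation step then carries two forcing terms:
\begin{itemize}
\item $\vartheta_s-\vartheta_{\kappa(s)}$, controlled by Lemma~\ref{lem:uniform-bounds}(b) together with the global H\"older increment bound for the \emph{exact} previous layer from Theorem~\ref{thm:wellposed};
\item the grid-point gate discrepancy, controlled by Eq.~\ref{eq:app-gate-lip-path} and the induction hypothesis.
\end{itemize}

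Your $\tilde Z^{(\ell)}$ reads the gates continuously from $\hat Z^{(\ell-1)}$. Your perturbation step is therefore a pure path perturbation with a single forcing term. The time variation is paid instead inside Lemma~\ref{lem:euler}, through Eq.~\ref{eq:app-gate-lip-time} and the grid-local increment bound of Lemma~\ref{lem:stab} for the \emph{discrete} previous layer at order $2q$. As you correctly observe, this is all that Eq.~\ref{eq:app-euler-general} requires. You thus never need a global $|t-s|^{q/2}$ bound for $\hat Z^{(\ell-1)}$, nor Lemma~\ref{lem:uniform-bounds}(b).

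The two routes cost about the same: the paper's makes the Euler step trivial, and yours makes the perturbation step trivial.

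One small correction: the $2q$-th moment bound for $\tilde Z^{(\ell)}$ comes from Lemma~\ref{lem:one-layer} with the constant $\Lambda'_\star$ supplied by Lemma~\ref{lem:uniform-bounds}(a). It does not come from Theorem~\ref{thm:wellposed}, because $\tilde Z^{(\ell)}$ is not a layer of the stack.
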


\begin{proof}
Fix $L$ and an even $q\ge2$, and set
\begin{equation}
    q_\ell:=q\,2^{\,L-\ell},
    \qquad
    \hat e_\ell:=\E\sup_{t\le T}\bigl\|Z^{(\ell)}_t-\hat Z^{(\ell)}_t\bigr\|^{q_\ell},
    \qquad \ell=1,\dots,L,
    \label{eq:app-q-ell}
\end{equation}
so that $q_L=q$ and $q_{\ell-1}=2q_\ell$: higher moments are carried at the shallow layers because each inter-layer step consumes one Cauchy--Schwarz inequality. We prove by induction on $\ell$ that
\begin{equation}
    \hat e_\ell\le C\,\dt^{q_\ell/2},
    \qquad \ell=1,\dots,L,
    \label{eq:app-induction}
\end{equation}
with constants that do not depend on $\dt$. Eq.~\ref{eq:app-stack-bound} then follows: at $\ell=L$ directly, and at $\ell<L$ by Jensen's inequality,
\[
    \E\sup_{t\le T}\bigl\|Z^{(\ell)}_t-\hat Z^{(\ell)}_t\bigr\|^{q}
    \le
    \bigl(\hat e_\ell\bigr)^{q/q_\ell}
    \le
    C^{q/q_\ell}\,\dt^{q/2} ;
\]
the grid bound follows from $\hat Z^{(\ell)}_{t_k}=\bar Z^{(\ell)}_k$.

\medskip
\noindent
\emph{Base case.}
Layer $1$ is ungated: its coefficients are deterministic and Lipschitz in $t$, so Assumption~\ref{ass:coeff-bounds} holds by construction, and Lemma~\ref{lem:euler} at the exponent $q_1$ (which requires $\E\|z_0\|^{2q_1}<\infty$) gives
\[
    \hat e_1\le C\dt^{q_1/2} .
\]

\medskip
\noindent
\emph{Induction step: the frozen coefficients.}
Let $\ell\ge2$ and assume Eq.~\ref{eq:app-induction} for $\ell-1$. The step compares three processes: the exact layer $Z^{(\ell)}$, an intermediate SDE $\check Z$ whose gates are frozen on the grid and evaluated on the discrete previous layer, and the discrete layer $\hat Z^{(\ell)}$. The distance from $\check Z$ to $\hat Z^{(\ell)}$ is a one-layer Euler--Maruyama error (Lemma~\ref{lem:euler}); the distance from $Z^{(\ell)}$ to $\check Z$ is a coefficient perturbation controlled by the induction hypothesis. Let $\vartheta$ denote the exact-gate coefficient process of layer $\ell$, Eq.~\ref{eq:app-gated-coeff}, with the gates evaluated on the exact previous layer $Z^{(\ell-1)}$, and let $\bar\vartheta$ denote the discrete-gate coefficients, with the gates evaluated on the discrete previous layer and frozen on each grid interval:
\begin{equation}
\begin{gathered}
    \bar\vartheta_{t_k}
    :=
    \bigl(\alpha^{(\ell)}_{t_k}(\hat Z^{(\ell-1)})\odot A^{(\ell)}_{t_k},\;
    b^{(\ell)}_{t_k}+o^{(\ell)}_{t_k}(\hat Z^{(\ell-1)}),\;
    C^{(\ell),j}_{t_k},\;d^{(\ell),j}_{t_k}\bigr),\\
    \bar\vartheta_t:=\bar\vartheta_{\kappa(t)} .
\end{gathered}
    \label{eq:app-frozen-coeff}
\end{equation}
Because the gates of Eqs.~\ref{eq:alpha-gate} and~\ref{eq:offset-gate} read only the current value of their input path, $\alpha^{(\ell)}_{t_k}(\hat Z^{(\ell-1)})$ and $o^{(\ell)}_{t_k}(\hat Z^{(\ell-1)})$ are the gates evaluated on
\[
    \hat Z^{(\ell-1)}_{t_k}=\bar Z^{(\ell-1)}_k ,
\]
that is, exactly the gates used by the discrete layer $\ell$. The process $\bar\vartheta$ is progressively measurable, and it satisfies the bounds Eq.~\ref{eq:app-uniform-bounds} because Assumption~\ref{ass:gate-bounds}(b)--(c) hold surely for every input path. Let $\check Z$ be the solution of the intermediate SDE with the frozen coefficients and the same driver and initial state,
\begin{equation}
    \dd\check Z_t
    =
    \bar\mu(t,\check Z_t)\,\dd t+\bar\sigma(t,\check Z_t)\,\dd W^{(\ell)}_t,
    \qquad
    \check Z_0=\iota_d(z_0),
    \label{eq:app-intermediate}
\end{equation}
where $\bar\mu,\bar\sigma$ are the maps Eq.~\ref{eq:app-mu-sigma} built from $\bar\vartheta$; it exists, is unique and has finite moments of all orders by Lemma~\ref{lem:one-layer}. The Euler--Maruyama scheme Eq.~\ref{eq:app-euler} for $\check Z$ uses the coefficients $\bar\vartheta_{t_k}$, which are exactly the coefficients of the discrete layer $\ell$; hence its iterates are $\bar Z^{(\ell)}_k$ and its interpolation is $\hat Z^{(\ell)}$. We split
\begin{equation}
    Z^{(\ell)}-\hat Z^{(\ell)}
    =
    \bigl(Z^{(\ell)}-\check Z\bigr)+\bigl(\check Z-\hat Z^{(\ell)}\bigr).
    \label{eq:app-split}
\end{equation}

\medskip
\noindent
\emph{Induction step (i): $\check Z$ versus $\hat Z^{(\ell)}$.}
Lemma~\ref{lem:euler} applied to $\check Z$ at the exponent $q_\ell$, whose hypotheses hold with $\Lambda'_\star$ in place of $\Lambda'$, gives Eq.~\ref{eq:app-euler-general}. Since $\bar\vartheta_s=\bar\vartheta_{\kappa(s)}$ for every $s$, the supremum in Eq.~\ref{eq:app-euler-general} vanishes, and
\begin{equation}
    \E\sup_{t\le T}\bigl\|\check Z_t-\hat Z^{(\ell)}_t\bigr\|^{q_\ell}
    \le
    C\dt^{q_\ell/2} .
    \label{eq:app-step-i}
\end{equation}

\medskip
\noindent
\emph{Induction step (ii): $Z^{(\ell)}$ versus $\check Z$.}
Both are continuous SDEs with the same driver and initial state. Set $e_t:=Z^{(\ell)}_t-\check Z_t$; then $e_0=0$ and
\[
    e_t
    =
    \int_0^t\bigl[\mu(s,Z^{(\ell)}_s)-\bar\mu(s,\check Z_s)\bigr]\dd s
    +\int_0^t\bigl[\sigma(s,Z^{(\ell)}_s)-\bar\sigma(s,\check Z_s)\bigr]\dd W^{(\ell)}_s ,
\]
where $\mu,\sigma$ are built from $\vartheta$. Writing $\vartheta_s=(A_s,b_s,\dots)$ and $\bar\vartheta_s=(\bar A_s,\bar b_s,\dots)$,
\[
    \mu(s,Z^{(\ell)}_s)-\bar\mu(s,\check Z_s)
    =
    A_se_s+(A_s-\bar A_s)\check Z_s+(b_s-\bar b_s),
\]
so that
\[
\begin{gathered}
    \|\mu(s,Z^{(\ell)}_s)-\bar\mu(s,\check Z_s)\|
    \le
    (1+\varepsilon)a_*\|e_s\|+\|\vartheta_s-\bar\vartheta_s\|\bigl(1+\|\check Z_s\|\bigr),\\
    \|\sigma(s,Z^{(\ell)}_s)-\bar\sigma(s,\check Z_s)\|_{\mathrm F}
    \le
    m\,c_*\|e_s\|+\|\vartheta_s-\bar\vartheta_s\|\bigl(1+\|\check Z_s\|\bigr).
\end{gathered}
\]
The argument of Step~2 of the proof of Lemma~\ref{lem:euler}, with the forcing $\|\vartheta_s-\bar\vartheta_s\|(1+\|\check Z_s\|)$ in place of $D_s+\Lambda'I_s$, gives
\begin{equation}
    \E\sup_{t\le T}\|e_t\|^{q_\ell}
    \le
    C\int_0^T\E\Bigl[\|\vartheta_s-\bar\vartheta_s\|^{q_\ell}\bigl(1+\|\check Z_s\|\bigr)^{q_\ell}\Bigr]\dd s
    \le
    C\sup_{s\le T}\bigl(\E\|\vartheta_s-\bar\vartheta_s\|^{2q_\ell}\bigr)^{1/2},
    \label{eq:app-step-ii-pre}
\end{equation}
where the last step is the Cauchy--Schwarz inequality together with the moment bound Eq.~\ref{eq:app-moment} for $\check Z$ at the exponent $2q_\ell$. The coefficient difference splits into a time-regularity part and a gate-perturbation part,
\begin{equation}
    \|\vartheta_s-\bar\vartheta_s\|
    \le
    \|\vartheta_s-\vartheta_{\kappa(s)}\|
    +\|\vartheta_{\kappa(s)}-\bar\vartheta_{\kappa(s)}\| .
    \label{eq:app-two-parts}
\end{equation}
For the first part, the exact previous layer satisfies Eq.~\ref{eq:app-prev-layer} at every exponent by Theorem~\ref{thm:wellposed}, since $z_0$ has moments of all orders, so Lemma~\ref{lem:uniform-bounds}(b) gives Eq.~\ref{eq:treg} for $\vartheta$ at the exponent $2q_\ell$, and
\begin{equation}
    \bigl(\E\|\vartheta_s-\vartheta_{\kappa(s)}\|^{2q_\ell}\bigr)^{1/2}
    \le
    L_{2q_\ell}^{q_\ell}\,|s-\kappa(s)|^{q_\ell/2}
    \le
    L_{2q_\ell}^{q_\ell}\,\dt^{q_\ell/2} .
    \label{eq:app-part-one}
\end{equation}
For the second part, at the grid time $t_k=\kappa(s)$ the two coefficient tuples differ only through the gates, evaluated on the paths $Z^{(\ell-1)}$ and $\hat Z^{(\ell-1)}$ up to time $t_k$. By Eq.~\ref{eq:app-gate-lip-path} and the argument of Eq.~\ref{eq:app-alpha-A},
\[
\begin{split}
    \|\vartheta_{t_k}-\bar\vartheta_{t_k}\|
    &\le
    a_*\bigl\|\alpha^{(\ell)}_{t_k}(Z^{(\ell-1)})-\alpha^{(\ell)}_{t_k}(\hat Z^{(\ell-1)})\bigr\|_\infty
    +\bigl\|o^{(\ell)}_{t_k}(Z^{(\ell-1)})-o^{(\ell)}_{t_k}(\hat Z^{(\ell-1)})\bigr\|\\
    &\le
    (1+a_*)L_g\sup_{r\le T}\bigl\|Z^{(\ell-1)}_r-\hat Z^{(\ell-1)}_r\bigr\| ,
\end{split}
\]
so that, by the induction hypothesis and $2q_\ell=q_{\ell-1}$,
\begin{equation}
\begin{split}
    \bigl(\E\|\vartheta_{\kappa(s)}-\bar\vartheta_{\kappa(s)}\|^{2q_\ell}\bigr)^{1/2}
    &\le
    \bigl((1+a_*)L_g\bigr)^{q_\ell}\,\bigl(\hat e_{\ell-1}\bigr)^{1/2}\\
    &\le
    \bigl((1+a_*)L_g\bigr)^{q_\ell}\,C^{1/2}\,\dt^{q_{\ell-1}/4}
    =
    C\,\dt^{q_\ell/2} .
\end{split}
    \label{eq:app-part-two}
\end{equation}
Inserting Eqs.~\ref{eq:app-part-one} and~\ref{eq:app-part-two} into Eq.~\ref{eq:app-step-ii-pre} through Eq.~\ref{eq:app-two-parts} and $(x+y)^{2q_\ell}\le2^{2q_\ell-1}(x^{2q_\ell}+y^{2q_\ell})$,
\begin{equation}
    \E\sup_{t\le T}\bigl\|Z^{(\ell)}_t-\check Z_t\bigr\|^{q_\ell}
    \le
    C\dt^{q_\ell/2} .
    \label{eq:app-step-ii}
\end{equation}

\medskip
\noindent
\emph{Conclusion.}
By Eq.~\ref{eq:app-split} and $\|x+y\|^{q_\ell}\le2^{q_\ell-1}(\|x\|^{q_\ell}+\|y\|^{q_\ell})$,
\[
    \hat e_\ell
    \le
    2^{q_\ell-1}\Bigl(\E\sup_{t\le T}\bigl\|Z^{(\ell)}_t-\check Z_t\bigr\|^{q_\ell}
    +\E\sup_{t\le T}\bigl\|\check Z_t-\hat Z^{(\ell)}_t\bigr\|^{q_\ell}\Bigr)
    \le
    C\dt^{q_\ell/2}
\]
by Eqs.~\ref{eq:app-step-i} and~\ref{eq:app-step-ii}, which is Eq.~\ref{eq:app-induction} for $\ell$ and closes the induction.
\end{proof}

\begin{remark}[Dependence on the depth]
\label{rem:disc-scope}
The constant $C_{L,q}$ depends on $L$ in two ways: the induction carries the moment order $q\,2^{L-\ell}$ at layer $\ell$, so the constants of Lemmas~\ref{lem:stab} and~\ref{lem:euler} enter at exponents that grow with the depth, and each inter-layer step multiplies the propagated error by a factor involving the gate Lipschitz constant $L_g$ through Eq.~\ref{eq:app-part-two}. The theorem is therefore a statement at fixed depth: the rate $\tfrac12$ holds for every $L$, but no uniformity of the constant in $L$ is claimed. For the depths used in practice ($L=2,3$) the exponents involved are $4q$ at most.
\end{remark}

\begin{remark}[Strong order $\tfrac12$, not $1$]
Order $1$ would require the iterated It\^o integrals
\[
    \int_{t_k}^{t_{k+1}}\!\!\int_{t_k}^{s}\dd W^i_u\,\dd W^j_s,
    \qquad i\ne j,
\]
and their L\'evy areas, which the scheme does not simulate. We therefore claim only order $\tfrac12$, without assuming that the $C^j_t$ commute; if they do, the exponential transition of Appendix~\ref{app:discretisation} attains order $1$ and, in the diagonal case, is an exact one-step solution operator.
\end{remark}

\subsection{Proof of Theorem \ref{thm:girsanov-validity}}
\label{app:girsanov-validity}

\begin{proof}
Throughout, $\epsilon$ is the $m$-dimensional Brownian innovation of Eq.~\ref{eq:correlated-driver}, independent of the prefix driver $W^{\mathrm{prefix}}$ and of $\F_0$, and $u_{\theta,t}$ is progressively measurable with respect to the final-layer filtration and satisfies Novikov's condition Eq.~\ref{eq:novikov_cond}. The stochastic exponential of Eq.~\ref{eq:girsanov-density} is taken with respect to $\epsilon$,
\begin{equation}
    \mathcal E_t(u_\theta)
    :=
    \exp\Bigl(\int_0^tu_{\theta,s}^\top\dd\epsilon_s-\tfrac12\int_0^t\|u_{\theta,s}\|^2\dd s\Bigr),
    \qquad t\in[0,T].
    \label{eq:app-stoch-exp}
\end{equation}

\medskip
\noindent
\emph{Step 0: the initial-state shift.}
Under $\Prob$, $z_0\sim\mathcal N(0,I_d)$ is independent of $(W^{\mathrm{prefix}},\epsilon)$. The factor
\[
    \exp\bigl(\mu_\theta^\top z_0-\tfrac12\|\mu_\theta\|^2\bigr)
    =
    \frac{\dd\mathcal N(\mu_\theta,I_d)}{\dd\mathcal N(0,I_d)}(z_0)
\]
is $\F_0$-measurable and has $\Prob$-expectation one. Being independent of $\mathcal E_T(u_\theta)$ under $\Prob$, its product with $\mathcal E_T(u_\theta)$ is again a probability density, under $\QQ$ the initial state has law $\mathcal N(\mu_\theta,I_d)$, and Steps~1--3 below, which argue conditionally on $\F_0$, apply verbatim. In the implementation
\[
    z_0=\mu_\theta+\xi,
    \qquad
    \xi\sim\mathcal N(0,I_d),
\]
and the term $\mu_\theta^\top z_0-\tfrac12\|\mu_\theta\|^2$ is added to the discrete log-ratio Eq.~\ref{eq:app-rn-discrete}; its expectation under $\QQ$,
\[
    \E^{\QQ}\bigl[\mu_\theta^\top z_0-\tfrac12\|\mu_\theta\|^2\bigr]
    =
    \tfrac12\|\mu_\theta\|^2 ,
\]
is the initial-state share of $\KL(\QQ\|\Prob)$. When $\mu_\theta=0$ nothing changes.

\medskip
\noindent
\emph{Step 1: $\QQ$ is a probability measure and $\epsilon^\QQ$ is a $\QQ$-Brownian motion.}
Under Novikov's condition, $(\mathcal E_t(u_\theta))_{t\le T}$ is a strictly positive martingale \citep[Ch.~III]{ProtterSDE}, so
\[
    \E^{\Prob}\bigl[\mathcal E_T(u_\theta)\bigr]=1 .
\]
Hence
\[
    \QQ(A):=\E^{\Prob}\bigl[\mathcal E_T(u_\theta)\mathbf 1_A\bigr],
    \qquad A\in\F_T,
\]
is a probability measure equivalent to $\Prob$, with $\dd\QQ/\dd\Prob=\mathcal E_T(u_\theta)$. By Girsanov's theorem applied to the $2m$-dimensional Brownian motion $(W^{\mathrm{prefix}},\epsilon)$, whose density process involves only the components of $\epsilon$, the process
\begin{equation}
    \Bigl(W^{\mathrm{prefix}}_t,\ \epsilon^\QQ_t\Bigr)_{t\le T},
    \qquad
    \epsilon^\QQ_t:=\epsilon_t-\int_0^tu_{\theta,s}\,\dd s,
    \label{eq:app-girsanov-bm}
\end{equation}
is a $2m$-dimensional Brownian motion under $\QQ$: the innovation is shifted by the control, while the prefix driver keeps its law and remains independent of the shifted innovation.

\medskip
\noindent
\emph{Step 2: the change-of-measure identity.}
For every $\F_T$-measurable functional $G$ with $\E^{\Prob}|G|<\infty$, the definition of $\QQ$ gives
\begin{equation}
    \E^{\Prob}[G]
    =
    \E^{\QQ}\Bigl[G\,\frac{\dd\Prob}{\dd\QQ}\Bigr],
    \qquad
    \frac{\dd\Prob}{\dd\QQ}
    =
    \mathcal E_T(u_\theta)^{-1} .
    \label{eq:app-rn-identity}
\end{equation}
Substituting $\dd\epsilon_s=\dd\epsilon^\QQ_s+u_{\theta,s}\dd s$ in Eq.~\ref{eq:app-stoch-exp},
\[
    \frac{\dd\Prob}{\dd\QQ}
    =
    \exp\Bigl(-\int_0^Tu_{\theta,s}^\top\dd\epsilon^\QQ_s-\tfrac12\int_0^T\|u_{\theta,s}\|^2\dd s\Bigr).
\]
In particular
\[
    \E^{\QQ}\Bigl[\frac{\dd\Prob}{\dd\QQ}\Bigr]=\Prob(\Omega)=1 :
\]
the importance weights are exactly normalised.

\medskip
\noindent
\emph{Step 3: the discrete implementation is exact.}
In the implementation the innovation is simulated on the grid. Under $\Prob$ the increments
\[
    \Delta\epsilon^{\Prob}_k,
    \qquad k=0,\dots,N-1,
\]
are independent $\mathcal N(0,\dt I_m)$ vectors, and the control $u_{\theta,k}$ is $\F_{t_k}$-measurable, being a function of $\{\Delta\epsilon^{\Prob}_j\}_{j<k}$ and of prefix features up to $t_k$. Simulation under $\QQ$ draws $\Delta\epsilon^\QQ_k\sim\mathcal N(0,\dt I_m)$ and sets
\[
    \Delta\epsilon^{\Prob}_k=\Delta\epsilon^\QQ_k+u_{\theta,k}\dt ,
\]
which is Eq.~\ref{eq:eps-tilt}. Conditionally on $\F_{t_k}$, the $\Prob$-law of $\Delta\epsilon^{\Prob}_k$ has the density
\[
    \varphi_{\dt}(x)\propto\exp\Bigl(-\frac{\|x\|^2}{2\dt}\Bigr),
\]
and its $\QQ$-law has the density $\varphi_{\dt}(x-u_{\theta,k}\dt)$. Their ratio is
\[
    \frac{\varphi_{\dt}(x-u_{\theta,k}\dt)}{\varphi_{\dt}(x)}
    =
    \exp\Bigl(u_{\theta,k}^\top x-\tfrac12\|u_{\theta,k}\|^2\dt\Bigr).
\]
Multiplying the conditional density ratios over $k$ gives the likelihood ratio of the two laws of the whole increment sequence,
\begin{equation}
\begin{gathered}
    \log\frac{\dd\QQ}{\dd\Prob}
    =
    \sum_{k=0}^{N-1}\Bigl(u_{\theta,k}^\top\Delta\epsilon^{\Prob}_k-\tfrac12\|u_{\theta,k}\|^2\dt\Bigr),\\
    \log\frac{\dd\Prob}{\dd\QQ}
    =
    -\sum_{k=0}^{N-1}u_{\theta,k}^\top\Delta\epsilon^{\QQ}_k-\tfrac12\sum_{k=0}^{N-1}\|u_{\theta,k}\|^2\dt ,
\end{gathered}
    \label{eq:app-rn-discrete}
\end{equation}
which is Eq.~\ref{eq:rn-discrete}: on the grid the weight is an exact density ratio, not an approximation of Eq.~\ref{eq:app-rn-identity}. It is also exactly normalised, because for a fixed $\F_{t_k}$-measurable $u_{\theta,k}$ and an independent Gaussian increment
\[
    \E^{\Prob}\Bigl[\exp\bigl(u_{\theta,k}^\top\Delta\epsilon^{\Prob}_k-\tfrac12\|u_{\theta,k}\|^2\dt\bigr)\Bigm|\F_{t_k}\Bigr]=1 ,
\]
and the tower property gives
\[
    \E^{\Prob}\Bigl[\frac{\dd\QQ}{\dd\Prob}\Bigr]=1
\]
without any integrability condition on the control beyond measurability.

\medskip
\noindent
\emph{Step 4: locality of the ratio.}
The prefix layers are functionals of $z_0$ and of the prefix driver. By Steps~0--1 the law of $W^{\mathrm{prefix}}$ is the same under $\Prob$ and $\QQ$, the law of $z_0$ changes only through the shift $\mu_\theta$, and both are independent of $\epsilon$: the prefix computation is the same map under both measures, evaluated at a shifted starting point. The final layer is a functional of the prefix path and of $\epsilon$, and Eq.~\ref{eq:app-rn-identity} together with Step~0 shows that the likelihood ratio is a functional of the control, of $z_0$ and of the shifted innovation only. Hence the ratio depends only on the final-layer control, the initial-state shift and on the Brownian innovation being shifted.
\end{proof}

\subsection{Proof of Theorem \ref{thm:wasserstein-density-terminal-laws}}
\label{app:proof-density}
\begin{proof}
Let \(\mu\in\mathcal P_2(\mathbb R^p)\) and let \(\eta>0\).

\medskip
\noindent
\emph{Step 1: construction of an atomless scalar feature.}
Choose a latent width \(d\), to be specified below, and configure the
first layer as
\[
    \dd Z_t^{(1)}
    =
    \beta e_1\,\dd W_t^1,
    \qquad
    \beta\neq 0,
\]
where \(e_1\) is the first coordinate vector. Thus
\[
    Z_t^{(1)}
    =
    \iota_d(z_0)+\beta e_1W_t^1.
\]
In particular,
\[
    S_T
    :=
    \bigl\langle e_1,Z_T^{(1)}\bigr\rangle
    =
    \bigl\langle e_1,\iota_d(z_0)\bigr\rangle+\beta W_T^1.
\]
Since \(W_T^1\) is independent of \(\mathcal F_0\), the law of \(S_T\)
is the convolution of the law of
\(\langle e_1,\iota_d(z_0)\rangle\) with a nondegenerate Gaussian law.
Consequently, \(S_T\) has a smooth density, irrespective of whether
the law of \(z_0\) is degenerate.

Recall the normalisation of Eq.~\ref{eq:app-gate-input},
\[
    \Pi(z)
    =
    \gamma\odot
    \frac{z}{
        \sqrt{\varsigma+d^{-1}\|z\|^2}
    },
    \qquad
    \varsigma>0,
\]
and choose the first gain \(\gamma_1\) to be 1.
Set
\[
    R_t
    :=
    \bigl[
        \Pi\bigl(Z_t^{(1)}\bigr)
    \bigr]_1.
\]
Conditionally on \(z_0\), \(R_T\) is a strictly monotone function of
\(W_T^1\). Indeed, since \(\iota_d(z_0)=(z_0,0,\dots,0)\), for fixed \(z_0\) the map
\[
    u
    \longmapsto
    \gamma_1
    \frac{u}{
        \sqrt{
            \varsigma+d^{-1}
            \left(
                u^2+\sum_{j=2}^{d_0} z_{0,j}^2
            \right)
        }
    }
\]
has derivative
\[
    \gamma_1
    \frac{
        \varsigma+d^{-1}\sum_{j=2}^{d_0} z_{0,j}^2
    }{
        \left[
            \varsigma+d^{-1}
            \left(
                u^2+\sum_{j=2}^{d_0} z_{0,j}^2
            \right)
        \right]^{3/2}
    },
\]
which never vanishes. Hence \(R_T\) is atomless. Moreover, \(R\) has
continuous paths and is bounded.

\medskip
\noindent
\emph{Step 2: reduction to a finitely supported target law.}
Finitely supported probability measures are dense in
\(\mathcal P_2(\mathbb R^p)\). We may therefore choose
\[
    \nu
    =
    \sum_{i=1}^N p_i\delta_{y_i},
    \qquad
    p_i>0,
    \qquad
    \sum_{i=1}^N p_i=1,
\]
such that
\[
    W_2(\mu,\nu)<\frac{\eta}{3}.
\]
Let \(F_R\) be the distribution function of \(R_T\), and define
\[
    P_i:=\sum_{j=1}^i p_j,
    \qquad
    i=1,\ldots,N-1.
\]
Since \(R_T\) is atomless, \(F_R\) is continuous. We may therefore
choose
\[
    q_1<\cdots<q_{N-1}
\]
such that
\[
    F_R(q_i)=P_i.
\]
With \(q_0=-\infty\) and \(q_N=+\infty\), define
\[
    H(r)
    :=
    y_i
    \quad\text{whenever}\quad
    q_{i-1}<r\leq q_i.
\]
It follows that
\[
    \mathcal L\bigl(H(R_T)\bigr)=\nu.
\]
Equivalently,
\[
    H(r)
    =
    y_1
    +
    \sum_{i=1}^{N-1}
    (y_{i+1}-y_i)
    \mathbf 1_{\{r>q_i\}}.
\]

For \(a>0\), define the sigmoid approximation
\[
    H_a(r)
    :=
    y_1
    +
    \sum_{i=1}^{N-1}
    (y_{i+1}-y_i)
    \sigma\bigl(a(r-q_i)\bigr).
\]
Since
\[
    \sigma\bigl(a(r-q_i)\bigr)
    \longrightarrow
    \mathbf 1_{\{r>q_i\}}
\]
for every \(r\neq q_i\), and
\(\mathbb P(R_T=q_i)=0\), we have
\[
    H_a(R_T)\longrightarrow H(R_T)
    \qquad\text{almost surely}.
\]
The family \(\{H_a(R_T):a>0\}\) is uniformly bounded. Hence, by
dominated convergence,
\[
    \left\|
        H_a(R_T)-H(R_T)
    \right\|_{L^2}
    \longrightarrow 0.
\]
Choose \(a>0\) such that
\[
    \left\|
        H_a(R_T)-H(R_T)
    \right\|_{L^2}
    <
    \frac{\eta}{3}.
\]

\medskip
\noindent
\emph{Step 3: realisation by the gated layer.}
Choose \(d\geq N\). Disable the scale gate by taking
\(W_{iF}^{(2)}=0\), together with a zero bias in that branch, so that
\[
    \alpha_t^{(2)}=\mathbf 1,
\]
and set all diffusion coefficients of the second layer equal to zero.
Choose its diagonal drift matrix to be
\[
    A^{(2)}=-\lambda I_d,
    \qquad
    \lambda>0.
\]

Use one coordinate \(V^{0,\lambda}\) to generate the constant feature
and \(N-1\) coordinates \(V^{i,a,\lambda}\) to generate the shifted
sigmoid features:
\begin{align*}
    \dd V_t^{0,\lambda}
    &=
    -\lambda V_t^{0,\lambda}\,\dd t
    +
    \lambda\frac{t}{T}\,\dd t,\\
    \dd V_t^{i,a,\lambda}
    &=
    -\lambda V_t^{i,a,\lambda}\,\dd t
    +
    \lambda\frac{t}{T}
    \sigma\left(
        aR_t-aq_i\frac{t}{T}
    \right)\dd t,
    \qquad
    i=1,\ldots,N-1.
\end{align*}
These dynamics are contained in the stated offset-gate
parameterisation. Indeed, because the gate input contains both \(R_t\)
and \(t\), suitable rows of the gate matrices give
\[
    W_{go,i}^{(2)}x_t^{(1)}
    =
    aR_t-aq_i\frac{t}{T},
    \qquad
    W_{io,i}^{(2)}x_t^{(1)}
    =
    \lambda\frac{t}{T}.
\]
For the constant coordinate, take
\[
    W_{go,0}^{(2)}x_t^{(1)}=0,
    \qquad
    W_{io,0}^{(2)}x_t^{(1)}
    =
    2\lambda\frac{t}{T},
\]
and use \(\sigma(0)=1/2\).

We claim that, as \(\lambda\to\infty\),
\begin{align*}
    V_T^{0,\lambda}
    &\longrightarrow 1,\\
    V_T^{i,a,\lambda}
    &\longrightarrow
    \sigma\bigl(a(R_T-q_i)\bigr),
    \qquad
    i=1,\ldots,N-1,
\end{align*}
in \(L^2\). To see this, let \(\phi\) be any bounded process which is
continuous in \(L^2\) at \(T\). The solution of
\[
    \dd U_t^\lambda
    =
    -\lambda U_t^\lambda\,\dd t
    +
    \lambda\phi_t\,\dd t
\]
satisfies
\[
    U_T^\lambda
    =
    e^{-\lambda T}U_0^\lambda
    +
    \int_0^T
    \lambda e^{-\lambda(T-s)}\phi_s\,\dd s.
\]
Consequently,
\[
\begin{split}
    \left\|U_T^\lambda-\phi_T\right\|_{L^2}
    &\leq
    e^{-\lambda T}
    \left(
        \left\|U_0^\lambda\right\|_{L^2}
        +
        \left\|\phi_T\right\|_{L^2}
    \right)\\
    &\quad+
    \int_0^T
    \lambda e^{-\lambda(T-s)}
    \left\|\phi_s-\phi_T\right\|_{L^2}\,\dd s,
\end{split}
\]
which converges to zero. Applying this observation with
\[
    \phi_t=\frac{t}{T}
\]
and
\[
    \phi_t
    =
    \frac{t}{T}
    \sigma\left(
        aR_t-aq_i\frac{t}{T}
    \right)
\]
proves the claim.

Choose the terminal readout so that
\[
    Y_T^{a,\lambda}
    =
    y_1V_T^{0,\lambda}
    +
    \sum_{i=1}^{N-1}
    (y_{i+1}-y_i)V_T^{i,a,\lambda}.
\]
Then
\[
    Y_T^{a,\lambda}
    \longrightarrow
    H_a(R_T)
    \qquad\text{in }L^2
\]
as \(\lambda\to\infty\). Choose \(\lambda\) sufficiently large that
\[
    \left\|
        Y_T^{a,\lambda}-H_a(R_T)
    \right\|_{L^2}
    <
    \frac{\eta}{3}.
\]

Finally, using the coupling inequality
\[
    W_2\bigl(\mathcal L(X),\mathcal L(X')\bigr)
    \leq
    \|X-X'\|_{L^2},
\]
we obtain
\begin{align*}
    W_2\bigl(
        \mathcal L(Y_T^{a,\lambda}),
        \mu
    \bigr)
    &\leq
    \left\|
        Y_T^{a,\lambda}-H_a(R_T)
    \right\|_{L^2}\\
    &\quad+
    \left\|
        H_a(R_T)-H(R_T)
    \right\|_{L^2}
    +
    W_2(\nu,\mu)
    <\eta.
\end{align*}
Since \(\mu\) and \(\eta\) were arbitrary, the result follows.

\medskip
\noindent
\emph{Step 4: discretised implementation.}
Once \(a\) and \(\lambda\) have been fixed, the configuration of Steps~1--3 is a fixed two-layer model, and its discretisation on a grid \(\pi\) converges. The first layer has constant coefficients and additive noise only, so its Euler--Maruyama iterates are exact,
\[
    \bar Z^{(1)}_k
    =
    \iota_d(z_0)+\beta e_1W^1_{t_k},
\]
and the discrete gate of the second layer therefore reads the exact feature \(R_{t_k}\) at the grid points. The second layer has no diffusion, so, pathwise, it is the linear ordinary differential equation
\[
    \dd U_t^\lambda
    =
    -\lambda U_t^\lambda\,\dd t
    +
    \lambda\phi_t\,\dd t
\]
of Step~3, with a forcing \(\phi\) that is bounded and continuous in \(t\). Its discretisation is the explicit Euler scheme, which converges to \(U^\lambda\) uniformly on \([0,T]\) along every path as \(|\pi|\to0\), and which is bounded by \(\|z_0\|+\lambda T\sup_t|\phi_t|\) once \(|\pi|\le1/\lambda\). By dominated convergence, the discretised terminal output \(Y_T^{a, \lambda,\pi}\) satisfies
\[
    \left\|
        Y_T^{a, \lambda,\pi}-Y_T^{a, \lambda}
    \right\|_{L^2}
    \longrightarrow 0
    \qquad\text{as}\qquad
    |\pi|\longrightarrow0.
\]
When \(z_0\) has finite moments of all orders this is also a special case of Theorem~\ref{thm:disc-error}. Then the same density conclusion holds for the discretised
architecture when the mesh is allowed to vary. Indeed,
\[
    W_2\left(
        \mathcal L(Y_T^{a, \lambda,\pi}),
        \mathcal L(Y_T^{a, \lambda})
    \right)
    \leq
    \left\|
        Y_T^{a, \lambda,\pi}-Y_T^{a, \lambda}
    \right\|_{L^2},
\]
and the discretisation is refined only after the finite parameters
\(a\) and \(\lambda\) have been selected.
\end{proof}

\newpage

\section{Implementation Details}
\label{app:implementation-details}

This appendix expands the implementation choices summarised in the main
text. All reported SLiSDE results use the gated in-flow stack (Section~\ref{ssec:gated-stacking}) with structured block-diagonal base layers, an always-on RMS normalisation of the gate input, optional causal convolutions, and time-feature augmentation; residual stacking is not used. The Girsanov tilt described
in Section~\ref{sec:girsanov} is applied only to the final layer, with the
correlated-driver construction of Eq.~\ref{eq:correlated-driver}.

\subsection{Discretisation of a linear layer}
\label{app:discretisation}

On the interval $[t_k,t_{k+1}]$, the affine linear SDE Eq.~\ref{eq:linear-sde} admits the exponential approximation
\begin{equation}
\begin{gathered}
    Z_{k+1}=F_kZ_k+g_k,
    \qquad
    g_k=b\dt+D\dW_k,\\
    F_k=\exp\left(
        \left(A-\frac12\sum_{j=1}^m(C^j)^2\right)\dt
        +\sum_{j=1}^m C^j\,\dW_k^j
    \right).
\end{gathered}
    \label{eq:exponential-affine-flow}
\end{equation}
The correction is the It\^o correction. If $A,C^1,\ldots,C^m$
commute pairwise, then $F_k$ is the exact flow of the homogeneous
It\^o equation; otherwise it is an exponential approximation. The
affine term $g_k$ remains a first-order approximation.

To avoid computing matrix exponentials, we use the Euler--Maruyama
transition
\begin{equation}
    Z_{k+1}
    =
    \left(I+A\dt+\sum_{j=1}^m C^j\,\dW_k^j\right)Z_k
    +b\dt+D\dW_k,
    \label{eq:linear-approx-flow}
\end{equation}
which preserves the affine recurrence structure and is used throughout
our experiments. With the time-dependent coefficients of Appendix~\ref{app:time-dependent-coefficients}, the static coefficients are replaced by their values at $t_k$ in both transitions; the strong error of the Euler--Maruyama scheme is analysed in Appendix~\ref{app:proof-disc}.

\subsection{Time-dependent coefficient map}
\label{app:time-dependent-coefficients}

Each base layer (Section~\ref{ssec:base-layer}) stores static structured coefficients
$(A,b,C^1,\ldots,C^m,d^1,\ldots,d^m)$ and can be made time-dependent
through a small deterministic decoder. For each grid time $t_k$ we form

\[
    \tau(t_k)
    =
    \bigl(1,\, t_k,\, t_k^2,\, \sin(\omega t_k),\, \cos(\omega t_k)\bigr),
    \qquad
    h(t_k) = \tanh \bigl(U\tau(t_k)+c\bigr),
\]

and add structure-aware corrections to the free coefficient entries:

\[
    A_{t_k} = A + \Delta A \bigl(h(t_k)\bigr),
    \quad
    b_{t_k} = b + \Delta b \bigl(h(t_k)\bigr),
    \quad
    C_{t_k}^j = C^j + \Delta C^j \bigl(h(t_k)\bigr).
\]

For additive diffusion we use a per-Brownian-channel time gate on the
dense diffusion matrix, equivalently producing $d_{t_k}^j$ from
$d^j$ and $h(t_k)$. The decoders respect the chosen matrix structure:
diagonal layers decode only diagonal entries, block-diagonal layers decode
only block entries, and dense layers decode full matrices. Decoders are
zero-initialised, so the model starts from the time-homogeneous structured
SDE and learns time variation only when useful.

\subsection{Feature conditioning in gated layers}
\label{app:feature-conditioning}

The gated layer consumes the previous layer path at the same time
grid as the current layer. The simplest gate input is
$\operatorname{Norm}(Z_k^{(\ell-1)})$, where $Z_k^{(\ell-1)}$ is the model output after $\ell-1$ layers and the normalisation is RMSNorm
or LayerNorm (RMSNorm by default, applied at every gated layer). Note that normalisation controls the \emph{scale} of the gate input but does not by itself bound the paths; the offset gate is nevertheless bounded for every fixed choice of its weights, which is what the analysis uses (Appendix~\ref{app:assumptions}). In the implementation this signal can be enriched in two
ways.

\paragraph{Causal convolution (for gated-in-flow stacking only).}
Before normalisation, the previous-layer path can be passed through a
left-padded depthwise causal convolution along the time axis. The value at
time $t_k$ depends only on previous-layer states up to $t_k$, so the gate is
adapted and does not leak future information. The convolution is wrapped in
a residual update and its kernel is small at initialisation, making the
initial behaviour close to the no-convolution case while still allowing the
gate to learn short temporal summaries.

\paragraph{Time features.}
The deterministic vector
\begin{equation}
    \tau_{\mathrm{gate}}(t)
    =
    (t,t^2,\sin(\omega t),\cos(\omega t))
    \label{eq:time-features}
\end{equation}
can be concatenated to the gate input after convolution and after
normalisation. Appending time features after
normalisation preserves their scale, while keeping the convolution focused
on the stochastic path rather than on deterministic batch-constant signals.
With both refinements, the gate input is
\begin{equation}
    x_k
    =
    \bigl[
        \operatorname{Norm} \left(\operatorname{CausalConv}(H^{(\ell-1)})_k\right),
         t_k, t_k^2, \sin(\omega t_k), \cos(\omega t_k)
    \bigr].
    \label{eq:full-gate-input-appendix}
\end{equation}

\subsection{Gated in-flow implementation}\label{ssec:gated_in_flow_app}
The paper focuses on the gated in-flow stacking. Given the conditioned feature $x_k^{(\ell-1)}$
from Eq.~\ref{eq:full-gate-input-appendix}, the layer constructs a
bounded diagonal scale

\begin{equation}
    \alpha_k^{(\ell)}
    =
    \mathbf{1}
    +
    \varepsilon\,
    \sigma \bigl(W_{gF}^{(\ell)}x_k^{(\ell-1)}\bigr)
    \odot
    \tanh \bigl(W_{iF}^{(\ell)}x_k^{(\ell-1)}\bigr),
    \label{eq:alpha-gate}
\end{equation}

and an affine residual offset

\begin{equation}
    o_k^{(\ell)}
    =
    \sigma \bigl(W_{go}^{(\ell)}x_k^{(\ell-1)}\bigr)
    \odot
    \bigl(W_{io}^{(\ell)}x_k^{(\ell-1)}+c_{io}^{(\ell)}\bigr).
    \label{eq:offset-gate}
\end{equation}

All gate maps carry trainable biases, omitted from the displays except $c_{io}^{(\ell)}$. The $\tanh$ bound gives
$\alpha_k^{(\ell)}\in(\mathbf{1}-\varepsilon,\mathbf{1}+\varepsilon)$
coordinatewise. The scale $\alpha_k^{(\ell)}$ is applied only to the
diagonal entries of the structured drift matrix $F_k^{(\ell)}$, and the
offset $o_k^{(\ell)}$ is added to the affine intercept $g_k^{(\ell)}$.
Gate kernels are initialised at small scale, so initially
$\alpha_k^{(\ell)}\approx\mathbf{1}$ and $o_k^{(\ell)}\approx\mathbf 0$:
the stacked model starts close to an unmodulated structured SDE and learns
cross-layer coupling gradually. We fix $\varepsilon = 0.1$ throughout.

The offset $o_k^{(\ell)}$ is an \emph{unbounded} GLU: the gate $\sigma(W_{go}^{(\ell)}x_k^{(\ell-1)})$ is bounded, while the value branch $W_{io}^{(\ell)}x_k^{(\ell-1)}+c_{io}^{(\ell)}$ is linear in the gate input. Because the gate input is normalised and therefore lies in a ball of radius $\Lambda_x$, $\|o_k^{(\ell)}\|\le\|W^{(\ell)}_{io}\|\Lambda_x+\|c^{(\ell)}_{io}\|$ for every fixed choice of weights, which is the boundedness used in Appendix~\ref{app:proofs}; the magnitude itself is free, which is what the proof of Theorem~\ref{thm:wasserstein-density-terminal-laws} exploits. The drift scale $\alpha_k^{(\ell)}$ is applied only to the diagonal entries of the structured transition matrix $F_k^{(\ell)}$ (a diagonal, row-wise Hadamard action), while $o_k^{(\ell)}$ is added directly to the affine intercept $g_k^{(\ell)}$ with no additional $\dt$ factor. Unless a prior is learned, the shared initial state is $Z_0\sim\mathcal N(0,I_d)$, sampled once and reused by every layer.

\subsection{Last-layer Girsanov tilt: design details}
\label{app:girsanov-details}
This subsection collects the design choices behind Section~\ref{sec:girsanov}.

\paragraph{Why the initial state is tilted.} The innovation of the final layer is not the only source of randomness the rare event depends on. In a gated stack the random initial latent $z_0\sim\mathcal N(0,I_d)$ carries a large share of the terminal variance -- on the \textsc{toy} model of Section~\ref{sec:experiments} the final-layer innovation explains only about a quarter of $\operatorname{Var}(Y_T)$ -- and a controller acting on the innovation alone cannot reach the rare region: every such variant we trained kept the hit rate of the tilted paths at the plain Monte-Carlo level for $p\le10^{-3}$. Under $\QQ$, $z_0=\mu_\theta+\xi$ with $\xi\sim\mathcal N(0,I_d)$ and a learned $\mu_\theta\in\R^d$ (one per controller), which contributes the closed-form factor $\mu_\theta^\top z_0-\tfrac12\|\mu_\theta\|^2$ of Eq.~\ref{eq:rn-discrete} to the log-likelihood ratio and $\tfrac12\|\mu_\theta\|^2$ to $\KL(\QQ\|\Prob)$. The prefix driver $W^{\mathrm{prefix}}$ keeps its law, so the prefix layers are computed by the same map under both measures; only the distribution of their starting point moves.

\paragraph{Controller.} The control $u_{\theta,k}$ is an adapted function of the reference innovation history $\{\dd\epsilon_j^{\Prob}\}_{j<k}$ and of stop-gradient features of the prefix latent path $\{L_j^{(L-1)}\}_{j\le k}$; it is parameterised by the diagonal state-space recurrence of Appendix~\ref{app:girsanov-controller-details}, driven by both inputs, with recurrence rates constrained to $(0,1)$ and zero drift at initialisation.

\paragraph{Objectives and gradient convention.} Backbone and controller are trained on different objectives with separate optimisers. The backbone minimises the calibration loss, in which each far-tail term is estimated on the tilted batch by self-normalised importance sampling (Appendix~\ref{app:snis-ess}), combined with the plain estimate on the reference batch in proportion to the two effective sample sizes. The controller is trained by the cross-entropy method: it minimises a Monte-Carlo estimate of $\KL(\QQ^\star\|\QQ_\theta)$, where $\QQ^\star\propto|G|\,\Prob$ is the zero-variance proposal of the rare-event functional, under a quadratic wall on $\KL(\QQ_\theta\|\Prob)$ that keeps the proposal in the regime where the weights remain usable (Appendix~\ref{app:kl-regulariser}). The gradient convention matters: the cross-entropy objective is a score-function estimator and requires the log-ratio Eq.~\ref{eq:rn-discrete} to be differentiated with the sampled path held fixed, whereas Monte-Carlo functionals of the weights, such as the KL wall, require the reparameterised form in which the tilted noise is held fixed; the two forms have the same value and different fixed points, and using the reparameterised one for the cross-entropy objective drives the tilt away without bound.

\paragraph{Which estimator.} Because the ratio is normalised, the vanilla importance-sampling estimator $n_q^{-1}\sum_iw_iG(X^{(i)})$ is unbiased. For the calibration loss we use the self-normalised estimator (Appendix~\ref{app:snis-ess}): the weights are computed pathwise in log-space, self-normalisation is invariant to a common log-shift, so a max-shift can be applied before exponentiation and no single extreme path can dominate through overflow, the ratio form cancels weight fluctuations shared by numerator and denominator, and its $O(1/n_q)$ bias is negligible at our operating point. For rare-event probabilities the vanilla estimator is the better one: the initial-state shift gives large weights to tilted paths that do not reach the event, and those paths enter only the normalisation of the self-normalised estimator. Both estimators are compared in Appendix~\ref{app:additional-results}.

\subsection{Girsanov controller}
\label{app:girsanov-controller-details}

The controller returns an adapted sequence $u_{\theta,k}\in\R^m$ that enters the Radon--Nikodym formula Eq.~\ref{eq:rn-discrete}. It is a diagonal state-space recurrence with two branches, one driven by the tilted innovation and one by the stop-gradient prefix latent path,
\[
\begin{gathered}
    s_{k+1}^W=\diag(r_W)\,s_k^W+U_W\,\Delta\epsilon^{\QQ}_k,
    \qquad
    s_{k+1}^Z=\diag(r_Z)\,s_k^Z+U_Z\,L_k^{(L-1)},\\
    u_{\theta,k}=W_{\mathrm{mix}}\bigl[s_k^W,s_k^Z\bigr]+c ,
\end{gathered}
\]
so that $u_{\theta,k}$ depends on the increments before step $k$ only. The recurrence is itself affine and can be scanned in parallel.

\paragraph{Implementation details.} Three details proved necessary.
\begin{enumerate}[label=(\roman*),leftmargin=1.8em,itemsep=0.1em,topsep=0.2em]
    \item The recurrence rates of both branches are parameterised as $\exp(-\operatorname{softplus}(-r))\in(0,1)$, initialised at $0.9997$, because an unconstrained rate can drift above one, after which the recurrence grows geometrically over the horizon and overflows, producing non-finite weights.
    \item The mixing weights of the latent-path branch are initialised at zero, so that the untrained controller is the identity tilt with $\KL(\QQ\|\Prob)=0$; with a small random initialisation the branch, which integrates the $O(1)$ prefix latent over the whole horizon, gives the untrained controller a divergence of about $8$ nats per controller, i.e.\ a collapsed proposal before any training.
    \item The initial-state shift $\mu_\theta$ is a free vector per controller initialised at zero.
\end{enumerate}
Controller parameters are trained by Adam with learning rate $2\cdot10^{-3}$ and gradient clipping at norm one, separately from the backbone optimiser: with a single optimiser and a shared global-norm clip, the cross-entropy gradient, three orders of magnitude larger than the calibration gradient, scaled the backbone updates to zero after the controller was activated.

\subsection{Self-normalised importance sampling and ESS}
\label{app:snis-ess}

Given tilted samples $\{X^{(i)}\}_{i=1}^{n_q}$ drawn under $\QQ$ with
unnormalised weights
\[
    w_i = \frac{\dd\Prob}{\dd\QQ}(X^{(i)}),
\]
the self-normalised importance-sampling (SNIS) estimator of
$\E^{\Prob}[G]$ is
\begin{equation}
    \widehat{\E}^{\Prob,\mathrm{SNIS}}[G]
    =
    \frac{\sum_{i=1}^{n_q}w_iG(X^{(i)})}
         {\sum_{i=1}^{n_q}w_i}.
    \label{eq:snis-appendix}
\end{equation}
The estimator is biased at finite $n_q$ but consistent. We monitor the
quality of the tilted sample through the effective sample size
\begin{equation}
    \ESS
    =
    \frac{\bigl(\sum_{i=1}^{n_q}w_i\bigr)^2}
         {\sum_{i=1}^{n_q}w_i^2}
    \in [1,n_q].
    \label{eq:ess-appendix}
\end{equation}

$\ESS=n_q$ iff all weights are equal; $\ESS=1$ iff one weight dominates.
In practice we compute the log-weights $\log w_i = \log(\dd\Prob/\dd\QQ)(X^{(i)})$ directly from the discrete Radon--Nikodym density Eq.~\ref{eq:rn-discrete} and normalise them with a numerically stable log-sum-exp, so that the SNIS estimator Eq.~\ref{eq:snis-appendix} and the $\ESS$ ratio remain well behaved even when a few tilted paths dominate. In training logs we report the ratio $\ESS/n_q\in[1/n_q,1]$ for comparability across batch sizes; a controller that drives $\ESS/n_q$ toward $1/n_q$ signals weight collapse and is held back by the KL wall of Appendix~\ref{app:kl-regulariser}. For rare-event probabilities we report the vanilla estimator $n_q^{-1}\sum_iw_iG(X^{(i)})$ as the primary one: with the initial-state shift, tilted paths that do not reach the event can carry large weights, which affect only the denominator of Eq.~\ref{eq:snis-appendix} and make the self-normalised estimator markedly worse (Table~\ref{tab:girsanov-full}). Its truncated variant caps the weights at $\sqrt{n_q}\,\E^{\QQ}[w]=\sqrt{n_q}$, using the known mean rather than the sample mean, which the very weights to be tamed would inflate.

\subsection{Training objectives of backbone and controller}\label{app:kl-regulariser}

\paragraph{Backbone.} The backbone parameters minimise the calibration loss Eq.~\ref{eq:calib-loss}. Terms that involve a rare-event functional $G$ are estimated on the tilted batch by the self-normalised estimator Eq.~\ref{eq:snis-appendix}, and, in the combined variant, averaged with the plain estimate on the reference batch of size $n_p$ with the weights $n_p/(n_p+\ESS)$ and $\ESS/(n_p+\ESS)$, held fixed for the gradient; all other terms are plain Monte-Carlo estimates on the reference batch. The backbone receives no gradient through the weights.

\paragraph{Controller.} The controller parameters $\theta_c=(u_\theta,\mu_\theta)$ minimise the cross-entropy objective with a KL wall,
\begin{equation}
\begin{gathered}
    \widehat{\mathcal J}(\theta_c)
    =
    -\sum_{i=1}^{n_q}\bar w^\star_i\,\log\frac{\dd\QQ_{\theta_c}}{\dd\Prob}(X^{(i)})
    +\lambda_{\KL}\,\widehat{\KL}(\QQ_{\theta_c}\|\Prob)
    +\bigl(\widehat{\KL}(\QQ_{\theta_c}\|\Prob)-\kappa\bigr)_+^2 ,\\
    \bar w^\star_i\propto|G(X^{(i)})|\,\frac{\dd\Prob}{\dd\QQ_{\theta_c}}(X^{(i)}),
\end{gathered}
    \label{eq:kl-objective}
\end{equation}
where the self-normalised target weights $\bar w^\star_i$ are held fixed for the gradient (their log-ratios clipped at $\pm5$ so that no single path can dominate the update), $\kappa$ is the wall in nats ($6$ on \textsc{dax}; $-\log p+2$ for an event of nominal probability $p$ on \textsc{toy}), and $\widehat{\KL}$ is the energy form $n_q^{-1}\sum_i\bigl(\tfrac12\sum_k\|u_{\theta,k}\|^2\dt+\tfrac12\|\mu_\theta\|^2\bigr)$, whose $\QQ$-expectation equals $\KL(\QQ\|\Prob)$ exactly. The first term is the Monte-Carlo estimate of $\KL(\QQ^\star\|\QQ_{\theta_c})$ up to a constant, with $\QQ^\star\propto|G|\,\Prob$ the zero-variance proposal; its minimiser is the moment match $\E^{\QQ^\star}$ of the sufficient statistics of the tilt (the conditional mean of $z_0$ and of the innovation increments given the rare event). The wall is needed because the weak penalty $\lambda_{\KL}=10^{-4}$ alone does not prevent the collapse of the proposal onto a handful of paths once the target weights degenerate.

\paragraph{Gradient convention.} Eq.~\ref{eq:rn-discrete} can be differentiated in two ways with the same value. In the score form the sampled path is held fixed: $\partial_{u_{\theta,k}}\log(\dd\QQ/\dd\Prob)=\Delta\epsilon^{\QQ}_k$ and $\partial_{\mu_\theta}=z_0-\mu_\theta$, the scores of the tilted law, which is what the cross-entropy term requires. In the reparameterised form the tilted noise is held fixed and the derivative is $\Delta\epsilon^{\Prob}_k$ (respectively $z_0$), which is the correct total derivative of a $\QQ$-expectation of a function of the weights, such as the KL wall or the effective sample size. Using the reparameterised form inside the cross-entropy term gives an update whose fixed point is not the moment match and which pushes the tilt outward without bound.

\newpage
\section{Experimental Setup and Hyperparameter Grids}
\label{app:expset}
\label{app:hyperparams}
\subsection*{\textsc{toy} Dataset}

\textsc{toy} is a controlled synthetic benchmark designed to separate path
fitting from path-functional calibration. The ground truth is a
three-component nonlinear It\^o SDE on $[0,T]$ with $T = 1$, state
$X_t = (X^1_t, X^2_t, X^3_t)$ and diagonal state-dependent diffusion:
\begin{align*}
    \dd X^1_t &= -X^1_t\bigl((X^1_t)^2 - 1\bigr)\dd t
         +  \sqrt{|X^1_t| + 1} \sqrt[3]{X^1_t + 5} \dd W^1_t,
        & X^1_0 &= 0.5,\\
    \dd X^2_t &= \sin(X^2_t)\bigl(2 - (X^2_t)^2\bigr)\dd t
         +  \sqrt[3]{1 + (X^2_t)^2} \dd W^2_t,
        & X^2_0 &= 0,\\
    \dd X^3_t &= \bigl(X^3_t - (X^3_t)^3/3\bigr)\dd t
         +  \exp \bigl(-(X^3_t)^2/4\bigr)\,X^3_t\,\dd W^3_t,
        & X^3_0 &= 1,
\end{align*}

with mutually independent driving Brownians. The observable is the linear
combination
$$Y_t = 0.40\,X^1_t + 0.35\,X^2_t + 0.25\,X^3_t.$$

We simulate $2{\times}10^{5}$ Monte-Carlo paths via Euler--Maruyama on a
uniform grid of $N = 2048$ steps. From the simulated paths we compute four classes of supervised path-functional targets at the evaluation times
$T_i\in\mathcal T = \{0.1,\,0.25,\,0.5,\,0.75,\,1.0\}$.

These targets are:
\begin{itemize}[leftmargin=1.2em,topsep=0.2em]
    \item \textbf{Thresholded positive-part functionals} at levels
    $\mathcal K = \{-0.5,-0.25,0,0.25,0.5,0.75,1.0\}$:
    $$\E[\max(Y_{T_i} - K, 0)].$$
    \item \textbf{Running maximum}: $$\E[\max_{0\le t\le T_i}Y_t].$$
    \item \textbf{Squared path average}:
    $$\E \bigl[\tfrac{1}{T_i}\int_0^{T_i} Y_t^2\,\dd t\bigr],$$ computed on the
    discrete grid.
    \item \textbf{Threshold-crossing probabilities}
    $$\mathbb P(\max_{0\le t\le T_i} Y_t \ge H_j)$$ at three threshold levels $H_j$ chosen as the $80\%$, $90\%$, and $95\%$ empirical quantiles of the terminal running maximum; in the objective the indicator is replaced by a smooth approximation (a sigmoid of the excess over the threshold with a fixed temperature), so that Eq.~\ref{eq:calib-loss} is differentiable.
\end{itemize}

The threshold-crossing and high-level positive-part targets are deliberately
tail-sensitive path functionals. Errors on typical-path statistics and on tail functionals need
not agree: a model may fit typical trajectories well while still missing
the rare excursions or extrema that determine these calibration targets.
Thus \textsc{toy} provides a controlled setting for evaluating whether a
model can match both ordinary path behaviour and rare path-dependent
statistics.

\subsection*{\textsc{DAX} Dataset}

\textsc{dax} is a real-data functional calibration benchmark built from
historical European option quotes on the DAX index. The purpose of the
benchmark is to test whether a stochastic path generator can match a large
collection of expectations of nonlinear functions of its terminal value.
Each option quote is treated as a supervised target of the form
\[
    \mathbb E[\varphi(Y_T;K)],
\]
where \(Y\) is the generated one-dimensional path, \(T\) is the evaluation
time, \(K\) is a threshold level, and \(\varphi\) is a payoff function. All
thresholds and target values are normalised by the spot level
\(S_0=25{,}280\), so that the generated process starts from \(Y_0=1\) and
\(K_{\mathrm{norm}}=K/S_0\).

\paragraph{Option payoffs (for the non-specialist).}
A European option on the index with maturity $T$ and strike $K$ pays a fixed function of the terminal value $Y_T$ alone. The two basic instruments are the \emph{call} and the \emph{put}, with payoffs
\begin{equation}
    \varphi_{\mathrm{call}}(Y_T;K)=(Y_T-K)_+,
    \qquad
    \varphi_{\mathrm{put}}(Y_T;K)=(K-Y_T)_+,
    \qquad
    (x)_+:=\max(x,0).
    \label{eq:option-payoffs}
\end{equation}
Under the normalised, discounting-free pricing convention used here, the quoted price of an option is the expectation
\begin{equation}
    c_{T,K}=\mathbb E\bigl[\varphi(Y_T;K)\bigr],
    \label{eq:option-price}
\end{equation}
which is exactly the supervised target above. A strike is \emph{at-the-money} when $K\approx S_0$ (i.e.\ $K_{\mathrm{norm}}\approx1$), \emph{in-the-money} when the option would pay out if exercised at the current level, and \emph{out-of-the-money} (OTM) otherwise, that is, a call with $K>S_0$ or a put with $K<S_0$. OTM prices are small and depend only on the tails of the terminal law, which is why the far-tail groups are the hardest part of the benchmark. The threshold-indicator targets approximate digital payoffs through finite differences of nearby put prices in the strike,
\begin{equation}
    \mathbb P(Y_T\le K)
    =
    \partial_K\,\mathbb E\bigl[(K-Y_T)_+\bigr]
    \approx
    \frac{\mathbb E\bigl[(K+h-Y_T)_+\bigr]-\mathbb E\bigl[(K-h-Y_T)_+\bigr]}{2h},
    \label{eq:digital-payoff}
\end{equation}
and analogously $\mathbb P(Y_T>K)=-\partial_K\,\mathbb E[(Y_T-K)_+]$ from call prices.

The dataset is intentionally dense: at each available maturity we include
several groups of terminal functionals covering both typical and tail
regions of the path distribution:
\begin{itemize}[leftmargin=1.2em,itemsep=0.1em,topsep=0.2em]
    \item \(25\) positive-part upper-tail targets,
    \(K_{\mathrm{norm}}\in[0.88,1.12]\), corresponding to
    \(\mathbb E[(Y_T-K)_+]\);
    \item \(25\) positive-part lower-tail targets,
    \(K_{\mathrm{norm}}\in[0.88,1.12]\), corresponding to
    \(\mathbb E[(K-Y_T)_+]\);
    \item \(20\) far lower-tail targets,
    \(K_{\mathrm{norm}}\in[0.60,0.88]\);
    \item \(15\) far upper-tail targets,
    \(K_{\mathrm{norm}}\in[1.08,1.35]\);
    \item \(12\) threshold-indicator targets, synthesised from differences
    of nearby lower-tail positive-part targets,
    \(K_{\mathrm{norm}}\in[0.65,0.98]\).
\end{itemize}
The target threshold levels are evenly spaced on each interval and are
matched to the nearest observed threshold in the raw quote table. Maturity
indices are recomputed on the model time grid with \(T=1\) and
\(N=1024\) steps.

The far-tail groups are the most challenging part of the benchmark because
they depend on rare terminal events under the generated path distribution.
When the Girsanov overlay is used for \textsc{dax}, \textbf{we use two separate
controllers}: one targets the far lower-tail functionals and the other
targets the far upper-tail functionals. This lets each controller focus on
a different rare region of path space, rather than requiring a single tilt
to cover both tails simultaneously.

\textbf{Train/eval split.} Evaluation uses a structured hold-out over strike levels. For each group of targets, one quarter of the strikes are held out for evaluation and the remaining entries are used for training. The held-out strikes are selected non-contiguously, so that the training and evaluation sets cover overlapping ranges. This tests interpolation across the surface rather than extrapolation outside the observed range.

\subsection*{\textsc{SPX} Dataset}
\textsc{spx} is a second real-data option-surface benchmark, constructed identically to \textsc{dax} but from historical European option quotes on the S\&P~500 index. As for \textsc{dax}, each quote is a supervised target $\mathbb E[\varphi(Y_T;K)]$ with the call and put payoffs defined above; all strikes and target values are normalised by the spot level so that $Y_0=1$ and $K_{\mathrm{norm}}=K/S_0$, and maturities are recomputed on the model grid with $T=1$ and $N=1024$ steps. The surface is organised into the same five target groups (near-tail calls and puts, far lower- and upper-tail targets, and threshold-indicator targets) spanning comparable normalised-strike ranges, and evaluation uses the same structured non-contiguous $25\%$ strike hold-out, testing interpolation across the surface. When the Girsanov overlay is used, we again employ two controllers, one per far tail. \textsc{spx} serves as an out-of-sample check that the architecture and calibration procedure transfer across underlyings; results are reported in Table~\ref{tab:spx-full} of Appendix~\ref{app:additional-results}.

\subsection*{Training protocol}

All models use the same optimiser and learning-rate schedule unless otherwise stated: AdamW with $\beta=(0.9,0.999)$, peak learning rate $10^{-3}$, $100$ warm-up steps followed by cosine decay, gradient clipping at $1$, batch size $1024$, evaluation batch size $2048$, $1\,000$ epochs and seven seeds per configuration; every reported number is a mean $\pm$ standard error over the seven seeds of the cell in question. The models are simulated on $N=512$ time steps. SLiSDE uses the offset gate of Eq.~\ref{eq:offset-gate} with the RMS-normalised gate input; the search grids are listed in Appendix~\ref{app:grids}, and the best cell per family and depth is reported.

For Girsanov runs the controller is kept inactive for the first $200$ epochs. After activation each training step simulates $1\,024$ reference paths and two tilted batches of $512$ paths, one per controller (far puts, far calls), with $\rho=0$. Each far-tail term is estimated by self-normalised importance sampling on its tilted batch, combined with the plain estimate on the reference batch in proportion to the effective sample sizes (the `tilted batch only' variant omits the reference-batch estimate); all other terms are evaluated under $\Prob$ on the reference batch. The controllers minimise Eq.~\ref{eq:kl-objective} with $\lambda_{\KL}=10^{-4}$, a $6$-nat wall, target log-weights clipped at $\pm5$ and Adam with learning rate $2\cdot10^{-3}$ on their own parameters; the backbone keeps the optimiser and schedule of the vanilla runs. The tilt studies use seven seeds and $1\,000$ epochs; the tilted arms cost $1.3\times$ the epoch time of the budget-matched untilted arm ($303$ against $229$ ms).

\subsection*{Search grids}
\label{app:grids}

\paragraph{SLiSDE backbone.}
\begin{itemize}[leftmargin=1.2em,itemsep=0.1em,topsep=0.2em]
    \item matrix structure: \texttt{blockdiag} with block size
    $b\in\{1,4,8,16\}$
    \item latent dimension $d\in\{32,64\}$
    \item layers $L\in\{1,2,3\}$
    \item noise dimension $m\in\{4, 8,16\}$
    \item time-feature variant
    $\in\{\texttt{True},\texttt{False}\}$
    \item normalisation $\in\{\texttt{rmsnorm},\texttt{layernorm}\}$
    \item gate amplitude $\varepsilon = 0.1$
    \item LoRA rank of $D$ $\in\{0,8\}$
    \item time-embedding dimension $8$
\end{itemize}

\paragraph{Girsanov controllers.}
\begin{itemize}[leftmargin=1.2em,itemsep=0.1em,topsep=0.2em]
    \item tilt type
    \texttt{state\_space} with the latent-path branch, initial-state shift on
    \item controller objective $\in\{$cross-entropy, SNIS-variance$\}$; far-tail estimate $\in\{$combined, tilted batch only$\}$
    \item KL wall $\kappa\in\{2,6\}$ nats, $\lambda_{\KL}=10^{-4}$
    \item last-layer BM correlation $\rho\in\{0.0,0.9\}$
\end{itemize}

\paragraph{Neural-SDE baseline.}
\begin{itemize}[leftmargin=1.2em,itemsep=0.1em,topsep=0.2em]
    \item latent dimension $\in\{32,64\}$
    \item hidden width $\in\{64,128\}$
    \item hidden layers $\in\{2,3\}$
    \item noise dimension $m\in\{4, 8,16\}$
    \item activation $\in\{\texttt{tanh}, \texttt{GeLU}\}$
    \item diffusion type $\in\{\texttt{full},\texttt{diagonal}\}$
\end{itemize}

\paragraph{SLiCE baseline.}
\begin{itemize}[leftmargin=1.2em,itemsep=0.1em,topsep=0.2em]
    \item layers $\in\{2,3,4\}$
    \item hidden dimension $\in\{32,64\}$
    \item noise dimension $\in\{4, 8,16\}$
    \item block size $b\in\{1,4,8,16\}$
\end{itemize}

\paragraph{\textsc{dax} ablations.} Two one-factor sweeps around a two-layer \textsc{dax} configuration of SLiSDE ($d=64$, block size $8$, general noise): the noise type $\in\{$multiplicative, additive, general$\}$ at block size $8$, and the block size $\in\{1,4,8,16\}$ at general noise.

\newpage
\section{Additional Experimental Results}
\label{app:additional-results}
\label{app:ablations}

This appendix collects the results referred to from Section~\ref{sec:experiments}: the \textsc{spx} surface at two and three layers, the structural ablations (noise type and block size), the timing study, the distributional-recovery study on \textsc{toy}, and the Girsanov tilt studies on \textsc{dax} and \textsc{toy}. Unless stated otherwise, each study varies a single factor of the tuned gated in-flow cell of the corresponding main-text table, with the calibration targets held fixed.

\subsection*{\textsc{spx} option-surface dataset}
Table~\ref{tab:spx-full} reports the \textsc{spx} option-surface benchmark for two- and three-layer configurations of every family under the protocol of Table~\ref{tab:dax3-main} (held-out $25\%$ strike mask; tuned configuration per model; seven seeds). SLiSDE matches the held-out loss of SLiCE with two layers ($6.44$ against $6.43$) and attains the lowest held-out loss and the lowest far-put error with three layers, at $6$--$8\times$ fewer parameters and $3.5$--$4\times$ faster epochs. The Neural SDE is clearly worse on both metrics at both depths.

\begin{table}[ht]
\centering
\footnotesize
\captionsetup{font=footnotesize}
\setlength{\tabcolsep}{4pt}
\begin{tabular*}{\textwidth}{@{\extracolsep{\fill}} l c c c c @{}}
\toprule
Model & Params & Loss ($10^{-4}$) & Far put ($10^{-6}$) & ms / epoch \\
\midrule
Neural SDE -- $L=2$ & 79\textsc{K} & $11.90\pm0.11$ & $135.4\pm2.9$ & 94.1 \\
SLiCE -- $L=2$ & 142\textsc{K} & $\mathbf{6.43\pm0.12}$ & $\mathbf{81.4\pm4.6}$ & 170.1 \\
\textbf{SLiSDE gated in-flow -- $L=2$} & 18\textsc{K} & $6.44\pm0.40$ & $83.7\pm4.8$ & \textbf{44.3} \\
\midrule
Neural SDE -- $L=3$ & 157\textsc{K} & $10.29\pm0.13$ & $136.0\pm2.8$ & 109.7 \\
SLiCE -- $L=3$ & 213\textsc{K} & $6.53\pm0.26$ & $82.9\pm3.6$ & 248.4 \\
\textbf{SLiSDE gated in-flow -- $L=3$} & 34\textsc{K} & $\mathbf{6.29\pm0.24}$ & $\mathbf{70.1\pm6.6}$ & \textbf{70.5} \\
\bottomrule
\end{tabular*}
\caption{\textsc{spx} option-surface results without Girsanov tilt, two and three layers. Same masks and protocol as Table~\ref{tab:dax3-main}; best cell of the search grid per family and depth, seven seeds, mean $\pm$ standard error. The narrower three-layer SLiCE cell diverged on every seed and is excluded.}
\label{tab:spx-full}
\end{table}

\subsection*{Noise type and block size}
Each layer can carry multiplicative noise only ($d^j_t=0$), additive noise only ($C^j_t=0$) or both (the noise-type remark of Section~\ref{ssec:base-layer}), and its structured coefficients are block-diagonal with block size $b_s$. Table~\ref{tab:abl-structure} varies each factor separately around the two-layer gated in-flow configuration on \textsc{dax} (general noise, block size $8$) and reports parameter count, held-out loss, far-put loss and time per epoch relative to that reference configuration. The three noise types are within one standard error of each other on the total loss; additive noise is cheaper, while the variants with multiplicative noise fit the far puts better. The total loss is flat in the block size; blocks of $4$ or more improve the far puts over the diagonal model, at a cost in time per epoch that grows with $b_s$.
\begin{table}[ht]
\centering
\footnotesize
\captionsetup{font=footnotesize}
\renewcommand{\arraystretch}{1.1}
\begin{tabular}{lcccc}
\toprule
Variant & Params & Loss & Far put & ms / epoch \\
\midrule
\multicolumn{5}{l}{\emph{Noise type (block size $8$)}} \\
multiplicative & $0.99\times$ & $1.05\times$ & $\mathbf{0.97\times}$ & $0.93\times$ \\
additive & $0.46\times$ & $\mathbf{0.95\times}$ & $1.69\times$ & $\mathbf{0.78\times}$ \\
general (both; reference) & $1.00\times$ & $1.00\times$ & $1.00\times$ & $1.00\times$ \\
\midrule
\multicolumn{5}{l}{\emph{Block size (general noise)}} \\
$b_s=1$ (diagonal) & $0.46\times$ & $0.95\times$ & $1.45\times$ & $\mathbf{0.60\times}$ \\
$b_s=4$ & $0.69\times$ & $\mathbf{0.94\times}$ & $\mathbf{0.95\times}$ & $0.77\times$ \\
$b_s=8$ (reference) & $1.00\times$ & $1.00\times$ & $1.00\times$ & $1.00\times$ \\
$b_s=16$ & $1.61\times$ & $0.97\times$ & $\mathbf{0.95\times}$ & $1.41\times$ \\
\bottomrule
\end{tabular}
\caption{Structural ablations on \textsc{dax}, each varying one factor of the two-layer gated in-flow configuration (general noise, block size $8$). Every entry is the ratio of the seven-seed mean of the variant to that of the reference configuration; the total-loss differences within each block are within one standard error. Top: noise type of the structured layers. Bottom: block size of the block-diagonal structure.}
\label{tab:abl-structure}
\end{table}

\subsection*{Timing study}
\paragraph{Setup.} Tables~\ref{tab:timing-main} and~\ref{tab:timing-full} report wall-clock times after compilation on one NVIDIA RTX 6000 Ada ($42.6$\,GiB usable), as the median of five repeats after two warm-ups, across batch sizes $B$ and horizons $T$. The models are a two-layer SLiSDE with $d=64$, block size $8$ and general noise ($136$K parameters), evaluated \emph{sequentially} so that the comparison isolates the structured layers from the scan, a Neural SDE of the same parameter count ($137$K) and SLiCE at the same width and depth ($142$K).

\paragraph{Training step.} SLiSDE is $6$--$7\times$ faster than the Neural SDE at $B=64$, $3.2\times$ at $B=256$ and $1.3$--$1.4\times$ at $B=1024$. At $B=4096$, where batch parallelism saturates the GPU, the Neural SDE step is $1.4\times$ faster, and SLiCE runs out of memory beyond $B=1024$ at $T=512$.
\begin{table}[ht]
\centering
\small
\captionsetup{font=footnotesize}
\renewcommand{\arraystretch}{1.1}
\begin{tabular}{cccccc}
\toprule
$B$ & $T$ & SLiSDE (seq.) & Neural SDE & SLiCE & speed-up \\
\midrule
64 & 512 & 9.5 & 54.0 & 10.1 & 5.7$\times$ \\
64 & 2048 & 35.8 & 212.0 & 42.8 & 5.9$\times$ \\
64 & 8192 & 117.1 & 838.9 & 156.9 & 7.2$\times$ \\
256 & 2048 & 79.7 & 257.1 & 155.2 & 3.2$\times$ \\
1024 & 512 & 69.0 & 89.4 & 155.0 & 1.3$\times$ \\
1024 & 2048 & 258.9 & 359.5 & OOM & 1.4$\times$ \\
4096 & 512 & 242.1 & 178.6 & OOM & 0.7$\times$ \\
\bottomrule
\end{tabular}
\caption{Wall-clock time per training step (ms; forward plus backward, post-compilation, median of five repeats) as a function of batch size $B$ and horizon $T$ for SLiSDE ($136$K parameters, sequential evaluation), the Neural SDE of the same parameter count ($137$K) and SLiCE ($142$K). Speed-up is Neural SDE over SLiSDE; OOM: exceeds the $42.6$\,GiB of the GPU.}
\label{tab:timing-main}
\end{table}
\paragraph{Forward pass and scan.} Table~\ref{tab:timing-full} separates forward and backward passes and adds the chunked associative scan (best chunk size of $16$, $64$, $256$). The scan shortens the forward pass by $1.4\times$ at $B=64$, $T=512$ and is otherwise on par with or slower than the sequential evaluation: each sequential structured update is already cheap and the chunked scan roughly doubles the arithmetic, so its advantage lies in parallel depth rather than in wall-clock time at these sizes.
\begin{table}[ht]
\centering
\small
\captionsetup{font=footnotesize}
\scriptsize
\setlength{\tabcolsep}{3pt}
\begin{tabular}{ccccccc}
\toprule
$B$ & $T$ & SLiSDE fwd (seq.) & SLiSDE fwd (scan) & NSDE fwd & SLiSDE fwd+bwd & NSDE fwd+bwd \\
\midrule
64 & 512 & 5.7 & 4.1 & 15.3 & 9.5 & 54.0 \\
64 & 2048 & 14.9 & 15.8 & 50.6 & 35.8 & 212.0 \\
64 & 8192 & 48.4 & 50.4 & 202.6 & 117.1 & 838.9 \\
256 & 2048 & 32.5 & 44.6 & 61.0 & 79.7 & 257.1 \\
1024 & 512 & 27.6 & 39.9 & 23.8 & 69.0 & 89.4 \\
1024 & 2048 & 89.9 & OOM & 84.2 & 258.9 & 359.5 \\
4096 & 512 & 80.5 & OOM & 37.5 & 242.1 & 178.6 \\
\bottomrule
\end{tabular}
\caption{Wall-clock time (ms, post-compilation, NVIDIA RTX 6000 Ada) of forward passes and full training steps as a function of batch size and horizon, including the chunked associative scan (best of chunk sizes $16$, $64$, $256$); the Neural SDE has the same parameter count as SLiSDE ($137$K against $136$K).}
\label{tab:timing-full}
\end{table}

\subsection*{Distributional recovery on \textsc{toy}}
Because the \textsc{toy} generator is known, the learned \emph{distribution} can be assessed directly, well beyond the calibrated targets. Table~\ref{tab:abl-dist} reports Kolmogorov--Smirnov and Wasserstein-1 distances to the ground truth, computed from $2\times10^4$ model paths for the best two-layer cell of each family (seven seeds), for the terminal value and for the running maximum. SLiSDE and the Neural SDE recover the terminal marginal comparably well (SLiSDE better in KS, the Neural SDE in $W_1$); on the law of the running maximum, the genuinely path-dependent statistic, SLiSDE is markedly better than both baselines.
\begin{table}[ht]
\centering
\footnotesize
\captionsetup{font=footnotesize}
\setlength{\tabcolsep}{3.5pt}
\begin{tabular}{lcccc}
\toprule
 & KS (terminal) & $W_1$ (terminal) & KS (running max) & $W_1$ (running max) \\
\midrule
Neural SDE ($L=2$) & $0.019\pm0.002$ & $\mathbf{0.020\pm0.002}$ & $0.059\pm0.002$ & $0.047\pm0.002$ \\
SLiCE ($L=2$) & $0.022\pm0.002$ & $0.031\pm0.003$ & $0.081\pm0.003$ & $0.067\pm0.003$ \\
SLiSDE ($L=2$) & $\mathbf{0.015\pm0.001}$ & $0.023\pm0.002$ & $\mathbf{0.036\pm0.002}$ & $\mathbf{0.037\pm0.002}$ \\
\bottomrule
\end{tabular}
\caption{Distances between the learned and the ground-truth laws of the terminal value and of the running maximum on \textsc{toy} ($2\times10^4$ model paths; best two-layer cell of each family; seven seeds, mean $\pm$ s.e.).}
\label{tab:abl-dist}
\end{table}

\subsection*{Girsanov tilt: calibration on \textsc{dax} and rare-event estimation on \textsc{toy}}
\paragraph{Calibration with the tilt on \textsc{dax}: protocol.} Table~\ref{tab:dax3-girsanov} compares five arms over seven seeds under one evaluation protocol, plain Monte Carlo under the reference law with $32\,768$ paths on the held-out strikes: the vanilla model; the untilted reference-law model with $1\,024$ and with $2\,048$ paths per step, the latter being the path budget of the tilted arms; and two tilted arms, which simulate $1\,024$ reference paths and two tilted batches of $512$ paths, one controller per tail. The tilted arms differ in the far-tail estimate: one combines the tilted batch with the reference batch, the other uses the tilted batch alone.

\paragraph{Calibration with the tilt on \textsc{dax}: results.} All ratios in this paragraph are ratios of means over seeds. Relative to the vanilla model, the combined variant lowers the far-call error by about a third ($0.64\times$; lower on six seeds of seven, paired $t$-test on the log-errors $p=0.03$) and the total held-out loss by $15\%$; the tilted-batch-only variant halves the far-put error on average, with a four times smaller spread across seeds, and lowers the total loss by $24\%$. Against the budget-matched untilted arm the tilted arms are never worse and are better on average on the far tails ($0.78\times$ on far calls and $0.60\times$ on far puts for the respective variants), differences that are not significant at seven seeds. The tilt is a genuine change of measure here, $\KL(\QQ\|\Prob)\approx3$ nats and $\ESS/n_q\approx0.05$ on the put side, and costs $1.3\times$ the epoch time of the budget-matched untilted arm.

\begin{table}[ht]
\centering
\scriptsize
\captionsetup{font=footnotesize}
\renewcommand{\arraystretch}{1.1}
\setlength{\tabcolsep}{3pt}
\begin{tabular*}{\textwidth}{@{\extracolsep{\fill}} l c c c c c c @{}}
\toprule
Arm & paths / step & Loss ($10^{-4}$) & Far put ($10^{-6}$) & Far call ($10^{-6}$) & $\ESS/n_q$ put / call & $\KL$ put / call \\
\midrule
Vanilla & $1024$ & $1.15\pm0.12$ & $0.93\pm0.46$ & $3.15\pm0.33$ & -- & -- \\
Untilted, $\rho=0$ & $1024$ & $1.18\pm0.19$ & $0.87\pm0.42$ & $3.62\pm0.78$ & -- & -- \\
Untilted, $\rho=0$ & $2048$ & $0.95\pm0.13$ & $0.81\pm0.30$ & $2.59\pm0.49$ & -- & -- \\
Tilt, combined far-tail estimate & $1024+2{\times}512$ & $0.98\pm0.10$ & $0.82\pm0.27$ & $\mathbf{2.03\pm0.78}$ & $0.05$ / $0.09$ & $2.9$ / $1.5$ \\
Tilt, tilted batch only & $1024+2{\times}512$ & $\mathbf{0.88\pm0.10}$ & $\mathbf{0.48\pm0.08}$ & $3.25\pm1.04$ & $0.05$ / $0.12$ & $3.0$ / $1.4$ \\
\bottomrule
\end{tabular*}
\caption{\textsc{dax} with the Girsanov tilt (seven seeds, mean $\pm$ standard error). The tilt study is trained and evaluated on the strikes that lie within the quoted range at each expiry, so its absolute values are not comparable with Table~\ref{tab:dax3-main}, whose protocol it otherwise follows. Far-tail losses are held-out MSEs on the far strikes; every arm is evaluated by plain Monte Carlo under its reference law with $32\,768$ paths. The tilted arms use $\rho=0$, the cross-entropy controller with a $6$-nat KL wall and the initial-state shift; $\ESS/n_q$ and $\KL(\QQ\|\Prob)$ (nats) are end-of-training values of the put / call controllers; time per epoch $130$ / $130$ / $229$ / $303$ / $295$ ms.}
\label{tab:dax3-girsanov}
\end{table}

\paragraph{The tilt as a learned importance sampler: protocol.} To isolate the benefit of the tilt from the calibration task, we evaluate it on \textsc{toy} in a controlled estimator study against ordinary Monte Carlo at the same compute budget. The reference law is the three-layer \textsc{toy} cell of Table~\ref{tab:toy3-vanilla}, simulated at $N=512$ steps with $\rho=0$ and calibrated seven times with different seeds. The targets are the exceedance probabilities $\Prob(Y_T>H)$ with $H$ at the model's own quantiles of nominal probability $10^{-2}$, $10^{-3}$ and $10^{-4}$; reference values come from $8\times10^5$ plain Monte-Carlo paths. For each backbone one controller per threshold is trained post hoc, in three stages of $1\,000$ cross-entropy steps on batches of $1\,024$ tilted paths, the stage for the rarer event starting from the controller of the previous one, under the wall $\kappa=-\log p+2$ nats. Every estimator receives the same per-estimate path budget $n$; in addition, plain Monte Carlo receives a cost-matched budget inflated by the measured per-path overhead of the tilted simulation (a factor $1.18$). The relative root-mean-square error of every estimator is measured over $100$ independent replications, and the tables report the mean and standard error over the seven backbones.

\paragraph{The tilt as a learned importance sampler: results.} Table~\ref{tab:girsanov-estimator} reports the budget $n=1024$ for plain Monte Carlo, for the tilt with the vanilla importance-sampling estimator and with the self-normalised one, together with the fraction of replications in which plain Monte Carlo returns exactly zero because no path reaches the threshold. The picture is the one predicted by the variance argument of Section~\ref{sec:girsanov}: the gain of the tilt grows with the rarity of the event, from a factor $2$ in relative error at $p=10^{-2}$ to a factor $4$ at $p=10^{-4}$, where plain Monte Carlo returns zero in $90\%$ of the replications. The self-normalised estimator is markedly weaker, because the initial-state shift assigns large weights to tilted paths that miss the event and those weights enter only its normalisation.
\begin{table}[ht]
\centering
\small
\captionsetup{font=footnotesize}
\renewcommand{\arraystretch}{1.1}
\begin{tabular}{lccc}
\toprule
Estimator ($n=1024$), rel.\ RMSE & $p\approx10^{-2}$ & $p\approx10^{-3}$ & $p\approx10^{-4}$ \\
\midrule
Plain MC & $0.320\pm0.005$ & $1.051\pm0.019$ & $3.39\pm0.39$ \\
Plain MC, cost-matched ($\times1.18$) & $0.278\pm0.009$ & $0.934\pm0.047$ & $3.15\pm0.37$ \\
Girsanov tilt, IS & $\mathbf{0.156\pm0.018}$ & $\mathbf{0.394\pm0.104}$ & $\mathbf{0.785\pm0.184}$ \\
Girsanov tilt, SNIS & $0.274\pm0.021$ & $0.789\pm0.106$ & $1.88\pm0.42$ \\
\midrule
Share of plain-MC runs returning exactly $0$ & 0\% & 34\% & 90\% \\
\bottomrule
\end{tabular}
\caption{Rare-event estimation on \textsc{toy}: relative RMSE of each estimator at a budget of $n=1024$ paths per estimate (cost-matched plain Monte Carlo receives $1.18\,n$ paths), over $100$ replications for each of seven independently calibrated models; mean $\pm$ standard error over the models.}
\label{tab:girsanov-estimator}
\end{table}
\paragraph{Budgets and diagnostics.} Table~\ref{tab:girsanov-full} extends Table~\ref{tab:girsanov-estimator} to the budgets $n\in\{256,1024,4096\}$ and adds the truncated vanilla estimator (weights capped at $\sqrt n$). At the end of training the controllers use $2.1$, $3.5$ and $5.1$ nats of divergence, and the tilted paths hit the event at rates of $26\%$, $20\%$ and $15\%$ for $p=10^{-2}$, $10^{-3}$ and $10^{-4}$ (plain Monte Carlo: $1\%$, $0.1\%$, $0.01\%$), at $\ESS/n_q$ of $0.045$, $0.024$ and $0.011$. Two features of the table deserve comment. The self-normalised estimator is uniformly worse than the vanilla one, for the reason given in Appendix~\ref{app:snis-ess}, and the truncated estimator coincides with the untruncated one except at $p=10^{-2}$. The weights are heavy-tailed ($\E^{\QQ}[w]$ estimated on the tilted batch is $0.75$--$0.97$ instead of $1$), so the standard errors across backbones are large and individual cells can lose to plain Monte Carlo; the gain of the tilt nevertheless grows with the rarity of the event at every budget, and at $p=10^{-4}$ it is the only estimator that returns a non-zero value in most replications at $n\le1024$.
\begin{table}[ht]
\centering
\small
\captionsetup{font=footnotesize}
\begin{tabular}{llccc}
\toprule
 & rel.\ RMSE & $n=256$ & $n=1024$ & $n=4096$ \\
\midrule
\multirow{6}{*}{$p\approx10^{-2}$}
 & Plain MC & $0.645\pm0.017$ & $0.320\pm0.005$ & $0.163\pm0.005$ \\
 & Plain MC, cost-matched & $0.560\pm0.012$ & $0.278\pm0.009$ & $0.151\pm0.003$ \\
 & Tilt, IS & $\mathbf{0.411\pm0.084}$ & $\mathbf{0.156\pm0.018}$ & $0.140\pm0.033$ \\
 & Tilt, IS truncated & $0.391\pm0.070$ & $0.156\pm0.018$ & $\mathbf{0.119\pm0.017}$ \\
 & Tilt, SNIS & $0.645\pm0.131$ & $0.274\pm0.021$ & $0.197\pm0.034$ \\
 & \emph{plain MC returning $0$} & $8\%$ & $0\%$ & $0\%$ \\
\midrule
\multirow{6}{*}{$p\approx10^{-3}$}
 & Plain MC & $1.953\pm0.083$ & $1.051\pm0.019$ & $0.501\pm0.014$ \\
 & Plain MC, cost-matched & $1.780\pm0.039$ & $0.934\pm0.047$ & $0.460\pm0.015$ \\
 & Tilt, IS & $\mathbf{0.649\pm0.147}$ & $\mathbf{0.394\pm0.104}$ & $\mathbf{0.330\pm0.075}$ \\
 & Tilt, IS truncated & $0.649\pm0.147$ & $0.394\pm0.104$ & $0.330\pm0.075$ \\
 & Tilt, SNIS & $1.610\pm0.287$ & $0.789\pm0.106$ & $0.482\pm0.066$ \\
 & \emph{plain MC returning $0$} & $75\%$ & $34\%$ & $1\%$ \\
\midrule
\multirow{6}{*}{$p\approx10^{-4}$}
 & Plain MC & $6.64\pm0.95$ & $3.39\pm0.39$ & $1.65\pm0.10$ \\
 & Plain MC, cost-matched & $6.96\pm0.40$ & $3.15\pm0.37$ & $1.65\pm0.13$ \\
 & Tilt, IS & $\mathbf{1.77\pm0.59}$ & $\mathbf{0.785\pm0.184}$ & $\mathbf{0.816\pm0.235}$ \\
 & Tilt, IS truncated & $1.77\pm0.59$ & $0.785\pm0.184$ & $0.816\pm0.235$ \\
 & Tilt, SNIS & $4.06\pm1.08$ & $1.88\pm0.42$ & $1.81\pm0.72$ \\
 & \emph{plain MC returning $0$} & $97\%$ & $90\%$ & $68\%$ \\
\bottomrule
\end{tabular}
\caption{Rare-event estimation on \textsc{toy}: relative RMSE for three budgets and three nominal probabilities, over $100$ replications for each of seven independently calibrated models (mean $\pm$ standard error over the models); cost-matched plain Monte Carlo receives $1.18\,n$ paths; the last row of each block is the share of plain-Monte-Carlo replications with no path above the threshold.}
\label{tab:girsanov-full}
\end{table}

\subsection*{Effect of the controller objective, the KL wall and the far-tail estimate}
The wall $\kappa$ in Eq.~\ref{eq:kl-objective} controls how far the Girsanov proposal may move from the reference measure, and the choice of controller objective decides whether it moves at all. Table~\ref{tab:abl-kl} varies the wall, the objective and the way the far-tail terms are estimated on \textsc{dax}. The variance-type objective, which minimises the self-normalised estimate of the far-tail loss directly, converges to the identity tilt ($\ESS/n_q=0.998$) and its arm is then an untilted model with $1\,024$ extra paths on the far-tail terms; the cross-entropy objective produces a proposal with $2$--$3$ nats of divergence at either wall. Combining the tilted-batch estimate with the reference-batch estimate helps the far calls, whose controller is the weaker of the two, and hurts the far puts, where the tilted batch alone is the better estimator.

\begin{table}[ht]
\centering
\small
\captionsetup{font=footnotesize}
\setlength{\tabcolsep}{4pt}
\begin{tabular}{lcccc}
\toprule
Controller & Loss ($10^{-4}$) & Far put ($10^{-6}$) & Far call ($10^{-6}$) & $\ESS/n_q$ put / call \\
\midrule
CE, $\kappa=6$, combined (7 seeds) & $0.98\pm0.10$ & $0.82\pm0.27$ & $2.03\pm0.78$ & $0.05$ / $0.09$ \\
CE, $\kappa=6$, tilted batch only (7 seeds) & $0.88\pm0.10$ & $0.48\pm0.08$ & $3.25\pm1.04$ & $0.05$ / $0.12$ \\
CE, $\kappa=2$, combined (3 seeds) & $1.13\pm0.20$ & $0.59\pm0.01$ & $2.46\pm1.03$ & $0.06$ / $0.08$ \\
SNIS-variance, combined (3 seeds) & $1.18\pm0.35$ & $1.18\pm0.33$ & $2.71\pm1.10$ & $1.00$ / $0.99$ \\
\bottomrule
\end{tabular}
\caption{Controller ablations on \textsc{dax} far-tail calibration ($\rho=0$, protocol of Table~\ref{tab:dax3-girsanov}; mean $\pm$ standard error over seeds).}
\label{tab:abl-kl}
\end{table}

\end{document}